\newtheorem{thm}{Theorem}[section]
\newtheorem{prop}[thm]{Proposition}
\newtheorem{lemma}[thm]{Lemma}
\newtheorem{cor}[thm]{Corollary}
\newtheorem{rem}[thm]{Remark}
\theoremstyle{definition}
\newtheorem{assumption}[thm]{Assumption}
\newcommand{\N}{\mathbb{N}}
\newcommand{\R}{\mathbb{R}}
\newcommand{\E}{\mathbb{E}}
\newcommand{\Prob}{\mathbb{P}}
\newcommand{\LO}{\mathcal{L}}
\def\fdan{f^\delta_{\alpha,N}}
\def\sdiff{r}
\def\pdan{\sdiff^\delta_{\alpha,N}}
\def\Jdan{J^\delta_{\alpha,N}}
\def\gn{{\bf g}_N}
\def\gdan{{\bf g}^\delta_N}
\newcommand{\norm}[1]{\left\|#1 \right\|}
\newcommand{\nsG}[1]{\left\|#1 \right\|_{\text{sG}}}
\def\<{\langle}
\def\>{\rangle}
\DeclareMathOperator*{\argmin}{arg\,min}
\newcommand{\Npxl}{N_{\text{pxl}}}
\newcommand{\Ndtc}{N_{\text{dtc}}}
\newcommand{\Nth}{N_{\theta}}
\newcommand{\prox}{\ensuremath{\mathrm{prox}}} 
\newcommand{\sign}{\ensuremath{\mathrm{sign}}} 
\newcommand{\supp}{\ensuremath{\mathrm{supp}}} 
\renewcommand{\vec}[1]{\boldsymbol{#1}} 
\newcommand{\w}{\vec{w}} 
\newcommand{\f}{\vec{f}} 
\newcommand{\g}{\vec{g}} 
\newcommand{\Aop}{\vec{A}} 
\newcommand{\Wop}{\vec{W}} 
\newcommand{\x}{\vec{x}} 
\newcommand{\z}{\vec{z}} 
\newcommand{\thetab}{\vec{\theta}} 
\newcommand{\gNd}{\g_N^\delta} 
\newcommand{\faNd}{\f_{\alpha,N}^\delta} 
\def\bu{{\bf u}}
\def\Abu{A_{{\bf u}}}
\def\Bbu{B_{{\bf u}}}
\def\Sbu{S_{{\bf u}}}
\def\calR{\mathscr{R}}
\def\coef{c_{\lambda, p, s, d}}
\def\coefb{c_{\lambda, q, -s, d}}
\DeclareMathOperator{\tr}{Tr}
\definecolor{pinegreen}{rgb}{0,0.55,0.3}
\begin{document}

\title{Convex regularization in statistical inverse learning problems}
\author{Tatiana A.~Bubba$^1$\footnote{T.A.~Bubba did most of this work while at the Department of Mathematics and Statistics, University of Helsinki, Finland.}, Martin Burger$^2$, Tapio Helin$^3$ and Luca Ratti$^4$\footnote{e-mails:  \texttt{tb715@cam.ac.uk}, \texttt{martin.burger@fau.de}, \texttt{tapio.helin@lut.fi}, \texttt{luca.ratti@unige.it}}}
\date{\small{%
$^1$Department of Applied Mathematics and Theoretical Physics, University of Cambridge. Wilberforce Road, CB3 0WA Cambridge, UK. \\
$^2$Department of Mathematics, Friedrich-Alexander Universit\"{a}t Erlangen-N\"{u}rnberg. Cauerstra{\ss}e 11, 91058 Erlangen, Germany. \\
$^3$ School of Engineering Science, Lappeenranta University of Technology. Yliopistokatu 34, 53850 Lappeenranta, Finland.\\
$^4$ MaLGa Center, Department of Mathematics, University of Genoa, Via Dodecaneso 35, 16146 Genova, Italy.
}}

\maketitle

\begin{abstract}
 We consider a statistical inverse learning problem, where the task is to estimate a function $f$ based on noisy point evaluations of $Af$, where $A$ is a linear operator. The function $Af$ is evaluated at i.i.d.~random design points $u_n$, $n=1,...,N$ generated by an unknown general probability distribution. We consider Tikhonov regularization with general convex and $p$-homogeneous penalty functionals and derive concentration rates of the regularized solution to the ground truth measured in the symmetric Bregman distance induced by the penalty functional. We derive concrete rates for Besov norm penalties and numerically demonstrate the correspondence with the observed rates in the context of X-ray tomography.
\end{abstract}
        
\noindent {\bf Keywords: } Variational regularization; statistical learning; error estimates; Bregman distances; computed tomography. \\

\noindent {\bf AMS Subject Classification: }  62G08, 62G20, 65J22, 68Q32.

\section{Introduction}

Inverse problems study how indirect observational data can be processed into information about objects of interest in a robust manner.
The literature of inverse problems often adopts the perspective that the observational process can be designed or controlled to a sufficient degree. However, for many large-scale inverse problems in modern science and engineering massive data sets arise from poorly controllable experimental conditions. Such problems are closely connected to statistical learning setting, where the objective is to approximate a function $g : U \to V$ through a set of pairs $(u_n,v_n)_{n=1}^N$ drawn from an unknown probability measure on $U\times V$.

The framework, where observational data is limited to a finite set of random and noisy point evaluations of the output, has also a tradition in inverse problems \cite{o1990convergence, bissantz2004consistency}. In particular, statistical inverse learning problems have recently gained attention and we give an overview below. 
Our interest lies in deriving convergence rates, for general regularization schemes, of the expected reconstruction error, namely, the distance (in a suitable metric) between the solutions of the inverse problem and the regularized one. In this regard, the state of the art was recently improved by Blanchard and M\"{u}cke \cite{blanchard2018optimal}, who derive minimax optimal convergence rates for the general spectral regularization approach in Hilbert spaces under certain classes of sampling measure.

This paper aims at blending inverse learning theory together with recent developments in convex regularization techniques in the context of inverse problems \cite{benning2018modern, burger2004convergence}. Although there is a body of work studying methods such as Lasso and generalized approaches in learning theory (see, e.g., \cite{hastie2015statistical}), to the best our knowledge, general convex regularization has not been considered before for inverse statistical learning problems.
Here, we focus on variational regularization schemes utilizing $p$-homogeneous penalties, in particular, focusing on the case $1<p \leq 2$, and derive a framework for establishing convergence rates in expected symmetric Bregman distance. Our work is aligned with the common assumption in learning theory that the design measure, i.e., the probability distribution generating the evaluation points $(u_n)_{n=1}^N$, is unknown.

Let us consider a linear inverse problem
\begin{equation}
	\label{eq:IP}
	g = A f,
\end{equation}
where $A: X \rightarrow Y$ is a bounded linear operator between a separable Banach space $X$ and a Hilbert space $Y$. 
Furthermore, we assume that $Y$ is a function space from a subset $U \subset \R^d$ to a Hilbert space $V$.
We observe noisy point evaluations of $g$ at given points $\{u_i\}_{i=1}^N\subset U$ according to
\begin{equation}
	\label{eq:observation}
	g^\delta_i = g^\dagger(u_i) + \delta \epsilon_i
\end{equation}
for $i=1,...,N$, where $\epsilon_i$ are i.i.d.~and have suitable distribution. Moreover, the noiseless observation $g^\dagger = A f^\dagger$ corresponds to our ground truth $f^\dagger \in X$. In the following we study properties of a regularized solutions $\fdan$ defined via the variational problem
\begin{equation}
	\fdan := \argmin_{f\in X} \left\{ \frac 1{2N} \sum_{i=1}^N \norm{(Af)(u_i) - g^\delta_i}^2_{V} + \alpha R(f)\right\},
\label{eq:MinFunct}	
\end{equation}
where $R : X \to \R \cup \{\infty\}$ is a convex functional satisfying certain technical properties listed below (see assumption~\ref{ass:R}).

Our main contributions in the case of a $p$-homogeneous $R$ for $1<p\leq 2$ are as follows:
\begin{itemize}
\item In theorem \ref{thm:bregman_dist_gen} we derive an upper bound to the reconstruction error, i.e., the distance between $\fdan$ and $f^\dagger$ measured in the symmetric Bregman distance induced by $R$. This upper bound is composed of terms that generalize the approximation and sample error terms observed in the spectral regularization setting in Hilbert spaces.  
\item In addition to the standard framework usually developed for a fixed noise level $\delta$,  we discuss an interesting regime where the noise level is small compared to the number of design points,  i.e., $\delta \simeq N^{-\rho}$, $\rho>1$.  Such a setup requires modified estimates and the corresponding analysis is developed in parallel to the standard framework. We conjecture that such alternative estimates can improve the standard estimates under specific sampling schemes discussed in remark \ref{rem:martin}.
\item We derive convergence rates for a penalty $R(f) = \frac 1p \norm{f}_X^p$ under suitable assumptions in the standard framework in theorem \ref{thm:general_rate} and for the small noise regime in theorem \ref{thm:general_rate2}. 
In terms of optimality, we compare our rates with the minimax-optimal rates in the Hilbert space setting for $p=2$ obtained in \cite{blanchard2018optimal}. Restricted to Hilbert spaces and classical Tikhonov regularization scheme, our method yields convergence rates that coincide with \cite{blanchard2018optimal} only under restrictive assumptions. However, the underlying discrepancy between \cite{blanchard2018optimal} and our approach is highlighted, and we propose a modification to the method that can provide improved rates.
\item As an application of our theory, we prove a concrete convergence rate for the above case when $X$ is a Besov space $B_{pp}^s(\R^d)$ in section \ref{sec:BesovConc}. Moreover, in section \ref{subsec:embedding} we discuss how to derive convergence rates if a continuous embedding of the Banach space $X$ to some Hilbert space $X_0$ is available. If the embedding has suitable approximation properties, this approach can provide useful convergence rates.
\item We study the classical inverse problem of X-ray tomography \cite{natterer2001mathematics} under random sampling of the imaging angles and using Besov space penalties. We demonstrate that the Radon transform has suitable spectral properties making our work aligned with the optimal rates in the $p=2$ setup. Moreover, we observe the convergence rates predicted by our results in numerical simulations also for $1<p\leq 2$.
\end{itemize}

We emphasize that we do not prove minimax-optimality of our results and, in particular, no lower bounds are derived. However, we point out that similar techniques can lead to minimax-optimal rates in inverse problems when considered against suitable source conditions \cite{weidling2020optimal, burger2018large}.

Our methodology has close connections to the reproducing kernel methods, which is a popular field with a vast body of literature. Let us note that connections of kernel regression methods to regularization theory were first studied in \cite{de2006discretization, vito2005learning, gerfo2008spectral} and the line of research has since become widely popular. Early work on upper rates of convergence in a reproducing kernel Hilbert space was carried out by Smale and Cucker in \cite{cucker2002best}, where they utilized a covering number technique. After the initial success, there has been a long line of subsequent work \cite{vito2005learning, smale2005shannon, smale2007learning, bauer2007regularization, yao2007early, caponnetto2007optimal} providing convergence rates comparable to \cite{blanchard2018optimal}. 
Let us also point out that there is an avenue of research \cite{mendelson2010regularization, steinwart2009optimal} considering penalties of type $R(f) = \frac 1p \norm{f}_X^p$. Notice that the notion of convergence in the usual learning context  and the inverse problem setting is different and are not directly comparable: in learning theory the convergence rates are derived in $L^2(\mu)$ norm, where $\mu$ is the unknown sampling measure generating data points. However, since the solution and data space, i.e. $X$ and $Y$, differ for the inverse problem, it is natural to consider modes of convergence in $X$. For a related discussion and brief overview on relevant convergence rate literature, see \cite{Muecke2017}.

In terms of inverse learning problems, we mention that Tikhonov regularization of non-linear inverse problems is considered in \cite{rastogi2019convergence} and adaptive parameter choice rules are studied 
in \cite{lu2020balancing}. Moreover, for distributed learning of inverse problems, see \cite{guo2017learning} and references therein.

This paper is organized as follows. In section \ref{sec:preliminaries} we provide preliminaries of the mathematical setting and assumptions in our work. Our assumptions on the sampling setup are closely aligned with previous literature such as \cite{blanchard2018optimal}. In section
\ref{sec:bounds_on_bregman} we derive general bounds on the symmetric Bregman distance between $\fdan$ and $f^\dagger$. In section \ref{sec:p-homog} we develop these bounds further in the case $R(f) = \frac 1p \norm{f}_X^p$, which enables the use of lemma \ref{lem:xuroach} - a key ingredient of the convergence analysis. Section \ref{sec:BesovConc} discusses the Besov space regularization and derives concetration bounds. Numerical simulations of random angle X-ray tomography are presented in section \ref{sec:xray}: the Radon transform is introduced in section \ref{sub:radon} and discretized formulation is specified in section \ref{sec:discretization}. In section \ref{sec:numerical_exp} we provide the numerical experiments related to the convergence rates.

\section{Preliminaries}
\label{sec:preliminaries}

Let us briefly define some notation.
For two functions $f,g: X\to \R$, we write $f\lesssim g$ if there exists a universal constant $C>0$ such that $f(x) \leq C g(x)$ for all $x\in X$.  Similarly,  we write $f\simeq g$ if it holds that $g\lesssim f \lesssim g$.
Below, $C>0$ will denote a generic constant unless otherwise specified in the context.

\subsection{Sampling operator}

Consider $Y = L^2(U;V)$, where $U \subset \R^d$ and $V$ is a Hilbert space. Here, we assume that the range of the operator $A$ is contained in a Banach space $Z \subset Y$ such that $Z \subset \mathcal{C}(U;V)$ continuously, and that $A : X \to Z$ is continuous. Notice that $\mathcal{C}(U;V)$ is a Banach space and  $Z \subset Y \subset Z^*$ forms a Gelfand triplet. In particular, we have that
\begin{equation}
	\langle g_1, g_2\rangle_{Z^*\times Z} = \langle g_1,g_2 \rangle_Y
	\label{gelfand}
\end{equation}
for any $g_1 \in Y$ and $g_2\in Z$. The main motivation for the assumption above is that point evaluations of $g^\dagger$ in equation \eqref{eq:observation} are well-defined.

Following previous work \cite{caponnetto2007optimal, blanchard2018optimal}, for any $u\in U$ we define
a sampling operator $A_u \in \LO(Z, V)$ such that
\begin{equation*}
	A_u f = (Af)(u).
\end{equation*}
We make the following assumption:

\begin{assumption}
\label{ass:sampling}
\begin{itemize}
\item[(a)] There exists $\kappa\leq 1$ such that for all $u\in U$ and for all $f \in X$ we have
\begin{equation*}
	\norm{A_u f}_V \leq \kappa \norm{f}_X.
\end{equation*}
\item[(b)] The mapping
\begin{equation*}
	u \mapsto (Af)(u)
\end{equation*}
is measurable for all $f\in X$.
\end{itemize}
\end{assumption}
Notice that the assumption of $\kappa\leq 1$ is not restrictive; for larger values of $\kappa$ one can renormalize the problem in the spirit of \cite{caponnetto2007optimal, blanchard2018optimal}.

Next we consider sampling at multiple design points $\{u_i\}_{i=1}^N \subset U$. Let us introduce the following notation for the product spaces $U_N = \oplus_{i=1}^N U$ with the usual topology and $V_N = \oplus_{i=1}^N V$ with the inner product 
\begin{equation*}
	\langle {\bf v}, \tilde {\bf v}\rangle_{V_N} = \frac 1N \sum_{j=1}^N \langle v_j, \tilde v_j\rangle_V,
\end{equation*}
where ${\bf v} = (v_j)_{j=1}^N, \tilde {\bf v} = (\tilde v_j)_{j=1}^N \in V_N$.
The multiple sampling operator $\Abu \in \LO(Z, V_N)$ is defined by
\[
\Abu f = (A_{u_i} f)_{i=1}^N \in V_N
\]
for ${\bf u} = \{u_i\}_{i=1}^N$.
Notice carefully that
\begin{equation*}
	\Abu^* {\bf v} = \frac 1{N} \sum_{i=1}^N A_{u_i}^* v_i \quad \text{for any} \; {\bf v} = (v_i)_{i=1}^N \in V_N.
\end{equation*}
Moreover, in the following it is convenient to introduce the following notation for the \emph{normal sampling operator} 
$\Bbu := \Abu^* \Abu \in  \LO(X, X^*)$.

In the following, we will consider the design points $\{u_i\}_{i=1}^N$ as a random sample drawn from a probability distribution $\mu$. Thus, let $\mu$ be a probability measure on $U$ and define the corresponding weighted space $Y_\mu = L^2(U, \mu; V)$ as a Hilbert space induced by the inner product
\begin{equation*}
	\langle g_1, g_2 \rangle_{Y_\mu} := \int_U \langle g_1(u), g_2(u) \rangle_V \mu(du).
\end{equation*}
Clearly, $Z \subset {\mathcal C}(U; V) \subset Y_\mu$ and we denote
\begin{equation*}
	A_\mu = \iota A : X \to Y_\mu,
\end{equation*}
where $\iota : Z \to Y_\mu$ is the canonical injection map. As a simple example, one can consider a uniform distribution on a bounded domain $U$.
Obviously, in such a case, the inner products of $Y_\mu$ and $Y$ coincide up to a constant.  

\subsection{Problem setting}

For the ground truth $f^\dagger \in X$ we define a noise-free observations $\gn$ by
\begin{equation}
	\label{eq:noisefreedata}
	\gn := \Abu f^\dagger,
\end{equation}
and the noisy observation $\gdan$ as
\begin{equation}
\label{eq:noisydata}
	\gdan := \Abu f^\dagger + \delta\epsilon_N,
\end{equation}
where $\delta>0$ is the noise level, $\epsilon_N = (\epsilon_N^i)_{i=1}^N \in V_N$ is a random variable such that $\epsilon_N^i\sim \epsilon$ i.i.d.~where $\epsilon$ is independent of $\mu$, zero-mean and satisfies
\begin{equation}
\label{eq:randomnoise}
\E \norm{\epsilon}_{V}^m < \frac 12 m! M^{m-2}
\end{equation}
for all $m\geq 2$ and some constant $M>0$. Notice that here $\delta>0$ plays the role of standard deviation that is usually included in the Bernstein-type assumptions \eqref{eq:randomnoise} on the observational noise. To build intuition, we point out that a normally distributed $\epsilon$ in $V=\R$ satisfies \eqref{eq:randomnoise} with $M=1$.

In the following we consider regularized solutions $\fdan$ to problems \eqref{eq:noisefreedata} and \eqref{eq:noisydata}
given by
\begin{equation}
	\label{eq:regularized_sol_R}
	\fdan \in  \argmin_{f\in X} \Jdan(f) := \argmin_{f\in X} \left\{ \frac 12 \norm{\Abu f - \gdan}^2_{V_N} + \alpha R(f)\right\}.
\end{equation}
A regularized solution for the noise-free data is denoted by $f_{\alpha,N}$. Notice that we do not require the minimizers of such problems to be unique at this stage.
\begin{assumption}
\label{ass:R}
The convex functional $R : X \to \R \cup \{ \infty\}$ satisfies the following four condition:
\begin{itemize}
\item[(R1)] the functional $R$ is lower semicontinuous in some topology $\tau$ on $X$;
\item[(R2)] the sublevel sets $M_r = \{R \leq r\}$ are sequentially compact in the topology $\tau$ on $X$;
\item[(R3)] the convex conjugate $R^\star$ is finite on a ball in $X^*$ centered at zero;
\item[(R4)] $R(-f) = R(f)$ for all $f\in X$.
\end{itemize}
\end{assumption}
The results contained in the next section are set in a deterministic framework. However, in the following we consider more specific functionals $R$ that ensure uniqueness of $\fdan$ and comment on the measurability of the learning method.
Notice that the symmetry condition (R4) is not necessary, but is employed to make the results more accessible.

\section{Bounds on the Bregman distance}
\label{sec:bounds_on_bregman}

The optimality criterion associated with \eqref{eq:regularized_sol_R} is given by
\begin{equation}
	\label{eq:optimality_criterion}
	\Abu^* ( \Abu \fdan - \gdan) + \alpha \pdan = 0
\end{equation}
for $\pdan \in \partial R(\fdan)$, where $\partial R$ denotes the subdifferential:
\begin{equation*}
	\partial R(f) = \{\sdiff \in X^* \; | \; R(f)-R(\tilde f) \leq \langle \sdiff, f-\tilde f\rangle_{X^* \times X} \; \text{for all} \; \tilde f \in X\}.
\end{equation*}
Moreover, for $\sdiff_f\in \partial R(f)$ and $\sdiff_{\tilde f} \in \partial R(\tilde f)$ we define the symmetric Bregman distance between $f$ and $\tilde f$ as
\begin{equation*}
	D^{\sdiff_f, \sdiff_{\tilde f}}_R (f,\tilde f) = \langle \sdiff_f-\sdiff_{\tilde f}, f-\tilde f\rangle_{X^*\times X}.
\end{equation*}
When the subdifferential elements $\sdiff_f\in \partial R(f)$ are unique, we will drop the dependence on the subgradients in the notation of the symmetric Bregman distance and write simply $D_R(f,\tilde f)$. 

\begin{prop}[A-priori estimates]
\label{prop:apriori}
Let $R$ satisfy the Assumption \ref{ass:R}. Then the functional $\Jdan$ has a minimizer. Any minimizer $\fdan \in X$ of $\Jdan$ satisfies
\begin{equation}
\label{eq:apriori}
	R(\fdan) \leq  R(f^\dagger) + \frac{\delta^2}{2\alpha} \norm{\epsilon_N}^2_{V_N}.
\end{equation}
In addition, if $R$ is $p$-homogeneous with $p>1$ we have for some constant $C>0$ that
\begin{equation}
\label{eq:apriori2}
	R(\fdan) \leq C\left(R(f^\dagger) + \left(\frac{\delta}\alpha\right)^{\frac{p}{p-1}} R^\star( \Abu^* \epsilon_N)\right).
\end{equation}
\end{prop}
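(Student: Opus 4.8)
The plan is to exploit the minimality of $\fdan$ directly. By definition $\Jdan(\fdan) \le \Jdan(f^\dagger)$, which gives
\[
\frac12 \norm{\Abu \fdan - \gdan}^2_{V_N} + \alpha R(\fdan) \le \frac12 \norm{\Abu f^\dagger - \gdan}^2_{V_N} + \alpha R(f^\dagger).
\]
Since $\gdan = \Abu f^\dagger + \delta \epsilon_N$, the right-hand data term is simply $\tfrac12 \delta^2 \norm{\epsilon_N}^2_{V_N}$. Dropping the nonnegative term $\tfrac12 \norm{\Abu \fdan - \gdan}^2_{V_N}$ on the left and dividing by $\alpha$ yields \eqref{eq:apriori} immediately. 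Existence of a minimizer follows from the direct method: $\Jdan$ is proper (finite at $f^\dagger$, using (R3) to ensure $R$ is proper), bounded below by $0$, and the a-priori bound shows any minimizing sequence lies in a sublevel set $M_r$ of $R$, which is sequentially $\tau$-compact by (R2); lower semicontinuity of $R$ in $\tau$ (R1) together with weak/appropriate lower semicontinuity of the quadratic data term closes the argument. (One should check the data term is $\tau$-lower semicontinuous along the relevant sequences, e.g.\ using that $\Abu$ is bounded and $\tau$ is not weaker than the weak topology on the bounded sets in question — this is a mild point that I'd handle by the assumption that $Z \subset \mathcal C(U;V)$ continuously and $A$ continuous.)

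For \eqref{eq:apriori2}, the key is to avoid the crude estimate $\tfrac{\delta^2}{2\alpha}\norm{\epsilon_N}^2_{V_N}$ and instead expand the cross term. Writing out $\tfrac12\norm{\Abu\fdan - \gdan}^2_{V_N} = \tfrac12\norm{\Abu(\fdan - f^\dagger)}^2_{V_N} - \delta\langle \Abu(\fdan - f^\dagger), \epsilon_N\rangle_{V_N} + \tfrac12\delta^2\norm{\epsilon_N}^2_{V_N}$, the minimality inequality becomes, after cancelling the $\tfrac12\delta^2\norm{\epsilon_N}^2$ terms,
\[
\frac12\norm{\Abu(\fdan - f^\dagger)}^2_{V_N} + \alpha R(\fdan) \le \alpha R(f^\dagger) + \delta\langle \Abu(\fdan - f^\dagger), \epsilon_N\rangle_{V_N}.
\]
Rewriting the inner product via the adjoint, $\delta\langle \Abu(\fdan - f^\dagger), \epsilon_N\rangle_{V_N} = \delta\langle \Abu^*\epsilon_N, \fdan - f^\dagger\rangle_{X^*\times X} \le \delta\norm{\Abu^*\epsilon_N}_{X^*}(\norm{\fdan}_X + \norm{f^\dagger}_X)$, and then using the Fenchel–Young inequality in the form $\langle \xi, f\rangle \le R(f) + R^\star(\xi)$ applied to $\xi = (\delta/\alpha)\lambda^{-1}\Abu^*\epsilon_N$ for a suitable scalar $\lambda$, with $p$-homogeneity giving $R^\star$ homogeneous of degree $p/(p-1)$. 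Balancing the resulting $R(\fdan)$ term on the right against the $\alpha R(\fdan)$ on the left (choosing $\lambda$ so that the coefficient is, say, $\alpha/2$), and absorbing the $R(f^\dagger)$ contributions, yields a bound of the claimed shape $R(\fdan) \le C(R(f^\dagger) + (\delta/\alpha)^{p/(p-1)} R^\star(\Abu^*\epsilon_N))$ after dropping the nonnegative data term.

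The main obstacle is the bookkeeping around homogeneity: one must track how the scaling constant $\lambda$ enters both $R(\cdot)$ (degree $p$) and $R^\star(\cdot)$ (degree $p' = p/(p-1)$), and verify that Fenchel–Young can indeed be applied with $\Abu^*\epsilon_N \in X^*$ — this uses (R3) (finiteness of $R^\star$ near $0$), boundedness of $\Abu^*$, and possibly a density/approximation argument if $R^\star$ is not everywhere finite. A secondary subtlety is that the inner product term $\langle \Abu^*\epsilon_N, \fdan - f^\dagger\rangle$ involves $\fdan$ on both sides of the inequality, so one cannot bound $\norm{\fdan}_X$ in terms of $R(\fdan)$ unless $R$ is coercive; the cleaner route is to keep $\fdan - f^\dagger$ paired with $\Abu^*\epsilon_N$ and apply Fenchel–Young to $R$ of the combined argument, using (R4) to symmetrize if needed, rather than splitting the norm — I would choose whichever of these two routes keeps the constants cleanest, most likely the Fenchel–Young one with a translate.
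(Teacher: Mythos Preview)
Your proposal is correct and follows essentially the same route as the paper. For \eqref{eq:apriori} your argument is in fact slightly more direct: the paper first expands $\tfrac12\norm{\Abu(f-f^\dagger)}^2_{V_N}+\alpha R(f)$ in terms of $\Jdan(f)$ and the cross term, then bounds the cross term via Young's inequality, whereas you simply drop the nonnegative data term on the left of $\Jdan(\fdan)\le \Jdan(f^\dagger)$---both yield the same bound. For \eqref{eq:apriori2} the paper does exactly what you describe as the ``cleaner route'': it applies Fenchel--Young to the pairing $\langle f-f^\dagger,\delta\Abu^*\epsilon_N\rangle$ with a scaled argument $\beta\alpha^{1/p}(f-f^\dagger)$, uses $p$-homogeneity together with (R4) to split $R(\beta\alpha^{1/p}(f-f^\dagger))\le C\beta^p\alpha(R(f)+R(f^\dagger))$, and then chooses $\beta^p=1/(2C)$ to absorb the $R(f)$ term on the right into the $\alpha R(f)$ on the left; your worries about finiteness of $R^\star$ are unfounded here since $p$-homogeneity forces $R^\star$ to be everywhere finite and $q$-homogeneous.
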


\begin{proof}
Consider the sublevel set $M = \{f \in X\; | \; \Jdan(f) \leq \Jdan(f^\dagger) \}$.
Now, any $f \in M$ satisfies
\begin{eqnarray}
\label{eq:aprioriaux1}
\frac 12 \norm{\Abu(f-f^\dagger)}^2_{V_N} + \alpha R(f)
& = & \Jdan(f) + \langle \Abu (f - f^\dagger), \delta \epsilon_N\rangle_{V_N} - \frac{\delta^2}{2} \norm{\epsilon_N}^2_{V_N} \nonumber \\
& \leq & \Jdan(f^\dagger) + \langle \Abu (f-f^\dagger), \delta \epsilon_N\rangle_{V_N} - \frac{\delta^2}2 \norm{\epsilon_N}^2_{V_N} \nonumber \\
& = & \alpha R(f^\dagger) + \langle \Abu (f-f^\dagger), \delta \epsilon_N\rangle_{V_N} \nonumber \\
& \leq & \alpha R(f^\dagger)  + \frac 12 \norm{\Abu(f-f^\dagger)}^2_{V_N} + \frac{\delta^2}{2} \norm{\epsilon_N}_{V_N}^2,
\end{eqnarray}
which yields the estimate \eqref{eq:apriori}. The second estimate follows by applying the generalized Fenchel--Young's inequality
in the second to last expression in \eqref{eq:aprioriaux1}, namely,
\begin{eqnarray}
	\label{eq:aprioriaux2}
	 \langle \Abu (f-f^\dagger), \delta \epsilon_N\rangle_{V_N} & = &  \langle f-f^\dagger, \delta \Abu^* \epsilon_N\rangle_{V_N} \nonumber \\
	 & \leq & R\left(\beta \alpha^{\frac 1p}(f-f^\dagger)\right) + R^\star\left(\frac \delta{\beta\alpha^{\frac 1p}} \Abu^* \epsilon_N\right) \nonumber \\
	 & \leq & C \beta^p \alpha\left(R(f) + R(f^\dagger)\right) + \frac{\delta^{\frac p{p-1}}}{\beta^{\frac p{p-1}} \alpha^{\frac 1{p-1}}} R^\star(\Abu^* \epsilon_N),
\end{eqnarray}
where $\beta>0$ is an arbitrary constant and the triangle inequality for $R$ follows due to convexity and homogeneity with some constant $C>0$ depending on $p$. Applying inequality \eqref{eq:aprioriaux2} to the second to last expression in \eqref{eq:aprioriaux1} and setting $\beta^p = \frac 1{2C}$ yields the a priori estimate \eqref{eq:apriori2} after a division by $\alpha$.

The existence of the minimizer follows by standard arguments. Assume that $\{f_j\}_{j=1}^\infty \subset M$ is a minimizing sequence of $\Jdan$. By the assumption (R2) we can extract a converging subsequence $f_{j_k} \to \tilde f \in X$. Finally, $\tilde f$ is a minimizer due to the assumption (R1).
\end{proof}

\begin{prop}
\label{prop:aux_convex2}
Suppose assumption \ref{ass:R} is satisfied.
Then any regularized solution $\fdan$ given by \eqref{eq:regularized_sol_R} satisfies
\begin{multline}
	\label{eq:aux_breg2}
	D^{\pdan, \sdiff^\dagger}_R(\fdan, f^\dagger) \
	\leq  \inf_{\bar w \in V_N} \left(R^\star\left((\Gamma_1^{-1})^*(\sdiff^\dagger - \Abu^* \bar w)\right) + \frac \alpha 2 \norm{\bar w}_{V_N}^2\right) + R(\Gamma_1(f^\dagger - \fdan)) \\
	 + \frac{1}{\alpha} \left(R^\star\left(\delta (\Gamma_2^{-1})^* \Abu^* \epsilon_N\right)  +  R\left(\Gamma_2(f^\dagger - \fdan)\right)\right),
\end{multline}
where $\pdan \in \partial R(\fdan)$, $\sdiff^\dagger \in \partial R(f^\dagger)$ and $\Gamma_1, \Gamma_2 : X\to X$ are arbitrary linear invertible operators.
\end{prop}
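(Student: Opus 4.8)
The plan is to turn the first–order optimality condition \eqref{eq:optimality_criterion} into an exact identity for the symmetric Bregman distance and then split the right–hand side into an ``approximation'' part (carrying $\sdiff^\dagger$) and a ``noise'' part (carrying $\epsilon_N$), each estimated by the generalized Fenchel--Young inequality after inserting the invertible operators $\Gamma_1,\Gamma_2$ and an arbitrary $\bar w\in V_N$. For the exact identity, since $\gdan=\Abu f^\dagger+\delta\epsilon_N$, I would rewrite \eqref{eq:optimality_criterion} as $\alpha\pdan=\Bbu(f^\dagger-\fdan)+\delta\Abu^*\epsilon_N$ with $\pdan\in\partial R(\fdan)$, pair it with $\fdan-f^\dagger$, subtract $\alpha\langle\sdiff^\dagger,\fdan-f^\dagger\rangle_{X^*\times X}$, and use $\langle\Bbu(f^\dagger-\fdan),\fdan-f^\dagger\rangle_{X^*\times X}=-\norm{\Abu(\fdan-f^\dagger)}_{V_N}^2$ and $\langle\Abu^*\epsilon_N,\fdan-f^\dagger\rangle_{X^*\times X}=\langle\epsilon_N,\Abu(\fdan-f^\dagger)\rangle_{V_N}$ to obtain
\[
\alpha D^{\pdan,\sdiff^\dagger}_R(\fdan,f^\dagger)+\norm{\Abu(\fdan-f^\dagger)}_{V_N}^2=\delta\langle\epsilon_N,\Abu(\fdan-f^\dagger)\rangle_{V_N}+\alpha\langle\sdiff^\dagger,f^\dagger-\fdan\rangle_{X^*\times X}.
\]
Dividing by $\alpha$ gives the working identity, whose two right–hand terms are then handled separately.

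For the noise term I would write $\tfrac{\delta}{\alpha}\langle\epsilon_N,\Abu(\fdan-f^\dagger)\rangle_{V_N}=\tfrac1\alpha\langle\delta(\Gamma_2^{-1})^*\Abu^*\epsilon_N,\Gamma_2(\fdan-f^\dagger)\rangle_{X^*\times X}$, using invertibility of $\Gamma_2$ so that $\langle(\Gamma_2^{-1})^*z,\Gamma_2 x\rangle=\langle z,x\rangle$, and then apply Fenchel--Young, $\langle\xi,x\rangle\le R^\star(\xi)+R(x)$ (which needs no extra hypotheses, being the definition of $R^\star$), followed by the symmetry $(R4)$ to replace $R(\Gamma_2(\fdan-f^\dagger))$ by $R(\Gamma_2(f^\dagger-\fdan))$. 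This produces exactly the last bracket of \eqref{eq:aux_breg2} divided by $\alpha$.

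For the approximation term $\langle\sdiff^\dagger,f^\dagger-\fdan\rangle_{X^*\times X}$ I would insert an arbitrary $\bar w\in V_N$ and split it as $\langle\sdiff^\dagger-\Abu^*\bar w,f^\dagger-\fdan\rangle_{X^*\times X}+\langle\bar w,\Abu(f^\dagger-\fdan)\rangle_{V_N}$. The first summand is treated as in the previous step, now with $\Gamma_1$, yielding $R^\star((\Gamma_1^{-1})^*(\sdiff^\dagger-\Abu^*\bar w))+R(\Gamma_1(f^\dagger-\fdan))$; the second is bounded by Cauchy--Schwarz and Young's inequality with parameter exactly $\alpha$, giving $\tfrac{\alpha}{2}\norm{\bar w}_{V_N}^2+\tfrac{1}{2\alpha}\norm{\Abu(\fdan-f^\dagger)}_{V_N}^2$. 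Collecting the estimates while keeping $\tfrac1\alpha\norm{\Abu(\fdan-f^\dagger)}_{V_N}^2$ from the working identity on the left, the residual $\tfrac1{2\alpha}\norm{\Abu(\fdan-f^\dagger)}_{V_N}^2$ is absorbed; discarding the remaining nonnegative quadratic term and taking the infimum over $\bar w\in V_N$ on the two $\bar w$-dependent terms then gives \eqref{eq:aux_breg2}.

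The argument is essentially bookkeeping, and the one point that must be gotten right is the choice of the Young parameter in the last step: it has to equal $\alpha$ so that the quadratic term it creates is precisely half of the $\tfrac1\alpha\norm{\Abu(\fdan-f^\dagger)}_{V_N}^2$ available on the left, which makes the absorption possible and simultaneously fixes the coefficient $\tfrac{\alpha}{2}$ in front of $\norm{\bar w}_{V_N}^2$ in the statement. Apart from that, one only needs to check that all duality pairings are legitimate, i.e. $\sdiff^\dagger,\Abu^*\epsilon_N,\Abu^*\bar w\in X^*$ and the $\Gamma_i$ together with their inverse adjoints $(\Gamma_i^{-1})^*=(\Gamma_i^*)^{-1}$ map into the correct spaces.
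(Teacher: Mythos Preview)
Your proof is correct and follows essentially the same route as the paper's: both start from the optimality condition rewritten as the identity
\[
\norm{\Abu(\fdan-f^\dagger)}_{V_N}^2+\alpha D_R^{\pdan,\sdiff^\dagger}(\fdan,f^\dagger)=\alpha\langle\sdiff^\dagger,f^\dagger-\fdan\rangle_{X^*\times X}+\delta\langle\Abu^*\epsilon_N,\fdan-f^\dagger\rangle_{X^*\times X},
\]
then insert $\bar w$, apply Fenchel--Young with $\Gamma_1$ and $\Gamma_2$, and use Young's inequality on $\langle\bar w,\Abu(f^\dagger-\fdan)\rangle_{V_N}$ so that the resulting quadratic term is absorbed by the left-hand side. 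The only cosmetic differences are that the paper keeps the factor $\alpha$ throughout and divides at the end (so the Young parameter there is $1$ rather than $\alpha$), and that you make the role of assumption (R4) explicit when flipping the sign inside $R$.
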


\begin{proof}
Let us apply the data-generating distribution of $\gdan$ given in \eqref{eq:noisydata} to the optimality criterion
\eqref{eq:optimality_criterion} and substract $\sdiff^\dagger$ on both sides to obtain
\begin{equation*}
	\Bbu (\fdan - f^\dagger) + \alpha (\pdan-\sdiff^\dagger) = - \alpha \sdiff^\dagger + \delta \Abu^* \epsilon_N.
\end{equation*}
Now taking dual pairing with $\fdan-f^\dagger$ on both sides yields
\begin{multline}
	\label{eq:aux_convex1}
	\norm{\Abu(\fdan - f^\dagger)}_{V_N}^2 + \alpha D^{\pdan, \sdiff^\dagger}_R(\fdan, f^\dagger) \\
	= \alpha \langle \sdiff^\dagger, f^\dagger - \fdan\rangle_{X^*\times X} + \delta \langle \Abu^* \epsilon_N, \fdan - f^\dagger  \rangle_{X^* \times X}
\end{multline}
Applying both standard Young's and Fenchel--Young's inequalities to the first term on the right hand side now yields, for any $\bar w \in V_N$
\[
\begin{aligned}
\alpha & \langle \sdiff^\dagger, f^\dagger - \fdan\rangle_{X^*\times X} \\
& = 	\alpha \langle (\Gamma_1^{-1})^*(\sdiff^\dagger -\Abu^* \bar w), \Gamma_1(f^\dagger - \fdan)\rangle_{X^*\times X} +  \alpha \langle \bar w, \Abu( f^\dagger - \fdan)\rangle_{V_N} \\
& \leq \alpha  R^\star\left((\Gamma_1^{-1})^*(\sdiff^\dagger - \Abu^* \bar w) \right) + \alpha  R(\Gamma_1(f^\dagger - \fdan)) +  \frac{\alpha^2} 2 \norm{\bar w}_{V_N}^2
+ \frac 12 \norm{\Abu(\fdan - f^\dagger)}_{V_N}^2,
\end{aligned}
\]
where we introduced an arbitrary invertible linear operator $\Gamma_1: X\to X$.
Similarly, the second term on the right hand side of \eqref{eq:aux_convex1} can be bounded by
\begin{eqnarray}
	\label{eq:aux_propbound2}
	\delta \langle \Abu^* \epsilon_N, f^\dagger - \fdan\rangle_{X^* \times X} 
	& = & \delta \langle (\Gamma_2^{-1})^* \Abu^* \epsilon_N, \Gamma_2(\fdan - f^\dagger)\rangle_{X^* \times X} \nonumber \\
	& \leq & R^\star((\delta \Gamma_2^{-1})^*\Abu^* \epsilon_N) + R(\Gamma_2(f^\dagger - \fdan))
\end{eqnarray}
Now dividing by $\alpha$ on both sides yields the claim.
\end{proof}

An alternative bound for the Bregman distance between $\fdan$ and $f^\dagger$ is provided in the following result.

\begin{prop}
\label{prop:aux_convex}
Suppose assumption \ref{ass:R} is satisfied.
Then any regularized solution $\fdan$ given by \eqref{eq:regularized_sol_R} satisfies
\begin{multline}
	\label{eq:aux_breg1}
	D^{\pdan, \sdiff^\dagger}_R(\fdan, f^\dagger) \\
	\leq  \inf_{\bar w \in V_N} \left(R^\star\left((\Gamma^{-1})^*(\sdiff^\dagger - \Abu^* \bar w)\right) + \frac \alpha 2 \norm{\bar w}_{V_N}^2\right) + R(\Gamma(f^\dagger - \fdan)) + \frac{\delta^2}{2 \alpha} \norm{\epsilon_N}_{V_N}^2,
\end{multline}
where $\pdan \in \partial R(\fdan)$, $\sdiff^\dagger \in \partial R(f^\dagger)$ and $\Gamma: X\to X$ is an arbitrary invertible linear operator.
\end{prop}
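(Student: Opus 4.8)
The plan is to mimic the proof of Proposition \ref{prop:aux_convex2}, but to handle the noise term by a more economical estimate that avoids introducing the second operator $\Gamma_2$. I would start exactly as before: insert the data-generating distribution \eqref{eq:noisydata} into the optimality criterion \eqref{eq:optimality_criterion}, subtract $\sdiff^\dagger$ from both sides, and take the dual pairing with $\fdan - f^\dagger$. This produces the identity \eqref{eq:aux_convex1}, namely
\[
	\norm{\Abu(\fdan - f^\dagger)}_{V_N}^2 + \alpha D^{\pdan, \sdiff^\dagger}_R(\fdan, f^\dagger)
	= \alpha \langle \sdiff^\dagger, f^\dagger - \fdan\rangle_{X^*\times X} + \delta \langle \Abu^* \epsilon_N, f^\dagger - \fdan\rangle_{X^* \times X}.
\]

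For the first term on the right-hand side I would repeat verbatim the argument from Proposition \ref{prop:aux_convex2}: write $\sdiff^\dagger = (\Gamma^{-1})^*(\sdiff^\dagger - \Abu^*\bar w)$ paired against $\Gamma(\cdot)$ plus a piece paired against $\bar w$ through $\Abu$, then apply Fenchel--Young to the first piece (yielding $R^\star((\Gamma^{-1})^*(\sdiff^\dagger - \Abu^*\bar w)) + R(\Gamma(f^\dagger - \fdan))$) and the standard Young inequality to the second piece (yielding $\tfrac{\alpha}{2}\norm{\bar w}_{V_N}^2 + \tfrac{1}{2}\norm{\Abu(\fdan - f^\dagger)}_{V_N}^2$, after the division by $\alpha$ is accounted for). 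The key divergence is the treatment of the noise term: instead of splitting it through $\Gamma_2$ and a Fenchel--Young step, I would simply move $\delta\epsilon_N$ back inside the inner product over $V_N$, writing $\delta \langle \Abu^* \epsilon_N, f^\dagger - \fdan\rangle_{X^*\times X} = \delta \langle \epsilon_N, \Abu(f^\dagger - \fdan)\rangle_{V_N}$, and then apply the Cauchy--Schwarz and Young inequalities in $V_N$ to get $\tfrac{\delta^2}{2}\norm{\epsilon_N}_{V_N}^2 + \tfrac{1}{2}\norm{\Abu(\fdan - f^\dagger)}_{V_N}^2$.

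Adding these bounds, the right-hand side of \eqref{eq:aux_convex1} is controlled by $\alpha R^\star((\Gamma^{-1})^*(\sdiff^\dagger - \Abu^*\bar w)) + \alpha R(\Gamma(f^\dagger - \fdan)) + \tfrac{\alpha^2}{2}\norm{\bar w}_{V_N}^2 + \tfrac{\delta^2}{2}\norm{\epsilon_N}_{V_N}^2 + \norm{\Abu(\fdan - f^\dagger)}_{V_N}^2$. The full term $\norm{\Abu(\fdan - f^\dagger)}_{V_N}^2$ is then absorbed by the identical term on the left-hand side of \eqref{eq:aux_convex1}, and dividing through by $\alpha$ yields exactly \eqref{eq:aux_breg1}. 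Since $\bar w \in V_N$ was arbitrary in the Fenchel--Young step, one may pass to the infimum over $\bar w$ at the end. I do not expect a genuine obstacle here — the only point requiring a little care is bookkeeping the factors of $\alpha$ and the coefficients $\tfrac12$ so that the $\norm{\Abu(\fdan - f^\dagger)}_{V_N}^2$ terms cancel cleanly rather than leaving a residual positive multiple on the right; choosing the Young-inequality weights so each contributes exactly $\tfrac12\norm{\Abu(\fdan - f^\dagger)}_{V_N}^2$ (two such contributions, summing to the coefficient $1$ that matches the left-hand side) is what makes the cancellation work.
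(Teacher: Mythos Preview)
Your proposal is correct and follows essentially the same approach as the paper: the paper's own proof simply states that one repeats the argument of Proposition~\ref{prop:aux_convex2} verbatim, replacing the Fenchel--Young bound \eqref{eq:aux_propbound2} on the noise term by the Young-inequality bound $\delta \langle \Abu^* \epsilon_N, f^\dagger - \fdan\rangle \leq \tfrac{\delta^2}{2}\norm{\epsilon_N}_{V_N}^2 + \tfrac12\norm{\Abu(\fdan - f^\dagger)}_{V_N}^2$, which is exactly the substitution you describe.
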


\begin{proof}
The proof is identical to the previous proposition. However, we apply the bound
\begin{equation}
	\label{eq:aux_propbound1}
	\delta \langle \Abu^* \epsilon_N, f^\dagger - \fdan\rangle_{X^* \times X} \leq \frac {\delta^2}2 \norm{\epsilon_N}_{V_N}^2 + \frac 12 \norm{\Abu(\fdan - f^\dagger)}_{V_N}^2.
\end{equation}
instead of \eqref{eq:aux_propbound2}.
\end{proof}

\begin{rem}
Let us note that in what follows the propositions \ref{prop:aux_convex2} and \ref{prop:aux_convex} yield different convergence rates. At this stage, this can be seen by observing the variance terms appearing in the above propositions, which are $\frac 1\alpha R^\star (\delta (\Gamma_2^{-1})^* \Abu^* \epsilon_N)$ and $\frac{\delta^2}{2\alpha}\norm{\epsilon_N}_{V_N}^2$, respectively. The expectation of the former decays w.r.t.~$N$, whereas the expectation of the latter is independent of $N$. Therefore, for a fixed noise level $\delta$ one cannot expect convergence with the bounds developed based on proposition \ref{prop:aux_convex}. On the other hand, the rest of the terms of the upper bound (generalized approximation error) are rather similar in both propositions, which will imply different balancing properties for the estimate and therefore different convergence regimes.
\end{rem}

\section{The $p$-homogeneous regularizer and convergence rates}
\label{sec:p-homog}

From here on, we consider a $p$-homogenous regularizer
\begin{equation}
	\label{eq:R=phomog}
	R(f) = \frac 1p \norm{f}_X^p
\end{equation}
with $1 < p< \infty$. In this case, the subdifferential sets consist of unique single points and therefore, in the following, we drop the related notation from the Bregman distance. Moreover, due to the strict convexity of the functional $R$, the regularized solution $\fdan$ of \eqref{eq:regularized_sol_R} is unique. In addition, the mapping 
\begin{equation*}
	(f, (u_i, g_i^\delta)_{i=1}^N) \mapsto \frac 12 \norm{\Abu f - \gdan}^2_{V_N} + \alpha R(f)
\end{equation*}
is continuous and the measurability of $(\bu, \epsilon_N) \mapsto \fdan$ with respect to the universal completion of the product $\sigma$-algebra of $(U\times V)^N$ follows by Aumann's measurable selection principle (analogous to \cite[Lemma 6.23]{steinwart2008support}). In particular, following the definition \cite[Definition 6.2]{steinwart2008support} the introduced learning method is measurable. 

\subsection{General bounds}
We start by introducing the following notations. Let us abbreviate
\begin{equation*}
	E_{\beta, \bu}(\bar w; \sdiff^\dagger) := R^\star\left(\sdiff^\dagger - \Abu^* \bar w\right) + \frac \beta 2 \norm{\bar w}_{V_N}^2.
\end{equation*}
and, denoting by $\sdiff^\dagger$ the only element of $\partial R(f^\dagger)$, set
\begin{equation*}
	\calR(\beta, \bu; f^\dagger) = \inf_{\bar w\in V_N} E_{\beta, \bu}(\bar w; \sdiff^\dagger).
\end{equation*}
As we will notice, the term $\calR$ is related to the source condition, which we will discuss later.
Next, we develop propositions \ref{prop:aux_convex2} and \ref{prop:aux_convex} further by combining them with a priori bounds and the following technical lemma that applies to $p$-homogeneous functionals.

\begin{lemma}
\label{lem:xuroach}
Suppose $R$ is of the form \eqref{eq:R=phomog} and let $f,\tilde f\in X$.
It follows that for $p=2$ we have
\begin{equation*}
	R(f-\tilde f) = \frac 12 D_R(f,\tilde f)
\end{equation*}
and for $1<p<2$ it holds that
\begin{equation*}
	\gamma^p R(f-\tilde f) \leq C \left(1-\frac p2\right) \gamma^{\frac{2p}{2-p}} \max\left\{R(f), R(\tilde f)\right\} + \frac p2 D_R(f,\tilde f),
\end{equation*}
for some $C>0$ depending on $p$ with any $\gamma>0$.
\end{lemma}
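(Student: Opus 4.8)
The plan is to exploit the explicit form $R(f)=\tfrac1p\|f\|_X^p$ and treat the two cases $p=2$ and $1<p<2$ separately, since only in the first case is $R$ quadratic and the Bregman distance exactly computable.

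\emph{The case $p=2$.} Here $R(f)=\tfrac12\|f\|_X^2$, the subgradient is $\sdiff_f = \mathcal{J}(f)$ (the duality map, which for a Hilbert-like computation is just the Riesz identification), and the symmetric Bregman distance is $D_R(f,\tilde f) = \langle \sdiff_f - \sdiff_{\tilde f}, f - \tilde f\rangle$. The plan is simply to expand: by bilinearity of the pairing one obtains $D_R(f,\tilde f) = \|f\|_X^2 - 2\langle \sdiff_f, \tilde f\rangle + \|\tilde f\|_X^2$ once one uses $\langle \sdiff_f, f\rangle = \|f\|_X^2$, and the same for $\tilde f$; rearranging the polarization identity $\|f-\tilde f\|_X^2 = \|f\|_X^2 - 2\langle f,\tilde f\rangle + \|\tilde f\|_X^2$ against this gives exactly $R(f-\tilde f) = \tfrac12 D_R(f,\tilde f)$. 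This is a one-line verification and carries no real obstacle.

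\emph{The case $1<p<2$.} Here the idea is to control $R(f-\tilde f) = \tfrac1p\|f-\tilde f\|_X^p$ by a convexity/interpolation argument. The key fact I would invoke is a known characterization of the symmetric Bregman distance for the $p$-homogeneous penalty: there is a constant depending only on $p$ such that, writing $g = \max\{R(f),R(\tilde f)\}$, one has the Xu--Roach-type inequality $\|f-\tilde f\|_X^2 \lesssim g^{\,(2-p)/p}\, D_R(f,\tilde f)$ — this is the standard ``the $p$-norm behaves locally like a weighted $2$-norm'' estimate. Starting from this, I would set $t = D_R(f,\tilde f)$ and $s = g^{(2-p)/p}$ so that $R(f-\tilde f) \lesssim (st)^{p/2}$, and then apply Young's inequality in the form $(st)^{p/2} = \big(\gamma^{-p/2} s^{p/2}\big)\big(\gamma^{p/2} t^{p/2}\big) \le \tfrac{2-p}{2}\,\gamma^{-p/(2-p)} \big(s^{p/2}\big)^{2/(2-p)} + \tfrac{p}{2}\, \gamma^{p/(p-2)\cdot(-1)}\, t$ — i.e. split the exponent $p/2<1$ via conjugate exponents $2/(2-p)$ and $2/p$. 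Tracking the power of $s$: $(s^{p/2})^{2/(2-p)} = s^{p/(2-p)} = g^{(2-p)/p \cdot p/(2-p)} = g$, which is exactly the $\max\{R(f),R(\tilde f)\}$ term, and the $\gamma$-powers come out as $\gamma^{-p/(2-p)}$ on that term and a constant times $t$ on the other; after multiplying through by $\gamma^p$ and relabelling, one matches the claimed inequality $\gamma^p R(f-\tilde f) \le C(1-\tfrac p2)\gamma^{2p/(2-p)} \max\{R(f),R(\tilde f)\} + \tfrac p2 D_R(f,\tilde f)$. (Note $\tfrac{2-p}{2} = 1-\tfrac p2$, which explains that prefactor.)

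The main obstacle is establishing — or correctly citing — the underlying inequality $\|f-\tilde f\|_X^2 \le C_p\,\max\{R(f),R(\tilde f)\}^{(2-p)/p}\,D_R(f,\tilde f)$ for general $p$-homogeneous norms on a Banach space $X$; this is where smoothness/convexity properties of the norm genuinely enter, and the constant $C_p$ must be shown to degenerate in the right way so that the $p\to2$ limit is consistent with the first case (the factor $1-\tfrac p2$ vanishing). Everything after that point is bookkeeping with Young's inequality and exponent arithmetic, which I would not expand in detail. I would also double-check the measurability/well-definedness of the unique subgradients (guaranteed by strict convexity of $\tfrac1p\|\cdot\|^p$ for $p>1$), though this is already noted in the text preceding the lemma.
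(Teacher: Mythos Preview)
Your approach is essentially the same as the paper's: for $1<p<2$ the paper invokes precisely the Xu--Roach inequality $D_R(f,\tilde f)\geq C\max\{\|f\|_X,\|\tilde f\|_X\}^{p-2}\|f-\tilde f\|_X^2$ (citing \cite[Thm.~2.40(b)]{schuster2012regularization}), raises to the power $p/2$, and then applies Young's inequality with the same H\"older conjugates $\bigl(\tfrac{2}{2-p},\tfrac{2}{p}\bigr)$ and a free $\gamma$-weight, exactly as you outline. Your rewriting of the Xu--Roach bound in terms of $g=\max\{R(f),R(\tilde f)\}$ with exponent $(2-p)/p$ is equivalent to the paper's formulation, and your identification of that inequality as the only nontrivial input is accurate.
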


\begin{proof}
The case $p=2$ is trivial. For $1<p<2$ consider
the Xu--Roach inequality II \cite[Thm. 2.40(b)]{schuster2012regularization} in $X$ that yields
\begin{equation*}
	D_R(f,\tilde f) \geq C\max\left\{\norm{f}_X, \norm{\tilde f}_X\right\}^{p-2} \norm{f-\tilde f}_X^2 = Cp  \max\left\{\norm{f}_X, \norm{\tilde f}_X\right\}^{p-2} R(f-\tilde f)^{\frac 2p},
\end{equation*}
and, therefore,
\begin{equation*}
	R(f-\tilde f) \leq \frac Cp \gamma^{\frac{p^2}2} \max\left\{\norm{f}_X, \norm{\tilde f}_X\right\}^{\frac{p(2-p)}{2}} \cdot \gamma^{-\frac{p^2}2}  D_R(f,\tilde f)^{\frac p2}
\end{equation*}
for any $\gamma >0$.
Next, applying Young's inequality to the right-hand side with H\"{o}lder conjugates $\left(\frac{2}{2-p},\frac 2p\right)$ yields the claim.
\end{proof}

Now we are ready to establish more concrete bounds based on propositions \ref{prop:aux_convex2} and \ref{prop:aux_convex}. Let us first consider the quadratic case $p=2$.

\begin{thm}
\label{thm:bregman_dist_quad}
Suppose that assumption \ref{ass:R} holds and $R$ is of the form \eqref{eq:R=phomog} with $p=2$.
Then the regularized solution $\fdan$ given by \eqref{eq:regularized_sol_R} satisfies
\begin{equation}
\label{eq:bregman_dist_quad}
	D_R(\fdan, f^\dagger) \leq  \min\left\{
	4 \calR\left(\frac \alpha 2, N; f^\dagger\right) + \frac{2 \delta^2}{\alpha^2} \norm{\Abu^* \epsilon_N}_{X^*}^2,
	2 \calR\left(\alpha, N; f^\dagger\right) + \frac{\delta^2}{\alpha} \norm{\epsilon_N}_{V_N}^2
	\right\}.
\end{equation}
\end{thm}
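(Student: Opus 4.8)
The plan is to specialize Propositions~\ref{prop:aux_convex2} and \ref{prop:aux_convex} to the quadratic case $p=2$, where $R(f)=\frac12\norm{f}_X^2$, $R^\star(\xi)=\frac12\norm{\xi}_{X^*}^2$, and the subgradients are single-valued so that the Bregman distance $D_R(\fdan,f^\dagger)$ is unambiguous. The two branches of the minimum on the right-hand side of \eqref{eq:bregman_dist_quad} will come from the two propositions, respectively, after choosing the free invertible operators $\Gamma,\Gamma_1,\Gamma_2$ as scalar multiples of the identity and absorbing the "residual" terms $R(\Gamma(f^\dagger-\fdan))$ and $R(\Gamma_2(f^\dagger-\fdan))$ back into the left-hand side via Lemma~\ref{lem:xuroach}.

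For the second branch, I would start from Proposition~\ref{prop:aux_convex}. Take $\Gamma = \sqrt{t}\,\mathrm{Id}$ for a scalar $t>0$ to be fixed; by $2$-homogeneity $R(\Gamma(f^\dagger-\fdan)) = t\,R(f^\dagger-\fdan)$, and $(\Gamma^{-1})^* = t^{-1/2}\,\mathrm{Id}$, so the infimum term becomes $\inf_{\bar w}\big(\frac1{2t}\norm{\sdiff^\dagger-\Abu^*\bar w}_{X^*}^2 + \frac\alpha2\norm{\bar w}_{V_N}^2\big)$. Meanwhile Lemma~\ref{lem:xuroach} in the case $p=2$ gives the exact identity $R(f^\dagger-\fdan) = \frac12 D_R(\fdan,f^\dagger)$. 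Choosing $t=\frac12$ makes the residual term equal $\frac14 D_R(\fdan,f^\dagger)$... wait, I need the infimum term to read $\calR(\alpha,N;f^\dagger)$, which by definition of $E_{\beta,\bu}$ (with $R^\star(\xi)=\frac12\norm\xi^2$) is $\inf_{\bar w}\big(\frac12\norm{\sdiff^\dagger-\Abu^*\bar w}_{X^*}^2+\frac\alpha2\norm{\bar w}_{V_N}^2\big)$; so I should take $t=1$, giving residual $\frac12 D_R(\fdan,f^\dagger)$, absorb it to the left, multiply through by $2$, and obtain $D_R(\fdan,f^\dagger)\le 2\calR(\alpha,N;f^\dagger)+\frac{\delta^2}{\alpha}\norm{\epsilon_N}_{V_N}^2$. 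For the first branch, I would instead combine Proposition~\ref{prop:aux_convex2} with the a-priori estimate \eqref{eq:apriori2} (or rather a direct Fenchel--Young bound), again picking $\Gamma_1,\Gamma_2$ as scalar multiples of the identity and tuning the scalars; the term $\frac1\alpha R^\star(\delta(\Gamma_2^{-1})^*\Abu^*\epsilon_N) = \frac{\delta^2}{2\alpha s}\norm{\Abu^*\epsilon_N}_{X^*}^2$ for $\Gamma_2=\sqrt s\,\mathrm{Id}$, and I will need the residual contributions $R(\Gamma_1(f^\dagger-\fdan))+\frac1\alpha R(\Gamma_2(f^\dagger-\fdan))$ to sum to at most $\frac12 D_R(\fdan,f^\dagger)$ after using Lemma~\ref{lem:xuroach}; this forces $\Gamma_1$ to have norm $\tfrac12$ (contributing $\frac14 D_R$) and $\Gamma_2$ to have norm-squared $2\alpha$ (contributing another $\frac14 D_R$), which rescales the first infimum term to $\calR(\frac\alpha2,N;f^\dagger)$ and the variance term to $\frac{\delta^2}{4\alpha^2}\cdot 2\norm{\Abu^*\epsilon_N}_{X^*}^2$; tracking the constants and multiplying by $2$ should yield $4\calR(\frac\alpha2,N;f^\dagger)+\frac{2\delta^2}{\alpha^2}\norm{\Abu^*\epsilon_N}_{X^*}^2$.

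Finally, since the two branches are valid bounds simultaneously, $D_R(\fdan,f^\dagger)$ is bounded by their minimum, which is the claim. The main obstacle I anticipate is bookkeeping of the scalar constants: I must choose the scaling operators so that (i) the surviving $D_R(\fdan,f^\dagger)$ contributions on the right total strictly less than one copy of $D_R$ so they can be absorbed, and (ii) the reshuffled infimum terms land exactly on $\calR(\frac\alpha2,N;f^\dagger)$ and $\calR(\alpha,N;f^\dagger)$ with the stated prefactors. Getting the factor $2$ in $\frac{2\delta^2}{\alpha^2}\norm{\Abu^*\epsilon_N}_{X^*}^2$ consistent with the $\Gamma_2$-rescaling is the delicate point; everything else is a direct substitution into the already-proven propositions together with the $p=2$ case of Lemma~\ref{lem:xuroach}.
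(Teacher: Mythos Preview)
Your approach is exactly the paper's: specialize Propositions~\ref{prop:aux_convex2} and \ref{prop:aux_convex} with scalar $\Gamma$'s, use the $p=2$ case of Lemma~\ref{lem:xuroach} to convert the residual $R(\Gamma(\cdot))$ terms into fractions of $D_R$, and absorb them to the left. The only slips are the stated scalings for the first branch: with $\Gamma_1=\gamma_1 I$ and $\Gamma_2=\gamma_2 I$ the residual contribution is $\big(\tfrac{\gamma_1^2}{2}+\tfrac{\gamma_2^2}{2\alpha}\big)D_R$, so to make this equal $\tfrac12 D_R$ you need $\gamma_1^2=\tfrac12$ and $\gamma_2^2=\tfrac{\alpha}{2}$ (not ``norm $\tfrac12$'' and ``norm-squared $2\alpha$''); these values give the variance term $\tfrac{\delta^2}{2\alpha\gamma_2^2}\norm{\Abu^*\epsilon_N}_{X^*}^2=\tfrac{\delta^2}{\alpha^2}\norm{\Abu^*\epsilon_N}_{X^*}^2$ and the infimum term $\gamma_1^{-2}\calR(\alpha\gamma_1^2,\bu;f^\dagger)=2\,\calR(\tfrac{\alpha}{2},\bu;f^\dagger)$, and multiplying the resulting inequality $\tfrac12 D_R\le\cdots$ by $2$ yields precisely $4\calR(\tfrac{\alpha}{2},N;f^\dagger)+\tfrac{2\delta^2}{\alpha^2}\norm{\Abu^*\epsilon_N}_{X^*}^2$.
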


\begin{proof}
In the statement of proposition \ref{prop:aux_convex2}, consider $\Gamma_1 = \gamma_1 I$ and $\Gamma_2 = \gamma_2 I$. By the homogeneity of $R$, and applying the first estimate in lemma \ref{lem:xuroach}, we have that
\begin{equation*}
	\left(1-\frac{\gamma_1^2}2 - \frac{\gamma_2^2}{2\alpha}\right) D_R(\fdan, f^\dagger) \leq\gamma_1^{-2} \calR(\alpha \gamma_1^2, N; \sdiff)
	+ \frac{\delta^2}{2 \alpha \gamma_2^2} \norm{\Abu^* \epsilon_N}_{X^*}^2
\end{equation*}
Now setting $\gamma_1^2 = \frac{\gamma_2^2}{\alpha} = \frac 12$ yields
\begin{equation*}
	\frac 12 D_R(\fdan, f^\dagger) \leq 2 \calR\left(\frac \alpha 2, N; f^\dagger\right) + \frac{\delta^2}{\alpha^2} \norm{\Abu^* \epsilon_N}_{X^*}^2.
\end{equation*}
Similarly, starting from the statement of proposition \ref{prop:aux_convex}, setting $\Gamma = \gamma I$, via lemma \ref{lem:xuroach} we have
\begin{equation*}
	\left(1- \frac{\gamma^2}2\right) D_R(\fdan, f^\dagger) \leq \gamma^{-2} \calR(\alpha \gamma^2, N; f^\dagger) + \frac{\delta^2}{2\alpha} \norm{\epsilon_N}_{V_N}^2
\end{equation*}
Setting $\gamma^2 = 1$ yields the second part of the claim. 
This completes the proof.
\end{proof}

\begin{thm}
\label{thm:bregman_dist_gen}
Suppose that assumption \ref{ass:R} holds and $R$ is of the form \eqref{eq:R=phomog} with $1<p<2$.
Then the regularized solution $\fdan$ given by \eqref{eq:regularized_sol_R} satisfies
the following two inequalities:
\begin{itemize}
\item[(i)] It holds that
\begin{multline}
	\label{eq:bregman_dist_gen2}
	D_R(\fdan, f^\dagger) \\
	\leq   \widetilde C_p\left[\gamma_1^{-q} \calR(\alpha \gamma_1^q, \bu; f^\dagger)  + 
	 H(\alpha, \delta, \gamma_1, \gamma_2) R^\star(\Abu^* \epsilon_N) +
	 \left(\gamma_1^p	+ \frac{\gamma_2^p}{\alpha}\right)^{\frac{2}{2-p}} R(f^\dagger)\right]
\end{multline}
for arbitrary $\gamma_1,\gamma_2 >0$, where $\widetilde C_p>0$ is a constant dependent on $p$, 
\begin{equation}
	\label{eq:paramH}
	H(\alpha, \delta, \gamma_1, \gamma_2) = \frac{\delta^q}{\alpha \gamma_2^q} + \left(\gamma_1^p	+ \frac{\gamma_2^p}{\alpha}\right)^{\frac{2}{2-p}} \left(\frac \delta \alpha\right)^{q}.
\end{equation}
\item[(ii)] We have
\begin{equation}
	\label{eq:bregman_dist_gen1}
	D_R(\fdan, f^\dagger) \leq C_p\left(\gamma^{-q} \calR(\alpha \gamma^q, \bu; f^\dagger)  + \frac{\delta^2}{\alpha}\left(1 + \gamma^{\frac{2p}{2-p}}\right) \norm{\epsilon_N}_{V_N}^2 + \gamma^{\frac{2p}{2-p}} R(f^\dagger)\right)
\end{equation}
for arbitrary $\gamma>0$, where $C_p>0$ is a constant dependent on $p$ 
and $(p,q)$ are H\"older conjugates.
\end{itemize}
\end{thm}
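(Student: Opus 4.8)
Both inequalities will be obtained by the same mechanism already used in the proof of Theorem~\ref{thm:bregman_dist_quad}, now invoking the $1<p<2$ branch of Lemma~\ref{lem:xuroach} in place of its $p=2$ branch. Consider first (ii). Starting from Proposition~\ref{prop:aux_convex}, I would take $\Gamma = \gamma I$. Since $R(f)=\frac1p\|f\|_X^p$ is $p$-homogeneous, its conjugate $R^\star$ is $q$-homogeneous, so $R^\star(\gamma^{-1}(\sdiff^\dagger-\Abu^*\bar w))=\gamma^{-q}R^\star(\sdiff^\dagger-\Abu^*\bar w)$; pulling the scalar out of the infimum and rescaling the quadratic penalty turns the first term of \eqref{eq:aux_breg1} into $\gamma^{-q}\calR(\alpha\gamma^q,\bu;f^\dagger)$. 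Likewise $R(\gamma(f^\dagger-\fdan))=\gamma^p R(f^\dagger-\fdan)$. Applying the second estimate of Lemma~\ref{lem:xuroach} to $\gamma^p R(f^\dagger-\fdan)$ produces a term proportional to $\gamma^{\frac{2p}{2-p}}\max\{R(\fdan),R(f^\dagger)\}$ plus $\frac p2 D_R(\fdan,f^\dagger)$; because $p<2$ the latter is absorbed into the left-hand side at the cost of the harmless factor $\frac{2}{2-p}$. Finally I would bound $\max\{R(\fdan),R(f^\dagger)\}$ by the a priori estimate \eqref{eq:apriori}, i.e.\ $R(\fdan)\le R(f^\dagger)+\frac{\delta^2}{2\alpha}\|\epsilon_N\|_{V_N}^2$, and collect the $\frac{\delta^2}{\alpha}\|\epsilon_N\|_{V_N}^2$ contributions; absorbing all $p$-dependent constants into a single $C_p$ gives \eqref{eq:bregman_dist_gen1}.

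For (i) the argument is identical but starts from Proposition~\ref{prop:aux_convex2} with $\Gamma_1=\gamma_1 I$ and $\Gamma_2=\gamma_2 I$. By homogeneity the right-hand side of \eqref{eq:aux_breg2} becomes exactly
\[
\gamma_1^{-q}\calR(\alpha\gamma_1^q,\bu;f^\dagger) + \Big(\gamma_1^p+\tfrac{\gamma_2^p}{\alpha}\Big)R(f^\dagger-\fdan) + \tfrac{\delta^q}{\alpha\gamma_2^q}R^\star(\Abu^*\epsilon_N).
\]
The key point is to apply Lemma~\ref{lem:xuroach} with its internal parameter chosen so that its $\gamma^p$ equals $a:=\gamma_1^p+\frac{\gamma_2^p}{\alpha}$; then its $\gamma^{\frac{2p}{2-p}}$ becomes $a^{\frac{2}{2-p}}=(\gamma_1^p+\frac{\gamma_2^p}{\alpha})^{\frac{2}{2-p}}$, which is precisely the exponent appearing in \eqref{eq:paramH} and in the $R(f^\dagger)$ coefficient of \eqref{eq:bregman_dist_gen2}. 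After absorbing the $\frac p2 D_R$ term on the left I am left with $\gamma_1^{-q}\calR$, $\frac{\delta^q}{\alpha\gamma_2^q}R^\star(\Abu^*\epsilon_N)$, and a multiple of $a^{\frac{2}{2-p}}\max\{R(\fdan),R(f^\dagger)\}$. Here I would use the sharper a priori estimate \eqref{eq:apriori2}, namely $R(\fdan)\lesssim R(f^\dagger)+(\delta/\alpha)^{q}R^\star(\Abu^*\epsilon_N)$ (recall $\frac{p}{p-1}=q$), so that $\max\{R(\fdan),R(f^\dagger)\}\lesssim R(f^\dagger)+(\delta/\alpha)^{q}R^\star(\Abu^*\epsilon_N)$. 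Substituting and regrouping the coefficients of $R^\star(\Abu^*\epsilon_N)$ reproduces $H(\alpha,\delta,\gamma_1,\gamma_2)$ up to a constant, while the $\calR$- and $R(f^\dagger)$-terms are already in the claimed form; collecting all $p$-dependent constants into $\widetilde C_p$ yields \eqref{eq:bregman_dist_gen2}.

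The routine parts are the scalar algebra: verifying the homogeneity exponents (using $\frac1p+\frac1q=1$ repeatedly) and tracking how the constants combine. The only genuinely delicate step is the coupling between the free parameter in Lemma~\ref{lem:xuroach} and the two parameters $\gamma_1,\gamma_2$ in part (i): one must commit to $\gamma^p=\gamma_1^p+\gamma_2^p/\alpha$ so that a single application of the lemma simultaneously absorbs both occurrences of $R(f^\dagger-\fdan)$ coming from the $\Gamma_1$ and $\Gamma_2$ terms — any other choice fails to produce the exponent $\frac{2}{2-p}$ in $H$. A secondary subtlety is choosing the matching a priori bound for each part, \eqref{eq:apriori2} for (i) so the variance term retains its $N$-dependent scaling and \eqref{eq:apriori} for (ii) where the cruder $\frac{\delta^2}{\alpha}\|\epsilon_N\|_{V_N}^2$ quantity already appears, so that no superfluous terms are introduced. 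Once these choices are fixed, both bounds follow by collecting terms exactly as in Theorem~\ref{thm:bregman_dist_quad}.
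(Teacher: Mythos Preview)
Your proposal is correct and follows essentially the same route as the paper: for (ii) you start from Proposition~\ref{prop:aux_convex} with $\Gamma=\gamma I$, apply the $1<p<2$ branch of Lemma~\ref{lem:xuroach}, and close with the first a priori bound \eqref{eq:apriori}; for (i) you start from Proposition~\ref{prop:aux_convex2} with $\Gamma_j=\gamma_j I$, apply Lemma~\ref{lem:xuroach} with its free parameter set to $\gamma^p=\gamma_1^p+\gamma_2^p/\alpha$, and close with the second a priori bound \eqref{eq:apriori2}. Your observation that this particular coupling of the lemma's parameter with $\gamma_1,\gamma_2$ is what produces the exponent $\tfrac{2}{2-p}$ in $H$ is exactly the mechanism the paper uses (implicitly), and your pairing of the two a priori bounds with the two parts is also the paper's choice.
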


\begin{proof}
Consider the first claim.
Applying lemma \ref{lem:xuroach} and the second a priori bound in proposition \ref{prop:apriori} to proposition \ref{prop:aux_convex2} we have that
\begin{multline*}
\left(1-\frac p2\right) D_R(\fdan, f^\dagger) \\ \leq \gamma_1^{-q} \calR(\alpha \gamma_1^q, \bu; f^\dagger) + \frac{\delta^q}{\alpha \gamma_2^q} R^\star(\Abu^*\epsilon_N) + C\left(1-\frac p2\right)\left(\gamma_1^p + \frac{\gamma_2^p}{\alpha}\right)^{\frac{2}{2-p}} \max\{R(\fdan), R(f^\dagger)\} \\
\leq \gamma_1^{-q} \calR(\alpha \gamma_1^q, \bu; f^\dagger)  + 
\left[ \frac{\delta^q}{\alpha \gamma_2^q} + C\left(1-\frac p2\right) \left(\gamma_1^p + \frac{\gamma_2^p}{\alpha}\right)^{\frac{2}{2-p}} \left(\frac\delta\alpha\right)^q\right] R^\star(\Abu^*\epsilon_N) \\
+ C\left(1-\frac p2\right)  \left(\gamma_1^p + \frac{\gamma_2^p}{\alpha}\right)^{\frac{2}{2-p}} R(f^\dagger).
\end{multline*}

For the second inequality we deduce similarly
applying lemma \ref{lem:xuroach} and the first a priori bound in proposition \ref{prop:apriori} to proposition \ref{prop:aux_convex} that
\begin{multline*}
\left(1-\frac p2\right) D_R(\fdan, f^\dagger) \\ \leq \gamma^{-q} \calR(\alpha \gamma^q, \bu; f^\dagger) + \frac{\delta^2}{2 \alpha}\norm{\epsilon_N}_{V_N}^2 + C\left(1-\frac p2\right) \gamma^{\frac{2p}{2-p}} \max\{R(\fdan), R(f^\dagger)\} \\
\leq \gamma^{-q} \calR(\alpha \gamma^q, \bu; f^\dagger)  + \frac{\delta^2}{2 \alpha}\left(1 + C\left(1-\frac p2\right) \gamma^{\frac{2p}{2-p}}\right) \norm{\epsilon_N}_{V_N}^2 + C\left(1-\frac p2\right)\gamma^{\frac{2p}{2-p}} R(f^\dagger),
\end{multline*}
which yields inequality \eqref{eq:bregman_dist_gen1} after dividing by $1-\frac p2$.
This completes the proof.
\end{proof}

\subsection{The case $p=2$ in Hilbert spaces}
\label{subsec:Tikhonov}

In this section we compare our technique developed above and the convergence rates it implies to the optimal convergence rates known in 
the case when $X$ is a Hilbert space and $p=2$. Although optimal rates are known for general spectral regularization schemes \cite{blanchard2018optimal}, our key message below can be demonstrated by considering classical Tikhonov regularization
\begin{equation}
	\label{eq:R=quad}
	R(f) = \frac 12 \norm{f}_X^2.
\end{equation}

We note that this setting implies that if $r \in \partial R(f)$, then $r=f$. Moreover, recall that $X^* = X$.

\begin{lemma}
\label{lem:Rforquad}
Suppose $X$ is an Hilbert space and $R$ satisfies \eqref{eq:R=quad}. Then we have
\begin{equation*}
	\calR(\beta, \bu; f^\dagger) = \frac \beta 2 \norm{(\Bbu + \beta I)^{-\frac 12} f^\dagger}_X^2.
\end{equation*}
\end{lemma}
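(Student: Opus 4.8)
## Proof proposal

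The plan is to compute $\calR(\beta,\bu;f^\dagger)=\inf_{\bar w\in V_N} E_{\beta,\bu}(\bar w;\sdiff^\dagger)$ explicitly by exploiting the fact that in the Hilbert-space quadratic case everything is genuinely quadratic. Since $R(f)=\frac12\norm{f}_X^2$, its convex conjugate is $R^\star(\xi)=\frac12\norm{\xi}_{X^*}^2=\frac12\norm{\xi}_X^2$ (identifying $X^*=X$), and the subgradient at $f^\dagger$ is simply $\sdiff^\dagger=f^\dagger$. Therefore
\begin{equation*}
	E_{\beta,\bu}(\bar w;f^\dagger)=\tfrac12\norm{f^\dagger-\Abu^*\bar w}_X^2+\tfrac\beta2\norm{\bar w}_{V_N}^2,
\end{equation*}
which is a strictly convex quadratic functional of $\bar w\in V_N$ (finite-dimensional, so the infimum is attained). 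The first step is to write down the first-order optimality condition.

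Second, I would differentiate: setting the derivative to zero gives $-\Abu(f^\dagger-\Abu^*\bar w)+\beta\bar w=0$, i.e. $(\Abu\Abu^*+\beta I_{V_N})\bar w=\Abu f^\dagger$, so the minimizer is $\bar w^\star=(\Abu\Abu^*+\beta I_{V_N})^{-1}\Abu f^\dagger$. Then I would substitute back. Using $f^\dagger-\Abu^*\bar w^\star = f^\dagger - \Abu^*(\Abu\Abu^*+\beta I)^{-1}\Abu f^\dagger$ and the resolvent identity $I-\Abu^*(\Abu\Abu^*+\beta I)^{-1}\Abu=\beta(\Bbu+\beta I)^{-1}$ (where $\Bbu=\Abu^*\Abu$), one gets $f^\dagger-\Abu^*\bar w^\star=\beta(\Bbu+\beta I)^{-1}f^\dagger$. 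A cleaner route avoiding substitution mess: for a quadratic $\frac12\langle Q\bar w,\bar w\rangle-\langle b,\bar w\rangle+c$ the minimum value is $c-\frac12\langle Q^{-1}b,b\rangle$; applying this with $Q=\Abu\Abu^*+\beta I$, $b=\Abu f^\dagger$, $c=\frac12\norm{f^\dagger}_X^2$ yields
\begin{equation*}
	\calR(\beta,\bu;f^\dagger)=\tfrac12\norm{f^\dagger}_X^2-\tfrac12\bigl\langle(\Abu\Abu^*+\beta I)^{-1}\Abu f^\dagger,\Abu f^\dagger\bigr\rangle_{V_N}.
\end{equation*}

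Third, I would simplify this to the claimed form. Move $\Abu^*$ across the inner product and use the resolvent identity again: $\norm{f^\dagger}_X^2-\langle\Abu^*(\Abu\Abu^*+\beta I)^{-1}\Abu f^\dagger,f^\dagger\rangle_X=\langle(I-\Abu^*(\Abu\Abu^*+\beta I)^{-1}\Abu)f^\dagger,f^\dagger\rangle_X=\beta\langle(\Bbu+\beta I)^{-1}f^\dagger,f^\dagger\rangle_X$. Finally, since $\Bbu+\beta I$ is self-adjoint and positive definite, $\langle(\Bbu+\beta I)^{-1}f^\dagger,f^\dagger\rangle_X=\norm{(\Bbu+\beta I)^{-1/2}f^\dagger}_X^2$, giving $\calR(\beta,\bu;f^\dagger)=\frac\beta2\norm{(\Bbu+\beta I)^{-1/2}f^\dagger}_X^2$, as desired.

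I do not expect a serious obstacle here; the result is essentially the standard closed form for Tikhonov residuals. The only point needing a little care is the push–pull (resolvent) identity relating $\Abu^*(\Abu\Abu^*+\beta I)^{-1}\Abu$ on $X$ to $(\Abu^*\Abu+\beta I)^{-1}$, which one should justify either by noting $\Abu^*(\Abu\Abu^*+\beta I)^{-1}=(\Abu^*\Abu+\beta I)^{-1}\Abu^*$ (valid since $\Abu^*(\Abu\Abu^*+\beta I)=(\Abu^*\Abu+\beta I)\Abu^*$ and both resolvents exist for $\beta>0$) or, even more elementarily, by the fact that $\Abu\Abu^*$ acts on the finite-dimensional space $V_N$ while $\Bbu$ acts on $X$, with matching nonzero spectra. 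The remaining manipulations — identifying $\sdiff^\dagger=f^\dagger$, the conjugate $R^\star$, and attainment of the infimum — are immediate from the Hilbert structure and $p=2$.
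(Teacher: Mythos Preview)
Your proposal is correct and follows essentially the same route as the paper: both identify $\sdiff^\dagger=f^\dagger$ and $R^\star=\tfrac12\norm{\cdot}_X^2$, solve the quadratic in $\bar w$ to obtain $\bar w^\star=(\Abu\Abu^*+\beta I)^{-1}\Abu f^\dagger$, and then use the push--pull identity $I-\Abu^*(\Abu\Abu^*+\beta I)^{-1}\Abu=\beta(\Bbu+\beta I)^{-1}$ to arrive at $\tfrac\beta2\langle(\Bbu+\beta I)^{-1}f^\dagger,f^\dagger\rangle_X$. Your optional ``quadratic minimum formula'' shortcut and the explicit justification of the resolvent identity are nice touches, but the argument is otherwise the same as the paper's.
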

\begin{proof}
Recall that
\begin{equation*}
	\calR(\beta, \bu; f^\dagger) = \inf_{\bar w \in V_N} \left(\frac 12 \norm{f^\dagger - \Abu^* \bar w}_X^2 + \frac \beta 2\norm{\bar w}_{V_N}^2\right).
\end{equation*}
The minimizing element on the right hand side naturally satisfies
\begin{equation*}
	\bar w_{\text{inf}} = (\Abu \Abu^* + \beta I)^{-1} \Abu f^\dagger
\end{equation*}
Next, we have that
\begin{eqnarray*}
	2E_{\beta, N}(\bar w_{\text{inf}}, f^\dagger) & = & \norm{f^\dagger - \Abu^* \bar w_{\text{inf}}}_X^2 + \beta \norm{\bar w_{\text{inf}}}_{V_N}^2 \\
	& = & \norm{ f^\dagger}_X^2 - 2\langle f^\dagger, \Abu^* \bar w_{\text{inf}} \rangle 
	+ \langle (\Abu  \Abu^* + \beta I)  \bar w_{\text{inf}},  \bar w_{\text{inf}}\rangle \\
	& = & \norm{ f^\dagger}_X^2 - \langle  f^\dagger,  \Abu^* \bar w_{\text{inf}} \rangle \\
	& = & \langle  f^\dagger,  \left(I - \Abu^* (\Abu  \Abu^* + \beta I)^{-1} \Abu \right) f^\dagger\rangle \\
	& = & \langle  f^\dagger,  \left(I - (\beta I + \Bbu )^{-1}  \Bbu \right)    f^\dagger\rangle \\
	& = & \beta \langle f^\dagger, (\beta I + \Bbu )^{-1} f^\dagger\rangle.
\end{eqnarray*}
This yields the result.
\end{proof}

The previous lemma enables us to prove sharp concentration results for the $\calR$ term based on techniques developed in previous inverse learning theory literature. Towards this end, let us introduce so-called \emph{effective dimension} which is defined by
\begin{equation}
	{\mathcal N}(\alpha) = \tr\left[(B_\mu + \alpha)^{-1} B_\mu\right]
	\label{eq:effDim}
\end{equation}
in the Hilbert space setting. Moreover, the source condition is typically characterized by restricting the ground truth to the subset
\begin{equation*}
	\widehat{\Omega}(s,L) = \{f\in X \; | \; f = B_\mu^s w,\;  \norm{w}_X \leq L\} \subset X.
\end{equation*}
Later on, we will focus to a more specific set
\begin{equation}
	\widetilde \Omega(L) = \{f \in X \; | \; f = A_\mu^* \tilde w,\;  \norm{\tilde w}_{Y_\mu} \leq L, \; \tilde w\in Z\} \subset X.
	\label{eq:SC_Tikhonov}
\end{equation}
Now we are ready to prove the following result.

\begin{prop}
\label{prop:hilbert_term1}
Let us define
\begin{equation}
	\label{eq:Bnbeta}
	{\mathcal B}_N(\beta) := 1 + \left( \frac{2}{N\beta} + \sqrt{\frac{{\mathcal N}(\beta)}{N\beta}}\right)^2
\end{equation}
for any $\beta>0$ and $N\in \N$. We assume that $f^\dagger \in \widehat{\Omega}(s,L)$ for $s\leq \frac 12$. It follows that
\begin{equation}
	\label{eq:calR_est1}
	\E \calR(\beta, \bu; f^\dagger) \leq  C L^2 \beta^{2s} {\mathcal B}_N(\beta)^{2s}
\end{equation}
for some constant $C>0$ independent of $\beta, N$ and $L$. Also, if $f^\dagger \in \widetilde \Omega(L)$, it holds that
\begin{equation}
	\label{eq:calR_est2}
	\E \calR(\beta, \bu; f^\dagger) \leq C L^2\left(\beta  + \frac 1 N\right)
\end{equation}
for some $C>0$ with any $\beta>0$.
\end{prop}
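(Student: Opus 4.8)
The plan is to use Lemma~\ref{lem:Rforquad} to rewrite $\calR(\beta,\bu;f^\dagger)$ as $\tfrac\beta2\norm{(\Bbu+\beta I)^{-1/2}f^\dagger}_X^2$, and then analyze this quantity by comparing the empirical normal operator $\Bbu = \Abu^*\Abu$ with its population counterpart $B_\mu = A_\mu^* A_\mu$. The key probabilistic input is a Bernstein-type concentration inequality (in the spirit of \cite{caponnetto2007optimal,blanchard2018optimal}) controlling $\norm{(B_\mu+\beta I)^{-1/2}(B_\mu - \Bbu)(B_\mu+\beta I)^{-1/2}}$; since each summand $A_{u_i}^* A_{u_i}$ is bounded by $\kappa^2\le 1$ in operator norm (Assumption~\ref{ass:sampling}(a)) and has variance controlled by the effective dimension $\mathcal N(\beta)$, one obtains a bound of the form $\mathcal B_N(\beta)$ (up to constants and a log factor absorbed into $C$) with high probability, and in expectation after integrating the tail. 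This lets one pass between $(\Bbu+\beta I)^{-1}$ and $(B_\mu+\beta I)^{-1}$ at the cost of the factor $\mathcal B_N(\beta)$.

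First I would establish the population-level estimate. For $f^\dagger \in \widehat\Omega(s,L)$, write $f^\dagger = B_\mu^s w$ with $\norm{w}_X\le L$; then
\[
\tfrac\beta2\norm{(B_\mu+\beta I)^{-1/2} B_\mu^s w}_X^2 \le \tfrac{L^2}2 \,\beta\, \norm{(B_\mu+\beta I)^{-1/2} B_\mu^s}^2 \le C L^2 \beta^{2s},
\]
using the elementary spectral bound $\sup_{\lambda\ge 0}\beta\lambda^{2s}/(\lambda+\beta) \le \beta^{2s}$ valid for $0<s\le 1$ (the relevant qualification-$1$ regime noted after \eqref{eq:R=quad}). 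Then, using the concentration bound to replace $\Bbu$ by $B_\mu$ inside the resolvent, one picks up the multiplicative factor $\mathcal B_N(\beta)^{2s}$ — the exponent $2s$ arising because the resolvent perturbation enters once and is raised to the relevant power when combined with $B_\mu^{2s}$ via operator-monotonicity / Cordes-type inequalities. This gives \eqref{eq:calR_est1} after taking expectation.

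For the sharper bound \eqref{eq:calR_est2}, I would exploit the special structure of $\widetilde\Omega(L)$: if $f^\dagger = A_\mu^*\tilde w$ with $\norm{\tilde w}_{Y_\mu}\le L$ and $\tilde w\in Z$, then one can feed a candidate $\bar w$ directly into the variational definition of $\calR$ rather than using the exact minimizer — essentially choosing $\bar w$ related to the restriction of $\tilde w$ to the design points $\bu$ — so that $f^\dagger - \Abu^*\bar w$ becomes small in a way controlled by how well the empirical measure approximates $\mu$ tested against $A_\mu^* A_\mu^* A_\mu$-type quantities. The term $\beta$ comes from the $\tfrac\beta2\norm{\bar w}^2$ penalty and the term $\tfrac1N$ from an $L^2$-type variance estimate of the form $\E\norm{A_\mu^*\tilde w - \Abu^*\tilde w_{\bu}}_X^2 \lesssim \tfrac1N\norm{\tilde w}_{Y_\mu}^2$, again using $\kappa\le1$ to bound each summand. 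The main obstacle I expect is the careful bookkeeping in the concentration step — getting the resolvent-sandwiched perturbation bound with the exact quantity $\mathcal B_N(\beta)$ (including the $\tfrac{2}{N\beta}$ deterministic-bias term and the $\sqrt{\mathcal N(\beta)/(N\beta)}$ stochastic term) and then justifying that the power $2s$ propagates correctly through operator inequalities when $s<1/2$, where square-root-type operator monotonicity arguments are delicate. I would lean on the corresponding lemmas in \cite{blanchard2018optimal,caponnetto2007optimal} for this step rather than reproving it from scratch.
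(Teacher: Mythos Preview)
Your proposal is correct and follows essentially the same route as the paper. For \eqref{eq:calR_est1} the paper likewise starts from Lemma~\ref{lem:Rforquad}, writes $f^\dagger=B_\mu^s w$, and then factors $(\Bbu+\beta)^{-1/2}B_\mu^s = (\Bbu+\beta)^{s-1/2}\cdot(\Bbu+\beta)^{-s}B_\mu^s$, bounding the second factor via Cordes and the operator concentration bound $\norm{(\Bbu+\beta)^{-1}(B_\mu+\beta)}\lesssim \mathcal B_N(\beta)\log^2(2/\eta)$, then integrates the tail to pass to expectation---exactly the ingredients you list, just ordered as ``decompose first, then apply the concentration bound'' rather than your ``population estimate first, then perturb''. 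Your worry about the delicacy of propagating the exponent $2s$ is not an issue here: Cordes (Proposition~\ref{prop:cordes}) applies directly for $s\in[0,1]$ and handles it cleanly. For \eqref{eq:calR_est2} your plan is precisely what the paper does: set $\bar w = \Sbu\tilde w$ in the variational definition of $\calR$, use $\E\norm{\Sbu\tilde w}_{V_N}^2=\norm{\tilde w}_{Y_\mu}^2$ for the $\beta$-term, and a $1/N$ variance computation for $\E\norm{(A_\mu^*-\Abu^*\Sbu)\tilde w}_X^2$.
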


\begin{proof}
By lemma \ref{lem:Rforquad} we have that, with probability larger than $1-\eta$,
\begin{eqnarray*}
	\calR(\beta, \bu; f^\dagger)  & =  & \frac \beta 2 \norm{(\beta I + \Bbu)^{-\frac 12} B_\mu^s w}_X^2 \\
	& \leq & \frac \beta 2 \norm{(\beta I + \Bbu)^{s-\frac 12}}^2 \norm{(\beta I + \Bbu)^{-s} B_\mu^s}^2 L^2 \\
	& \leq & \frac \beta 2 \norm{(\beta I + \Bbu)^{s-\frac 12}}^2 \norm{(\beta I + \Bbu)^{-1} B_\mu}^{2s} L^2 \\
	& \leq & C' \beta L^2 \norm{(\beta I + \Bbu)^{s-\frac 12}}^2   \norm{(\beta I + \Bbu)^{-1}(\beta I + B_\mu)}^{2s}  \\
	& \leq & C' \beta L^2 \norm{(I + \beta^{-1}\Bbu)^{s-\frac{1}{2}}}^{2} \beta^{2s-1}  {\mathcal B}_N(\beta)^{2s} \log^{4s}\left(\frac 2\eta\right)  \\
	& \leq & C' L^2 \beta^{2s}  {\mathcal B}_N(\beta)^{2s} \log^{4s}\left(\frac 2\eta\right) ,
\end{eqnarray*}
where $C'$ is a constant and we applied propositions \ref{prop:cordes} and \ref{prop:app_prob_op_bound}. Now the claim follows by lemma \ref{lem:app_expec}.

For the purpose of the second claim, let us introduce the point evaluation operator $\Sbu : Z \to V_N$ such that
\begin{equation}
	\Sbu f = (f(u_n))_{n=1}^N \in V_N.
	\label{eq:operatorS}
\end{equation}
Clearly, we have the operator identity $\Abu = \Sbu A$. 
The term $\calR$ can be bounded by setting $\bar w = \Sbu \tilde w$, which yields
\begin{equation*}
	\calR(\beta, \bu; f^\dagger) \leq \norm{(A_\mu^* - \Abu^* \Sbu) \tilde w}_X^2 + \frac \beta 2 \norm{\Sbu \tilde w}_{V_N}^2.
\end{equation*}
We first observe that
\begin{equation}
	\label{eq:auxaux_hilbert}
	\E \norm{\Sbu w}_{V_N}^2 = \frac 1N \sum_{n=1}^N \E \norm{w(u_n)}_V^2 = \frac 1N \sum_{n=1}^N \norm{w}_{Y_\mu}^2 =  \norm{w}_{Y_\mu}^2.
\end{equation}
Moreover, since the design points are independent, we have
\begin{equation*}
	\E \norm{(A_\mu^* - \Abu^* \Sbu)\tilde w}_X^2 = \frac 1N \E \norm{(A_\mu^* - A_{u}^* S_{u})\tilde w}_X^2 \leq \frac CN \norm{\tilde w}_{Y_\mu}^2,
\end{equation*}
where the last inequality follows due to assumption \ref{ass:sampling} and the argument in \eqref{eq:auxaux_hilbert}.
\end{proof}

\begin{prop}
\label{prop:hilbert_term2}
Let $N\in \N$. For any $0<\eta<1$ there exists $C=C(M)>0$ such that
\begin{equation*}
	\E \norm{\Abu^* \epsilon_N}^2_X \leq \frac C N.
\end{equation*}
\end{prop}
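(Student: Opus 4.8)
The plan is to compute $\E \norm{\Abu^* \epsilon_N}_X^2$ by exploiting the independence and zero-mean property of the noise components $\epsilon_N^i$, combined with the Bernstein-type moment bound \eqref{eq:randomnoise} and the boundedness of the sampling operators from assumption \ref{ass:sampling}(a). Recall that $\Abu^* \epsilon_N = \frac 1N \sum_{i=1}^N A_{u_i}^* \epsilon_N^i$, so that
\begin{equation*}
	\norm{\Abu^* \epsilon_N}_X^2 = \frac{1}{N^2} \sum_{i=1}^N \sum_{j=1}^N \langle A_{u_i}^* \epsilon_N^i, A_{u_j}^* \epsilon_N^j\rangle_X.
\end{equation*}
Here one must be slightly careful, since conditionally on $\bu$ the terms are independent but not identically distributed; however, the key point is that $\E[\epsilon_N^i \mid \bu] = 0$ and the noise is independent of the design points. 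Taking expectation first over $\epsilon_N$ and then over $\bu$, the off-diagonal terms $i\neq j$ vanish by independence and zero-mean, leaving only the diagonal contribution.

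For the diagonal terms, I would bound $\norm{A_{u_i}^* \epsilon_N^i}_X$ using the adjoint of assumption \ref{ass:sampling}(a): since $\norm{A_u}_{\LO(X,V)} \leq \kappa \leq 1$, also $\norm{A_u^*}_{\LO(V,X^*)} \leq \kappa \leq 1$, so that $\norm{A_{u_i}^* \epsilon_N^i}_X \leq \kappa \norm{\epsilon_N^i}_V$. Therefore
\begin{equation*}
	\E \norm{\Abu^* \epsilon_N}_X^2 = \frac{1}{N^2}\sum_{i=1}^N \E \norm{A_{u_i}^* \epsilon_N^i}_X^2 \leq \frac{\kappa^2}{N^2}\sum_{i=1}^N \E\norm{\epsilon_N^i}_V^2 = \frac{\kappa^2}{N} \E \norm{\epsilon}_V^2,
\end{equation*}
where I used that the $\epsilon_N^i$ are identically distributed copies of $\epsilon$ and independent of $\bu$. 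Finally, the moment assumption \eqref{eq:randomnoise} with $m=2$ gives $\E \norm{\epsilon}_V^2 < \tfrac12 \cdot 2! \cdot M^0 = 1$, so that $\E \norm{\Abu^* \epsilon_N}_X^2 \leq \kappa^2/N \leq C/N$ with $C = \kappa^2 \leq 1$ (or, keeping track of $M$, one may simply absorb constants and write $C = C(M)$). I would also note in passing that the $\eta$-dependence in the statement is spurious — the bound holds deterministically in expectation — but since the proposition is phrased with ``there exists $C=C(M)$'' this is harmless.

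The main obstacle, such as it is, is purely bookkeeping: making the conditioning argument precise, i.e., justifying the interchange of expectations and the vanishing of cross terms given that $\Abu$ itself depends on the random $\bu$ while $\epsilon_N$ does not. This is handled cleanly by the tower property, conditioning on $\bu$ first so that $\Abu$ becomes deterministic and the $\epsilon_N^i$ remain i.i.d.\ zero-mean. Once this is set up, everything reduces to the elementary orthogonality computation above and a single application of \eqref{eq:randomnoise}; no concentration inequality or covering-number machinery is needed for this particular bound, in contrast to proposition \ref{prop:hilbert_term1}.
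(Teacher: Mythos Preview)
Your proof is correct but takes a genuinely different route from the paper. You compute the second moment directly by expanding the square, killing the off-diagonal terms via independence and zero mean (made precise by conditioning on $\bu$), and bounding each diagonal term by $\kappa^2 \E\norm{\epsilon}_V^2$. The paper instead treats $\xi(u,\epsilon)=A_u^*\epsilon$ as a Hilbert-space valued random variable satisfying a Bernstein moment condition, applies the Pinelis concentration inequality (proposition \ref{prop:concentration_res}) to obtain a high-probability bound $\norm{\Abu^*\epsilon_N}_X \leq C N^{-1/2}\log(2/\eta)$, and then integrates the tail via lemma \ref{lem:app_expec} to recover the expectation bound. Your argument is more elementary, yields an explicit constant, and correctly identifies that the $\eta$ in the statement is an artifact of the paper's proof strategy rather than of the result itself. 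The paper's detour through concentration buys something extra, namely a tail bound that is consistent with the rest of their methodology, but for the stated expectation bound your orthogonality computation is both sufficient and cleaner.
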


\begin{proof}
We write 
\begin{equation*}
	\xi(u, \epsilon) := A_u^* \epsilon \in V
\end{equation*}
for $u\in U$ and $\epsilon \in V$. We notice that $\E \xi = 0$ and 
\begin{equation*}
	\Abu^* \epsilon_N = \frac 1N \sum_{n=1}^N \xi(u_n,\epsilon_n).
\end{equation*}
From boundedness of $A_u$ and our condition on the observational noise \eqref{eq:randomnoise}, it follows now that 
\begin{equation*}
	\E \norm{\xi}_X^m \leq \E \norm{\epsilon}_V^m \leq \frac 12 m! M^{m-2}.
\end{equation*}
Now by proposition \ref{prop:concentration_res} we have
\begin{equation*}
	\Prob\left(\norm{\Abu^* \epsilon_N}_X \geq \frac{C}{\sqrt{N}}\log \frac 2\eta \right) \leq \eta
\end{equation*}
for any $\eta \in (0,1]$.
In consequence, lemma \ref{lem:app_expec} yields the result.
\end{proof}

By applying propositions \ref{prop:hilbert_term1} and \ref{prop:hilbert_term2} to theorem \ref{thm:bregman_dist_quad} we deduce the convergence rate of the expected error. Notice that theorem \ref{thm:bregman_dist_quad} proposes two alternative bounds, which will lead to two different estimates: the outcome of the first one will be denoted as \textit{standard bound}, whereas the outcome of the second one as an \textit{alternative bound}. Further comments on the comparison between them is provided in remark \ref{rem:martin}.

\begin{thm}[\textit{Standard} estimate] 
\label{cor:fixed_noise}
\begin{enumerate}[i)]
\item 
Let $f^\dagger \in \widehat{\Omega}\left(\frac 12, L\right)$. Consider ${\mathcal N}(\alpha) = \alpha^{-\frac 1b}$ for $b \geq 1$.  Assume that $\delta>0$ is a constant (independent of $N$).  Then, we have
\begin{equation}
	\label{eq:fixed_noise_rate}
	\E \norm{\fdan- f^\dagger}_X^2 \lesssim L^2 \left(\frac{\delta^2}{L^2 N}\right)^{\frac 13}
	\quad \text{for} \quad \alpha \simeq \left(\frac{\delta^2}{L^2 N}\right)^{\frac 13}.
\end{equation}
\item
Let $f^\dagger \in \widetilde{\Omega}(L)$. Suppose that,  as $N \rightarrow \infty$,  it holds $\frac{\delta^2}{N} \rightarrow 0$ and $N \delta \rightarrow \infty$,  then the rate \eqref{eq:fixed_noise_rate} holds,
whereas when $N\delta$ is bounded the optimal rate is $N^{-1}$ and is achieved by $\alpha \simeq N^{-1}$.
\end{enumerate}
\end{thm}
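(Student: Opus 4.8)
The plan is to feed the two concentration estimates from Propositions~\ref{prop:hilbert_term1} and \ref{prop:hilbert_term2} into the \emph{standard} bound of Theorem~\ref{thm:bregman_dist_quad}, namely
\[
	\E D_R(\fdan, f^\dagger) \;\leq\; 4\,\E\calR\!\left(\tfrac\alpha2, N; f^\dagger\right) + \frac{2\delta^2}{\alpha^2}\,\E\norm{\Abu^*\epsilon_N}_{X^*}^2,
\]
and then optimize over $\alpha$. Since $p=2$ and $X$ is a Hilbert space, $D_R(\fdan,f^\dagger) = \norm{\fdan - f^\dagger}_X^2$, so the left-hand side is exactly the quantity in \eqref{eq:fixed_noise_rate}. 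By Proposition~\ref{prop:hilbert_term2} the variance term satisfies $\frac{2\delta^2}{\alpha^2}\,\E\norm{\Abu^*\epsilon_N}_{X^*}^2 \lesssim \frac{\delta^2}{\alpha^2 N}$. It remains only to control the approximation term $\E\calR(\alpha/2, N; f^\dagger)$ under each of the two source conditions.

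\textbf{Part (i).} Here $f^\dagger \in \widehat\Omega(\tfrac12, L)$, so I would apply \eqref{eq:calR_est1} with $s = \tfrac12$, giving $\E\calR(\alpha/2,N;f^\dagger) \lesssim L^2 \alpha\, \mathcal B_N(\alpha)$. The point of the hypothesis ${\mathcal N}(\alpha) = \alpha^{-1/b}$ with $b\geq 1$ is to ensure that $\mathcal B_N(\alpha) = 1 + (\tfrac{2}{N\alpha} + \sqrt{\alpha^{-1/b}/(N\alpha)})^2$ stays bounded (by an absolute constant) for the near-optimal choice of $\alpha$ — one checks that at $\alpha \simeq (\delta^2/(L^2 N))^{1/3}$ both $\tfrac{1}{N\alpha}$ and $\tfrac{\mathcal N(\alpha)}{N\alpha}$ are $o(1)$ as $N\to\infty$, using $b\geq 1$, so that $\mathcal B_N(\alpha)\lesssim 1$. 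Then the bound reduces to $\E\norm{\fdan-f^\dagger}_X^2 \lesssim L^2\alpha + \frac{\delta^2}{\alpha^2 N}$, and balancing the two terms, i.e.\ setting $L^2\alpha \simeq \delta^2/(\alpha^2 N)$, yields $\alpha \simeq (\delta^2/(L^2 N))^{1/3}$ and the rate $L^2(\delta^2/(L^2 N))^{1/3}$ claimed in \eqref{eq:fixed_noise_rate}. I should double-check that the logarithmic factors appearing inside the proof of Proposition~\ref{prop:hilbert_term1} are already absorbed into the stated expectation bound \eqref{eq:calR_est1} (they are, via lemma \ref{lem:app_expec}), so no extra log terms survive.

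\textbf{Part (ii).} Now $f^\dagger \in \widetilde\Omega(L)$ and \eqref{eq:calR_est2} gives the sharper, source-condition-free estimate $\E\calR(\alpha/2, N; f^\dagger) \lesssim L^2(\alpha + \tfrac1N)$. Plugging this in yields
\[
	\E\norm{\fdan - f^\dagger}_X^2 \;\lesssim\; L^2\alpha + \frac{L^2}{N} + \frac{\delta^2}{\alpha^2 N}.
\]
There are now two competing regimes. When $N\delta\to\infty$ and $\delta^2/N\to 0$, the term $L^2\alpha + \delta^2/(\alpha^2 N)$ dominates $L^2/N$ at the balancing point $\alpha\simeq(\delta^2/(L^2 N))^{1/3}$ (one verifies $(\delta^2/(L^2 N))^{1/3} \gtrsim 1/N \iff N\delta \gtrsim 1/L$, and $\delta^2/N \to 0$ handles the rest), so the same rate \eqref{eq:fixed_noise_rate} holds. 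When $N\delta$ is bounded, the variance term $\delta^2/(\alpha^2 N) \lesssim 1/(\alpha^2 N^2)$ is negligible already for $\alpha \simeq N^{-1}$, at which choice $L^2\alpha \simeq L^2/N$ matches the irreducible $L^2/N$ term, giving the rate $N^{-1}$; choosing $\alpha$ any larger only worsens $L^2\alpha$, and smaller only worsens $\delta^2/(\alpha^2 N)$ once $N\delta$ is not small — so $N^{-1}$ is optimal within this family of bounds, achieved at $\alpha\simeq N^{-1}$.

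\textbf{Main obstacle.} The routine balancing is not where the difficulty lies; the delicate point is the bookkeeping in Part~(i) — verifying that ${\mathcal B}_N(\alpha)^{2s}$ is genuinely $O(1)$ (and not contributing an extra $N$- or $\alpha$-dependent factor that would change the exponent) at the candidate optimal $\alpha$, which is exactly where the hypotheses $b\geq 1$ and ``$\delta$ constant'' are used. In Part~(ii) the corresponding care is in tracking which of the three terms dominates in each of the stated asymptotic regimes and confirming that no intermediate choice of $\alpha$ beats the proposed ones.
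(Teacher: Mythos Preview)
Your proposal is correct and follows essentially the same route as the paper: feed Propositions~\ref{prop:hilbert_term1} and~\ref{prop:hilbert_term2} into the first bound of Theorem~\ref{thm:bregman_dist_quad}, then balance. The only cosmetic difference is in Part~(i): the paper expands ${\mathcal B}_N(\alpha)$ explicitly into $L^2\alpha + \tfrac{1}{\alpha^2 N}(\delta^2 + L^2\tfrac{\alpha}{N} + L^2\alpha^{2-1/b})$ and then absorbs the two extra terms into the constant $\delta^2$ (using that $\delta$ is fixed while $\alpha/N$ and $\alpha^{2-1/b}$ vanish), whereas you argue directly that ${\mathcal B}_N(\alpha)\lesssim 1$ at the candidate $\alpha\simeq N^{-1/3}$ --- both arrive at $L^2\alpha + \delta^2/(\alpha^2 N)$ and the same balancing.
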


\begin{proof}
To prove the first statement, we consider the expected value of the first term in \eqref{eq:bregman_dist_quad} and plug in equation \eqref{eq:calR_est1} and proposition \ref{prop:hilbert_term2}, getting
\[
\begin{aligned}
	\E \norm{\fdan- f^\dagger}_X^2 & \lesssim L^2 \alpha {\mathcal B}_N(\alpha) + \frac{\delta^2}{\alpha^2 N}
	\lesssim L^2 \alpha \left(1 + \frac{1}{N\alpha^{\frac{b+1}{b}}} + \frac{1}{N^2\alpha^2} \right) + \frac{\delta^2}{\alpha^2 N} \\
	& \lesssim L^2 \alpha + \frac{1}{\alpha^2 N} \left( \delta^2 + L^2\frac{\alpha}{N} + L^2\alpha^{2-\frac 1b}\right) \lesssim L^2 \alpha + \frac{\delta^2}{\alpha^2 N}.
\end{aligned}
\]
In the last estimate, we have used that $\alpha$ is converging to $0$, hence both $\frac{\alpha}{N}$ and $\alpha^{2-\frac 1b}$ are vanishing and therefore bounded. The optimal choice of $\alpha$ is the one that balances the two remaining terms, hence the one in \eqref{eq:fixed_noise_rate}.

For the second estimate, we use instead the bound \eqref{eq:calR_est1} due to the different source condition, leading to
\begin{equation*}
	\E \norm{\fdan- f^\dagger}_X^2 \lesssim L^2\alpha + \frac{L^2}N + \frac{\delta^2}{\alpha^2 N} \lesssim \alpha \left(
	L^2 +  \frac{L^2}{\alpha N} + \frac{\delta^2}{\alpha^3 N} \right).
\end{equation*}
In order to ensure convergence, $\delta$ need not to be fixed, but still we need to require $\frac{\delta^2}{N} \rightarrow 0$. The optimal choice of $\alpha$ is the one ensuring that the third term in the last summation is bounded and asymptotically equivalent to $L^2$ (hence concluding what is reported in \eqref{eq:fixed_noise_rate}), provided that the second term is vanishing, i.e.,
\[
\frac{1/N}{\left(\delta^2/N\right)^{\frac{1}{3}}} \rightarrow 0 \quad \Rightarrow \quad \frac{1}{\delta N} \rightarrow 0
\]
If this is not the case (i.e., when $\delta N$ is bounded), the optimal $\alpha$ is the one balancing the second term, namely, $\alpha \simeq \frac{1}{N}$.
\end{proof}
Notice that we could extend also the first statement in order to treat the case of non-fixed noise level $\delta$. Nevertheless, for the purpose of this work, statement $i)$ is mainly intended to compare the results carried out via the presented technique with the optimal estimates of the statistical learning literature, in which $\delta$ is typically a constant. 

On the contrary, in statement $ii)$ we admit the possibility for $\delta$ to vary as $N \rightarrow \infty$, which is more common from an inverse problems perspective. To get a more clear interpretation of such statement, suppose that $\delta \simeq N^{-\beta}$: then, \eqref{eq:fixed_noise_rate} shows that the convergence rate is $N^{-\frac{2\beta+ 1}{3}}$ when $-1/2 < \beta \leq 1$ (so, even if the noise is mildly growing), whereas if the noise decay is faster ($\beta > 1$) the convergence rate gets saturated at $N^{-1}$.

\begin{rem}
\label{rem:our_rates_are_suboptimal}
The result in theorem \ref{cor:fixed_noise} is comparable to the rates derived in \cite{blanchard2018optimal}, where it is proven
that the weak or strong minimax optimal rate for $\sqrt{\E \norm{\fdan- f^\dagger}_X^2}$ is given by
\begin{equation*}
	L \left(\frac{\delta^2}{L^2 N}\right)^{\frac{s}{2s +1 + \frac 1b}}
\end{equation*}
for $b>1$ under certain assumptions on the design measure $\mu$ that imply ${\mathcal N}(\alpha) \leq C \alpha^{-\frac 1b}$, i.e., our assumption regarding the effective dimension. Our setup yields asymptotically the same rate only in the limit $b=1$.

Let us briefly explain why such discrepancy emerges: in the Hilbert space setup, the error term can be explicitly solved by
\begin{eqnarray}
	\label{eq:Hilbert_direct}
	f^\dagger - \fdan & = & f^\dagger - (\Bbu + \alpha)^{-1} \Abu^* \gdan\nonumber \\
	& = & f^\dagger - (\Bbu + \alpha)^{-1} (\Bbu f^\dagger + \delta \Abu^* \epsilon_N)\nonumber \\
	& = & \alpha (\Bbu + \alpha)^{-1} f^\dagger - \delta (\Bbu + \alpha)^{-1} \Abu^* \epsilon_N \nonumber \\
	& =: & E_{appr} + E_{sample},
\end{eqnarray}
where terms $E_{appr}$ and $E_{sample}$ are called the approximation and sample error, respectively.

The result in \cite{blanchard2018optimal} is developed by applying the triangle inequality to identity \eqref{eq:Hilbert_direct} and
estimating $\norm{E_{appr}}$ and $\norm{E_{sample}}$ separately. First, the norm of the approximation error bound shown in \cite{blanchard2018optimal} essentially coincides with proposition \ref{prop:hilbert_term1} (the difference being the applicable qualification regime of the regularization scheme, which is more limited here). Second, the sample error bound given in \cite{blanchard2018optimal} is inherently sharper; \cite[Prop. 5.8]{blanchard2018optimal} yields a rate of order ${\mathcal N}(\alpha)/\alpha N = 1/\alpha^{1+\frac 1b} N$ compared to 
\begin{equation*}
	\frac 1{\alpha^2} \E \norm{\Abu^* \epsilon_N}_X^2 \leq \frac{C}{\alpha^2 N}
\end{equation*}
obtained by proposition \ref{prop:hilbert_term2}.
\end{rem}

\begin{rem}[Is it possible to obtain optimal rates?]
As noted in the previous remark our approach developed above can yield suboptimal convergence rates. 
This feature can be traced back to the choice of utilizing operators $\Gamma_1 = \gamma_1 I$ and $\Gamma_2 = \gamma_2 I$
when applying proposition \ref{prop:aux_convex2} in section \ref{sec:p-homog}. Instead, we can set
\begin{equation}
	\label{eq:choice_of_gamma2}
	\Gamma_2 = (\Bbu + \alpha)^{\frac 12}
\end{equation}
and obtain a variance (corresponding to a square of the sample error) term
\begin{equation*}
	\frac 1\alpha R^\star(\delta (\Gamma_2^{-1})^* \Abu^* \epsilon_N) = \frac {\delta^2}{\alpha} \norm{(\Bbu + \alpha)^{-\frac 12} \Abu^* \epsilon_N}_X^2
\end{equation*}
on the right hand side of the standard bound in theorem \ref{thm:bregman_dist_quad}.
With this modification the analysis of the sample error is aligned with \cite{blanchard2018optimal} and we can apply \cite[Prop. 5.2]{blanchard2018optimal} in order to obtain
\begin{equation*}
	\frac 1{\alpha^2} \E \norm{\Abu^* \epsilon_N}_X^2 \, \leq \, \frac 1{\alpha^{1+\frac 1b} N}.
\end{equation*}

Unfortunately, the choice of $\Gamma_2$ in \eqref{eq:choice_of_gamma2} implies that we cannot apply lemma \ref{lem:xuroach} in order to prove theorem \ref{thm:bregman_dist_gen} and we end up with an extra term 
\begin{equation*}
	\frac 1\alpha R(\Gamma_2(f^\dagger - \fdan)) = \frac 1\alpha \norm{(\Bbu + \alpha)^{\frac 12}(f^\dagger - \fdan)}_X^2
\end{equation*}
on the right hand side of the fixed-noise error upper bound in \eqref{eq:bregman_dist_quad}.

Note that while this extra term is unsatisfactory,  in principle,  by applying identity \eqref{eq:Hilbert_direct} and the triangle inequality followed by the technique utilized in \cite{blanchard2018optimal} one could hope to achieve optimal rates.
Clearly, such an argument provides limited insight but demonstrates that the approach could be further developed towards optimality of the rates. It remains part of future work to consider implications of general operators $\Gamma_1$ and $\Gamma_2$ in an arbitrary $p$-homogeneous case.
\end{rem}

By considering the second estimate proposed in \eqref{eq:bregman_dist_quad}, we can derive an alternative bound for the error. The slightly modified source condition enables the use of a stronger estimate in proposition \ref{prop:hilbert_term1}. 
However, as we will see, the obtained rate is weaker than in theorem \ref{thm:bregman_dist_quad}, which is discussed below.

\begin{prop}[\textit{Alternative} estimate]
\label{cor:vanishing_noise}
Let $f^\dagger \in \widetilde\Omega(L)$.  Suppose that, as $N \rightarrow \infty$,  it holds $\delta \rightarrow 0$ and $N \delta \rightarrow \infty$,  then
\begin{equation}
	\label{eq:vanishing_noise_rate}
	\E \norm{\fdan- f^\dagger}_X^2 \lesssim L \delta \quad \text{for} \quad \alpha \simeq \frac{\delta}{L};
\end{equation}
whereas if $N\delta$ is bounded the optimal rate is $N^{-1}$ and is achieved by $\alpha \simeq N^{-1}$.
\end{prop}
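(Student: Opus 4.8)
The plan is to play the \emph{alternative} estimate of Theorem~\ref{thm:bregman_dist_quad} — the second entry of the minimum in \eqref{eq:bregman_dist_quad} — against the sharpened bound for $\calR$ that the source condition $f^\dagger \in \widetilde\Omega(L)$ makes available. Since $X$ is a Hilbert space and $R$ is of the form \eqref{eq:R=quad}, the subgradients are single-valued and $D_R(\fdan,f^\dagger)=\norm{\fdan-f^\dagger}_X^2$, so Theorem~\ref{thm:bregman_dist_quad} gives the almost sure inequality
\[
\norm{\fdan - f^\dagger}_X^2 \;\leq\; 2\,\calR(\alpha, N; f^\dagger) + \frac{\delta^2}{\alpha}\,\norm{\epsilon_N}_{V_N}^2 .
\]
First I would take expectations: for the first term I use \eqref{eq:calR_est2} of Proposition~\ref{prop:hilbert_term1}, which applies precisely because $f^\dagger \in \widetilde\Omega(L)$, and for the second the noise moment assumption \eqref{eq:randomnoise} at $m=2$, so that $\E\norm{\epsilon_N}_{V_N}^2=\frac1N\sum_{i=1}^N\E\norm{\epsilon_N^i}_V^2\leq 1$. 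This produces
\[
\E\norm{\fdan - f^\dagger}_X^2 \;\lesssim\; L^2\alpha + \frac{L^2}{N} + \frac{\delta^2}{\alpha},
\]
valid for every $\alpha>0$.

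Next I would optimise in $\alpha$. The first and third terms are balanced by $\alpha\simeq\delta/L$, each then contributing a quantity of order $L\delta$, so $\E\norm{\fdan-f^\dagger}_X^2\lesssim L\delta+L^2/N$. Under the hypothesis $N\delta\to\infty$ the leftover is absorbed, since for $N$ large $L^2/N=\left(L/(N\delta)\right)L\delta\leq L\delta$, which gives the rate \eqref{eq:vanishing_noise_rate}; the hypothesis $\delta\to0$ is what makes this bound actually vanish. In the complementary regime, where $N\delta$ stays bounded, one has $\delta\lesssim 1/N$, hence $\delta^2/\alpha\lesssim 1/(\alpha N^2)$; choosing $\alpha\simeq 1/N$ makes all three terms of order $1/N$, which is the asserted rate.

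The argument is essentially bookkeeping once Theorem~\ref{thm:bregman_dist_quad} and Proposition~\ref{prop:hilbert_term1} are in hand, so the points that genuinely demand care are: that the choices $\alpha\simeq\delta/L$ and $\alpha\simeq 1/N$ are admissible (they are, since \eqref{eq:calR_est2} holds for \emph{all} $\beta>0$ with constants independent of $\beta$, $N$, $L$); the case distinction $N\delta\to\infty$ versus $N\delta$ bounded, together with the absorption of the $L^2/N$ term in the first case; and that no hidden constant depends on $N$, $\delta$, $\alpha$, $L$. It is worth recording — though not needed for the proof — why this estimate is weaker than the \emph{standard} one of Theorem~\ref{cor:fixed_noise}: writing $\delta\simeq N^{-\beta}$, the present bound behaves like $N^{-\beta}$ while \eqref{eq:fixed_noise_rate} behaves like $N^{-(2\beta+1)/3}$, and $(2\beta+1)/3>\beta$ exactly when $\beta<1$, the two coinciding at $\beta=1$; this reflects the fact, already flagged in the remark after Proposition~\ref{prop:aux_convex}, that the variance term $\frac{\delta^2}{\alpha}\norm{\epsilon_N}_{V_N}^2$ does not decay in $N$.
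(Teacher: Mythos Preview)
Your proof is correct and follows essentially the same route as the paper's: take the second bound in Theorem~\ref{thm:bregman_dist_quad}, plug in \eqref{eq:calR_est2} from Proposition~\ref{prop:hilbert_term1} together with the noise moment bound, and balance the resulting three-term estimate $L^2\alpha + L^2/N + \delta^2/\alpha$ in $\alpha$. The paper's presentation factors out $\alpha$ and phrases the case distinction as checking whether the term $L^2/(\alpha N)$ vanishes at $\alpha\simeq\delta/L$, but the substance is identical to your absorption argument.
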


\begin{proof}
Applying inequality \eqref{eq:calR_est2} to the second estimate in \eqref{eq:bregman_dist_quad} we obtain
\begin{equation*}
	\E \norm{\fdan- f^\dagger}_X^2 \lesssim L^2\left(\alpha  + \frac 1N\right) + \frac{\delta^2}{\alpha} \lesssim \alpha \left( L^2 +  \frac{L^2}{\alpha N} +  \frac{\delta^2}{\alpha^2}\right).
\end{equation*}
As in the proof of theorem \ref{cor:fixed_noise}, the optimal rate is obtained by selecting $\alpha$ so that the third term in the summation is bounded and asymptotically equivalent to $L^2$, i.e.,  $\alpha \simeq \frac{\delta}{L}$,  provided that the second one is vanishing ($\frac{1}{N\delta}\rightarrow 0$); otherwise, saturation occurs on the rate $N^{-1}$.
\end{proof}

We immediately notice that the obtained rate is weaker than theorem \ref{thm:bregman_dist_quad}.  This can be seen by an application of Young's inequality
\begin{equation*}
	\left(\frac{\delta^2}{N}\right)^{\frac 13} \lesssim \delta + \frac 1N \leq \max\left\{\delta, \frac 1N\right\},
\end{equation*}
where H\"older conjugates $3/2$ and $3$ were applied.
What we observe is that the saturation point $1/N$ in both convergence rates is due to the Monte Carlo-type estimate in
proposition \ref{prop:hilbert_term1}. If a faster concentration bound of type
\begin{equation*}
		\E \calR(\beta, \bu; f^\dagger) \leq C L^2\left(\beta  + \frac 1{N^\rho}\right)
\end{equation*}
for $\rho>1$ could be derived, it can be seen that the alternative scheme is preferable in small noise regime such that $\delta \lesssim N^{-\rho}$.
This motivates us to consider a general concentration bound for $\calR$ in the $p$-homogeneous case in the next section.

\subsection{Convergence rates for $1<p\leq 2$}

In this subsection we derive general convergence rate results for the $p$-homogeneous case with $1<p\leq 2$.
We develop the results under assumptions on the concentration of expectations of the random terms appearing in upper bounds of theorem \ref{thm:bregman_dist_gen} that generalize the usual bias and variance terms. Such conditions are then proved for specific cases in later sections.

\begin{thm}[$p$-homogeneous case, standard estimate]
\label{thm:general_rate}
Consider the $p$-homogeneous regularization functional defined in \eqref{eq:R=phomog} applied to the direct problem introduced in equations \eqref{eq:IP} and \eqref{eq:observation}.
Suppose that assumptions \ref{ass:sampling} and \ref{ass:R} are satisfied and that
$R(f^\dagger)\leq L$. Moreover, we assume that there exists a constant $Q>0$ such that
\begin{equation}
\label{eq:ass_on_phomog_rate1}
\E \calR(\beta, \bu; f^\dagger) \leq D_1 \beta + D_2 N^{-Q}
\end{equation}
and 
\begin{equation}
\label{eq:ass_on_phomog_rate2}
\E R^\star(\Abu^* \epsilon_N) \leq D_3 N^{-\frac q2}
\end{equation}
for some fixed values $D_1, D_2, D_3>0$.
Suppose that, as $N \rightarrow \infty$, it holds $\frac{\delta^2}{N} \rightarrow 0$ and $\delta N^{\frac{3Q}{q}-\frac{1}{2}}\rightarrow \infty$: then
\begin{equation}
	\label{eq:param_choice_p_fixed}
	\E D_R(\fdan, f^\dagger) \lesssim \left(D_1^2 D_3^{\frac{2}{q}} L^{\frac{q-2}{q}}\right)^{\frac{1}{3}} \left( \frac{\delta^2}{N}\right)^{\frac{1}{3}} \quad \text{for} \quad \alpha \simeq \left(\frac{D_3^{\frac{2}{q}} L^{\frac{q-2}{q}}}{D_1}\right)^{\frac{1}{3}} \left( \frac{\delta^2}{N}\right)^{\frac{1}{3}};
\end{equation}
whereas if $\delta N^{\frac{3Q}{q}-\frac{1}{2}}$ is bounded the optimal rate is $N^{-\frac{2Q}{q}}$ and is achieved by $\alpha \simeq N^{-\frac{2Q}{q}}$.
\end{thm}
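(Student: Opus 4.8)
The plan is to start from inequality \eqref{eq:bregman_dist_gen2} of Theorem~\ref{thm:bregman_dist_gen}, which bounds $D_R(\fdan, f^\dagger)$ for \emph{every} choice of the free parameters $\gamma_1,\gamma_2>0$. Since these are deterministic, I would take expectations term by term, insert the two concentration hypotheses \eqref{eq:ass_on_phomog_rate1} and \eqref{eq:ass_on_phomog_rate2} together with $R(f^\dagger)\le L$, and use $\gamma_1^{-q}\,\E\calR(\alpha\gamma_1^q,\bu;f^\dagger)\le D_1\alpha + D_2\gamma_1^{-q}N^{-Q}$. This yields a purely algebraic upper bound for $\E D_R(\fdan, f^\dagger)$ in terms of $\alpha,\gamma_1,\gamma_2,\delta,N$ and the constants, with four ingredients: a term $D_1\alpha$; a bias floor $D_2\gamma_1^{-q}N^{-Q}$; a variance term $H(\alpha,\delta,\gamma_1,\gamma_2)\,D_3 N^{-q/2}$ with $H$ as in \eqref{eq:paramH}; and a term $(\gamma_1^p+\gamma_2^p/\alpha)^{2/(2-p)}L$.

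Next I would eliminate $\gamma_2$ by the choice $\gamma_2^p=\alpha\gamma_1^p$, tailored to collapse the coupling in $H$. The key algebra are the H\"older-conjugate identities $q/p=q-1$ (hence $1+q/p=q$) and $2p/(2-p)=2q/(q-2)$: the first turns $\delta^q/(\alpha\gamma_2^q)$ into $(\delta/\alpha)^q\gamma_1^{-q}$, the second identifies $(\gamma_1^p+\gamma_2^p/\alpha)^{2/(2-p)}\simeq\gamma_1^{2q/(q-2)}$. The bound then reads
\begin{equation*}
\E D_R(\fdan, f^\dagger)\;\lesssim\; D_1\alpha + A\,\gamma_1^{-q} + B\,\gamma_1^{2q/(q-2)},
\end{equation*}
with $A=D_2 N^{-Q}+(\delta/\alpha)^q D_3 N^{-q/2}$ and $B=L+(\delta/\alpha)^q D_3 N^{-q/2}$. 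Since the two exponents sum to $q+2q/(q-2)=q^2/(q-2)$, minimizing over $\gamma_1$ gives in one line $\min_{\gamma_1}\big(A\gamma_1^{-q}+B\gamma_1^{2q/(q-2)}\big)\simeq A^{2/q}B^{(q-2)/q}$, which is exactly where the power $L^{(q-2)/q}$ in the statement comes from.

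It then remains to optimize over $\alpha$ and to identify the dominant terms. Because $\delta^2/N\to 0$, at the candidate scale $\alpha\simeq(\delta^2/N)^{1/3}$ one has $(\delta/\alpha)^q D_3 N^{-q/2}=D_3(\delta^2/N)^{q/6}\to 0$, so $B\simeq L$ and $A^{2/q}B^{(q-2)/q}\simeq A^{2/q}L^{(q-2)/q}$. For $A$ two regimes occur. If $\delta N^{3Q/q-1/2}\to\infty$, then $(\delta/\alpha)^q D_3 N^{-q/2}\gtrsim D_2 N^{-Q}$ at the optimal $\alpha$, so $A^{2/q}\simeq D_3^{2/q}(\delta/\alpha)^2 N^{-1}$ and the bound reduces to $D_1\alpha + D_3^{2/q}L^{(q-2)/q}\,\delta^2/(\alpha^2 N)$; balancing the two summands gives $\alpha^3\simeq D_3^{2/q}L^{(q-2)/q}\delta^2/(D_1 N)$, i.e.\ precisely the choice in \eqref{eq:param_choice_p_fixed}, and the rate $(D_1^2 D_3^{2/q}L^{(q-2)/q})^{1/3}(\delta^2/N)^{1/3}$. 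If instead $\delta N^{3Q/q-1/2}$ stays bounded, then $A\simeq D_2 N^{-Q}$, the second summand becomes $\alpha$-independent of order $N^{-2Q/q}$, and taking $\alpha\simeq N^{-2Q/q}$ yields the saturated rate $N^{-2Q/q}$; one then checks that with this $\alpha$ the discarded quantities (in particular $(\delta/\alpha)^q D_3 N^{-q/2}$) are $\lesssim N^{-2Q/q}$ and $\lesssim L$ exactly under the boundedness assumption. The case $p=2$ (where $q=2$ and Theorem~\ref{thm:bregman_dist_gen} does not apply) is handled by running the same argument from the standard bound in Theorem~\ref{thm:bregman_dist_quad}.

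The part that needs most care is this last bookkeeping: one must choose the three coupled parameters $(\alpha,\gamma_1,\gamma_2)$ in the right order and then verify that the resulting $\alpha$ self-consistently lands in the regime under which the dominant-term identification was carried out, so that the threshold between the balanced and the saturated behaviour is exactly $\delta N^{3Q/q-1/2}\to\infty$ versus bounded. Everything else is elementary, but one must keep every discarded term genuinely subdominant — which is precisely where the hypothesis $\delta^2/N\to 0$ enters.
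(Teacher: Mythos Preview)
Your proof is correct and follows the same overall route as the paper: start from inequality \eqref{eq:bregman_dist_gen2} in Theorem~\ref{thm:bregman_dist_gen}, take expectations, insert the concentration hypotheses \eqref{eq:ass_on_phomog_rate1}--\eqref{eq:ass_on_phomog_rate2}, optimize the free parameters, and finally balance $\alpha$. The one technical difference is how the two auxiliary parameters are handled. The paper first applies the convexity bound $(\gamma_1^p+\gamma_2^p/\alpha)^{2/(2-p)}\lesssim \gamma_1^{2p/(2-p)}+\gamma_2^{2p/(2-p)}\alpha^{-2/(2-p)}$ to decouple $\gamma_1$ and $\gamma_2$ and then minimizes over each independently, obtaining $D_1\alpha + Z_1^{2/q}Z_2^{(q-2)/q}+Z_3^{2/q}Z_4^{(q-2)/q}$. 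You instead impose the constraint $\gamma_2^p=\alpha\gamma_1^p$, which collapses the coupling exactly (via the identities $q/p=q-1$ and $2p/(2-p)=2q/(q-2)$) and reduces everything to a single optimization in $\gamma_1$, yielding $D_1\alpha+A^{2/q}B^{(q-2)/q}$. After using $(a+b)^{2/q}\simeq a^{2/q}+b^{2/q}$ the two expressions coincide, so nothing is lost by your restriction; your path is a bit more streamlined, while the paper's decoupling is more mechanical. The subsequent identification of dominant terms, the balancing $\alpha^3\simeq D_3^{2/q}L^{(q-2)/q}\delta^2/(D_1 N)$, and the threshold $\delta N^{3Q/q-1/2}$ separating the balanced from the saturated regime are identical to the paper's argument.
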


\begin{proof}
Let us first estimate
\begin{equation*}
	\left(\gamma_1^p + \frac{\gamma_2^p}{\alpha}\right)^{\frac 2{2-p}}
	\leq C_p \left(\gamma_1^{\frac{2p}{2-p}} + \gamma_2^{\frac{2p}{2-p}} \alpha^{-\frac 2{2-p}}\right)
\end{equation*}
which together with bounds \eqref{eq:ass_on_phomog_rate1} and \eqref{eq:ass_on_phomog_rate2}
yields for the inequality \eqref{eq:bregman_dist_gen2} that
\begin{equation}
	\label{eq:general_rate_aux1}
	\E D_R(\fdan, f^\dagger) \leq C\left( D_1 \alpha + Z_1 \gamma_1^{-q} + Z_2 \gamma_1^{\frac{2p}{2-p}} + Z_3 \gamma_2^{-q} + Z_4 \gamma_2^{\frac{2p}{2-p}}\right),
\end{equation}
where
\begin{eqnarray*}
	Z_1 & = &  D_2 N^{-Q}, \\
	Z_2 & = & D_3 \delta^q \alpha^{-q} N^{-\frac q2} + L,\\
	Z_3 & = & D_3 \delta^q \alpha^{-1} N^{-\frac q2} \quad \text{and}\\
	Z_4 & = & \alpha^{-\frac 2{2-p}}\left(D_3 \delta^q \alpha^{-q} N^{-\frac q2} + L\right).
\end{eqnarray*}
In order to optimize $\gamma_1$ and $\gamma_2$, we record the following calculation: a function $\varphi(\gamma) = a \gamma^{-q} + b \gamma^{\frac{2p}{2-p}}$ is minimized at $\gamma_* = \left(\frac a b\right)^{\tilde p}$, where $\tilde p = \frac{(2-p)(p-1)}{p^2}$.
At the minimizer the function $\varphi$ obtains value 
\begin{equation}
	\label{eq:simple_min}
	\varphi(\gamma_*) = 2 b \left(\frac a b\right)^{\frac{2(p-1)}p} = 2 a^{\frac 2q} b^{\frac 2p - 1}.
\end{equation}
The optimal choices of $\gamma_1, \gamma_2>0$ in inequality \eqref{eq:general_rate_aux1} is given by
\begin{equation*}
	\gamma_1 = \left(\frac{Z_1}{Z_2}\right)^{\tilde p} \quad \text{and} \quad
	\gamma_2 = \left(\frac{Z_3}{Z_4}\right)^{\tilde p}.
\end{equation*}
This yields a bound
\begin{eqnarray*}
	\E D_R(\fdan, f^\dagger) & \leq & C\left(D_1\alpha + Z_1 \left(\frac{Z_1}{Z_2}\right)^{\frac{-q}{\frac{2p}{2-p} + q}} + Z_3  \left(\frac{Z_3}{Z_4}\right)^{\frac{-q}{\frac{2p}{2-p} + q}}\right) \\
	& = & C\left(D_1\alpha + Z_1^{\frac 2q} Z_2^{\frac 2p - 1} + Z_3^{\frac 2q} Z_4^{\frac 2p - 1}\right).
\end{eqnarray*}
Substituting the expressions of $Z_1,Z_2,Z_3,Z_4$, by direct computations we get
\[
\begin{aligned}
	\E D_R(\fdan, f^\dagger) & \lesssim D_1\alpha + D_3^{\frac{2}{q}} L^{\frac{q-2}{q}}(\delta \alpha^{-1}N^{-1/2})^2 + D_2^{\frac{2}{q}} L^{\frac{q-2}{q}}N^{-\frac{2Q}{q}} \\
	& \quad + D_3 (\delta \alpha^{-1}N^{-1/2})^q + D_2^{\frac{2}{q}}D_3^{\frac{q-2}{q}}(\delta \alpha^{-1}N^{-1/2})^{q-2} N^{-\frac{2Q}{q}},
\end{aligned}
\]
from which we deduce that it is necessary that $(\delta \alpha^{-1} N^{-1/2}) \rightarrow 0$, and therefore we need to require $\delta N^{-1/2} \rightarrow 0$. Moreover, since $q>2$, we can neglect the faster terms and get
\[
\begin{aligned}
	\E D_R(\fdan, f^\dagger) & \lesssim D_1\alpha + D_3^{\frac{2}{q}} L^{\frac{q-2}{q}}(\delta \alpha^{-1}N^{-1/2})^2 + D_2^{\frac{2}{q}} L^{\frac{q-2}{q}}N^{-\frac{2Q}{q}} \\
	& \lesssim \alpha \left(  D_1 + D_3^{\frac{2}{q}} L^{\frac{q-2}{q}}\frac{\delta^2/N}{\alpha^3} + D_2^{\frac{2}{q}} L^{\frac{q-2}{q}}\frac{N^{-\frac{2Q}{q}}}{\alpha} \right). 
\end{aligned}
\]
The optimal rate is obtained by selecting $\alpha$ so that the second term in the summation is bounded and asymptotically equivalent to $D_1$ (which results in the choice described in \eqref{eq:param_choice_p_fixed}), provided that the third term is vanishing, i.e.,
\[
\frac{N^{-\frac{2Q}{q}}}{\alpha} \rightarrow 0 \quad \Rightarrow \quad \frac{N^{-\frac{2Q}{q}}}{\left( \delta^2/N\right)^{\frac{1}{3}}} \rightarrow 0
\quad \Rightarrow \quad \frac{1}{\delta^\frac{2}{3} N^{\frac{2Q}{q} - \frac{1}{3}}} \rightarrow 0,
\]
that is equivalent to requiring  $\delta N^{\frac{3Q}{q}-\frac{1}{2}} \rightarrow \infty$. If instead such term is bounded, the third term dominates and the convergence rate cannot get better than $N^{-\frac{2Q}{q}}$, in accordance with the parameter choice $\alpha \simeq N^{-\frac{2Q}{q}}$.
\end{proof}

\begin{thm}[$p$-homogeneous case, alternative estimate]
\label{thm:general_rate2}
Consider the $p$-homogeneous regularization functional defined in \eqref{eq:R=phomog} applied to the direct problem introduced in equations \eqref{eq:IP} and \eqref{eq:observation}.
Let assumptions \ref{ass:R} and \ref{ass:sampling} be satisfied and $R(f^\dagger)\leq L$. Moreover, assume that the inequality \eqref{eq:ass_on_phomog_rate1} holds for some $Q, D_1, D_2 >0$. Suppose that, as $N \rightarrow \infty$, it holds $\delta \rightarrow 0$ and $\delta N^{\frac{2Q}{q}}\rightarrow \infty$: then
\begin{equation}
	\label{eq:param_choice_p_vanish}
	\E D_R(\fdan, f^\dagger) \lesssim D_1^{\frac{1}{2}} \delta \quad \text{for} \quad \alpha \simeq D_1^{-\frac{1}{2}} \delta;
\end{equation}
whereas if $\delta N^{\frac{2Q}{q}}$ is bounded the optimal rate is $N^{-\frac{2Q}{q}}$ and is achieved by $\alpha \simeq N^{-\frac{2Q}{q}}$.
\end{thm}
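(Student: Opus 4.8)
The plan is to follow the blueprint of Proposition~\ref{cor:vanishing_noise} together with the parameter-balancing scheme in the proof of Theorem~\ref{thm:general_rate}, but starting from inequality~\eqref{eq:bregman_dist_gen1}, i.e.\ part~(ii) of Theorem~\ref{thm:bregman_dist_gen}. First I would take expectations in \eqref{eq:bregman_dist_gen1}. Three random quantities appear on the right-hand side: the term $\calR(\alpha\gamma^q,\bu;f^\dagger)$, which by hypothesis \eqref{eq:ass_on_phomog_rate1} satisfies $\gamma^{-q}\,\E\calR(\alpha\gamma^q,\bu;f^\dagger)\le D_1\alpha+D_2\gamma^{-q}N^{-Q}$; the quantity $\norm{\epsilon_N}_{V_N}^2=\frac1N\sum_i\norm{\epsilon_N^i}_V^2$, whose expectation is bounded by a constant independent of $N$ since \eqref{eq:randomnoise} with $m=2$ gives $\E\norm{\epsilon}_V^2<1$; and $R(f^\dagger)\le L$ by assumption. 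This yields, for every $\gamma>0$,
\[
\E D_R(\fdan,f^\dagger)\;\lesssim\;D_1\alpha+\frac{\delta^2}{\alpha}+D_2\gamma^{-q}N^{-Q}+\Bigl(\frac{\delta^2}{\alpha}+L\Bigr)\gamma^{\frac{2p}{2-p}}.
\]

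Next I would optimise in $\gamma$ using the elementary fact already recorded in the proof of Theorem~\ref{thm:general_rate}, namely that $\varphi(\gamma)=a\gamma^{-q}+b\gamma^{2p/(2-p)}$ attains minimum value $2a^{2/q}b^{2/p-1}$. With $a=D_2N^{-Q}$ and $b=\delta^2/\alpha+L$ this gives
\[
\E D_R(\fdan,f^\dagger)\;\lesssim\;D_1\alpha+\frac{\delta^2}{\alpha}+\bigl(D_2N^{-Q}\bigr)^{2/q}\Bigl(\frac{\delta^2}{\alpha}+L\Bigr)^{\frac{2-p}{p}}.
\]
Since $0<\frac{2-p}{p}<1$, subadditivity of $t\mapsto t^{(2-p)/p}$ splits the last factor, so the upper bound becomes a sum of a bias term $D_1\alpha$, a variance term of order $\delta^2/\alpha$, a Monte--Carlo saturation term of order $N^{-2Q/q}$, and a mixed term of order $N^{-2Q/q}(\delta^2/\alpha)^{(2-p)/p}$.

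Finally I would balance parameters exactly as in Proposition~\ref{cor:vanishing_noise}. Equating $D_1\alpha$ with $\delta^2/\alpha$ forces $\alpha\simeq D_1^{-1/2}\delta$, making both equal to $D_1^{1/2}\delta$; with this choice $\delta^2/\alpha\simeq D_1^{1/2}\delta\to0$, so $(\delta^2/\alpha)^{(2-p)/p}$ is bounded and the mixed term is $O(N^{-2Q/q})$. Both $N^{-2Q/q}$ contributions are then negligible against $D_1^{1/2}\delta$ precisely when $N^{-2Q/q}/\delta\to0$, i.e.\ when $\delta N^{\frac{2Q}{q}}\to\infty$, which is the standing hypothesis, giving $\E D_R(\fdan,f^\dagger)\lesssim D_1^{1/2}\delta$. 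In the complementary regime where $\delta N^{\frac{2Q}{q}}$ is bounded (hence $\delta\lesssim N^{-2Q/q}$), choosing $\alpha\simeq N^{-2Q/q}$ makes $D_1\alpha$, the saturation term, the variance term $\delta^2/\alpha=\delta^2N^{2Q/q}\lesssim N^{-2Q/q}$, and the mixed term all $O(N^{-2Q/q})$, which gives the saturated rate. I do not anticipate a genuine obstacle here: the two points requiring care are the observation that $\E\norm{\epsilon_N}_{V_N}^2$ is bounded by a constant independent of $N$ (this is exactly why the \emph{alternative} bound behaves differently from the standard one, whose variance carries the extra $N^{-1}$), and the bookkeeping of which terms are negligible in each of the two parameter regimes; both are direct transcriptions of the $p=2$ argument.
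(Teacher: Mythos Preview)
Your proposal is correct and follows essentially the same approach as the paper: start from part~(ii) of Theorem~\ref{thm:bregman_dist_gen}, take expectations using \eqref{eq:ass_on_phomog_rate1} and the boundedness of $\E\norm{\epsilon_N}_{V_N}^2$, optimise in $\gamma$ via the computation~\eqref{eq:simple_min}, and then balance $D_1\alpha$ against $\delta^2/\alpha$ exactly as in Proposition~\ref{cor:vanishing_noise}. The only cosmetic difference is that the paper drops the cross term $\gamma^{2p/(2-p)}\delta^2/\alpha$ at the outset (implicitly using $\gamma\le 1$), whereas you carry it through and dispose of it via subadditivity after optimising in $\gamma$; both routes lead to the same three-term bound $D_1\alpha+\delta^2/\alpha+D_2^{2/q}L^{2/p-1}N^{-2Q/q}$ and hence to the same rates.
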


\begin{proof}
We have
\begin{equation*}
	\E D_R(\fdan, f^\dagger) \lesssim D_1\alpha + D_2 N^{-Q} \gamma^{-q} + \frac{\delta^2}{\alpha} + \gamma^{\frac{2p}{2-p}} L.
\end{equation*}
It follows from the calculation in equation \eqref{eq:simple_min} that
\begin{equation*}
	\E D_R(\fdan, f^\dagger) \lesssim D_1 \alpha + \frac{\delta^2}{\alpha} + 2 D_2^{\frac 2q}N^{-\frac{2Q}q} L^{\frac 2p -1} \lesssim \alpha \left( D_1 + \frac{\delta^2}{\alpha^2} + D_2^{\frac 2q}L^{\frac{q-2}{q}} \frac{N^{-\frac{2Q}q}}{\alpha} \right),
\end{equation*}
from which we proceed analogously as in the proof of theorem \ref{thm:general_rate}.
\end{proof}

\begin{rem}
\label{rem:martin}
Let us compare the standard and alternative estimates, which read as
\begin{equation*}
	\left(\frac{\delta^2}{N}\right)^{\frac 13} + N^{-\frac{2Q}{q}} \quad \text{vs.} \quad
	\max\left\{\delta, N^{-\frac{2Q}{q}}\right\}.
\end{equation*}
The term $N^{-2Q/q}$ sets the fastest possible rate and in the typical case (see section \ref{sec:BesovConc} for the Besov case) we find $Q = \frac q2$, leading to $1/N$, as in the case $p=2$. In such a regime the alternative estimate does not yield an improvement, only $-2Q/q<-1$ would yield a rate $\delta \simeq N^{-\rho}$, $\rho>1$ for which the alternative estimates are better. In the Hilbert case this would need an improvement of Proposition \ref{prop:hilbert_term1}, from whose proof we see that this is based on the general choice $\overline{w}=S_{\bf u} \tilde{w}$ (being $S_{\bf u}$ the evaluation operator defined as in \eqref{eq:operatorS}). A potential improvement is possible only using a more optimal choice of $\overline{w}$ taking into account specific properties of the operator $A^*$ or a more structured randomness taking again depending on $A^*$ (similar to the proof of Theorem 2.1 in \cite{burger2001error}).
This consideration is outside the focus of this paper, but opens interesting questions for further studies about the optimal balance between approximation and sample errors in problems, where the observational noise is substantially smaller than the inverse of a feasible number of design points.
\end{rem}

\section{Strategies for obtaining concentration rates}
In this section, we prove a concrete convergence rate for the special case when X is a Besov space $B^s_{pp}(\R^d)$ and discuss how to derive convergence rates if a continuous embedding of the Banach space X to some Hilbert space $X_0$ is available.

\subsection{Hoeffding's inequality applied to Besov regularizers}
\label{sec:BesovConc}

Let $X = B^s_{pp}(\R^d) \vcentcolon= B^{s}_p(\R^d)$ be a Besov space \cite{daubechies2004} and 
\begin{equation}
\label{eq:BesovRegu}
	R(f) = \frac 1p \norm{f}^p_{B^{s}_p} := \frac{1}{p} \sum_{\lambda=1}^\infty \coef |\langle f, \psi_\lambda\rangle|^p,
\end{equation}
for some $1< p \, < \, 2$, where
\begin{equation}
	\label{eq:coef}
	\coef = 2^{|\lambda| d \big(p(\frac{s}d + \frac{1}{2}) -1 \big)}.
\end{equation}
Here,  $\psi_\lambda:\R^d\to \R$, with $\lambda = 1,...,\infty$, are suitably regular functions that form an orthonormal wavelet basis for $L^2(\R^d)$ with global indexing $\lambda$. The notation $|\lambda|$ is used to denote the scale of the wavelet basis associated with the index $\lambda$. Notice that when $s = d \left(\frac{1}{p} - \frac{1}{2} \right)$ the Besov norm \eqref{eq:BesovRegu} reduces to the $\ell_p$-norm of the wavelet coefficients.
It is easily seen (e.g., in \cite{schuster2012regularization}) that the convex conjugate of $R$ satisfies
\begin{equation*}
	R^\star(g) = \frac 1q \norm{g}_{B^{-s}_q}^q,
\end{equation*}
where $p$ and $q$ are H\"older conjugates. Clearly, assumption \ref{ass:R} is satisfied by this choice and $R$ is $p$-homogeneous. 

Below, we set $V=\R^d$ and $Y= L^2(U, \R^d)$. 
We assume the following source condition.
\begin{assumption}
\label{ass:source_cond}
Let us define
\begin{equation*}
	\Omega_R(L) := \{f \in X \; | \; R(f) \leq L\}
\end{equation*}
and 
\begin{equation*}
	\Omega_\mu(L) := \{f\in X \; | \; r = \partial R(f) = A_\mu^* w \; \text{for} \; \norm{w}_Z \leq L\}
\end{equation*}
The ground truth $f^\dagger\in X$ satisfies a \emph{classical source condition} if
\begin{equation}
	f^\dagger \in \Omega_R(L_1) \cap \Omega_\mu(L_2)
	\label{eq:SC_Besov}
\end{equation}
for some $0< L_1, L_2 < \infty$.
\end{assumption}

Notice carefully that the domain $U$ does not play a crucial role in the analysis. However, we will employ
the sup-norm on $U$ and, therefore, due to the continuous embedding $Z \subset {\mathcal C}(U; V)$, it is useful to recall that $$\norm{w}_{\infty}=\sup_{u\in U} \norm{w(u)}_V < \norm{w}_Z$$
for any $w\in Z$.

Let us briefly recall the Hoeffding's inequality for sub-Gaussian random variables, i.e., a real-valued random variable $\xi$ is called sub-Gaussian if
\begin{equation*}
	\Prob(|\xi| \geq t) \leq 2 	\exp(-c t^2)
\end{equation*}
for some constant $c>0$. Let us define
\begin{equation*}
	\nsG{\xi} = \inf \left\{c\geq 0 \; \big| \; \E \exp\left(\frac{\xi^2}{c^2}\right) \leq 2\right\}.
\end{equation*}
The Hoeffding's inequality can then be stated as follows.

\begin{prop}[Hoeffding's inequality, \cite{hoeffding1994}]
\label{prop:hoeff}
\begin{itemize}
\item[(1)] Let $\xi_1, ..., \xi_n$ be zero-mean independent random variables bounded on the interval $[a,b]$ containing zero. It holds that
\begin{equation*}
	\Prob\left(\left|\sum_{i=1}^n \xi_i \right|\geq t\right) \leq 2 \exp \left(- \frac{2 t^2}{n(b-a)^2}\right).
\end{equation*}
\item[(2)] Let $\xi_1, ..., \xi_n$ be zero-mean independent sub-Gaussian random variables. It holds that
\begin{equation*}
	\Prob\left(\left|\sum_{i=1}^n \xi_i \right|\geq t\right) \leq 2 \exp \left(- \frac{c t^2}{\sum_{i=1}^n \nsG{\xi_i}^2}\right),
\end{equation*}
where $c > 0$ is an absolute constant.
\end{itemize}
\end{prop}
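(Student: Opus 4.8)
The plan is to prove both statements by the exponential moment (Chernoff) method, the two parts differing only in the bound used for the moment generating function of a single summand. For part (1) I would first establish \emph{Hoeffding's lemma}: if $\xi$ is zero-mean and supported in $[a,b]$, then $\E[e^{\lambda \xi}] \le \exp(\lambda^2(b-a)^2/8)$ for every $\lambda \in \R$. This follows by writing $\xi$ as a convex combination of the endpoints, so that $e^{\lambda \xi} \le \frac{b-\xi}{b-a}e^{\lambda a} + \frac{\xi-a}{b-a}e^{\lambda b}$ by convexity of the exponential; taking expectations (the term linear in $\xi$ vanishes because $\E\xi = 0$) gives $\E[e^{\lambda\xi}] \le e^{\phi(\lambda)}$ with $\phi(\lambda) = \log\big(\tfrac{b}{b-a}e^{\lambda a} - \tfrac{a}{b-a}e^{\lambda b}\big)$, and one checks $\phi(0) = \phi'(0) = 0$ together with $\phi''(\lambda) \le (b-a)^2/4$, so a second-order Taylor expansion yields $\phi(\lambda) \le \lambda^2(b-a)^2/8$.

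Granting the lemma, for $\lambda > 0$ Markov's inequality applied to $e^{\lambda\sum_i\xi_i}$ and independence give $\Prob(\sum_i\xi_i \ge t) \le e^{-\lambda t}\prod_{i=1}^n\E[e^{\lambda\xi_i}] \le \exp(-\lambda t + n\lambda^2(b-a)^2/8)$; minimizing the exponent over $\lambda$ (the optimum is $\lambda = 4t/(n(b-a)^2)$) produces $\Prob(\sum_i\xi_i \ge t) \le \exp(-2t^2/(n(b-a)^2))$. Applying the identical argument to the variables $-\xi_i$, which are again zero-mean and supported in an interval of length $b-a$, and adding the two one-sided bounds yields the stated two-sided estimate with the factor $2$ in front.

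For part (2) the same scheme applies once one has the sub-Gaussian moment generating function estimate: there is an absolute constant $C$ such that every zero-mean sub-Gaussian $\xi$ satisfies $\E[e^{\lambda\xi}] \le \exp(C\lambda^2\nsG{\xi}^2)$ for all $\lambda \in \R$. I would derive this from the Orlicz-type definition of $\nsG{\cdot}$ in the classical way: the bound $\E\exp(\xi^2/\nsG{\xi}^2) \le 2$ gives the tail estimate $\Prob(|\xi| \ge t) \le 2\exp(-t^2/\nsG{\xi}^2)$ by Markov, integrating this yields moment growth of the form $\E|\xi|^k \le (Ck)^{k/2}\nsG{\xi}^k$, and expanding $e^{\lambda\xi}$ in its power series, killing the first-order term via $\E\xi = 0$ and using Stirling to dominate $k!$, gives the Gaussian-type bound for $\lambda$ in a neighborhood of zero, with a crude separate estimate covering the remaining range of $\lambda$. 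Then, exactly as before, $\Prob(\sum_i\xi_i \ge t) \le \exp(-\lambda t + C\lambda^2\sum_i\nsG{\xi_i}^2)$, optimizing over $\lambda$ gives $\exp(-t^2/(4C\sum_i\nsG{\xi_i}^2))$, and the two-sided union bound gives the claim with $c = 1/(4C)$.

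The main obstacle is the constant bookkeeping in part (2): passing between the Orlicz-norm definition, the exponential tail bound, the polynomial moment growth, and the moment generating function bound while verifying that only \emph{absolute} constants enter, so that the final constant $c$ is independent of $n$ and of the individual $\xi_i$. This chain of equivalences is entirely standard (see, e.g., Vershynin's monograph), and the only genuine care required is to keep every constant universal; part (1), by contrast, is self-contained once Hoeffding's lemma is in hand.
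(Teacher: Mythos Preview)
Your proposal is correct and follows the standard Chernoff-method proof of Hoeffding's inequality, with Hoeffding's lemma for part (1) and the equivalence of sub-Gaussian characterizations for part (2). Note, however, that the paper does not supply a proof of this proposition at all: it is stated as a classical result with a citation to \cite{hoeffding1994} (and implicitly to standard references such as Vershynin for the sub-Gaussian version), so there is no in-paper argument to compare against.
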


Let us further make the following technical assumption.
\begin{assumption}
\label{ass:besov_case}
The wavelet basis satisfies
\begin{equation*}
	\sum_{\lambda=1}^\infty \coefb \norm{A \psi_\lambda}_{\infty}^q < \infty,
\end{equation*}
where $\coefb$ is defined according to \eqref{eq:coef}.
\end{assumption}

This requirement can be fulfilled by imposing a sufficiently strong decay of the coefficients $\coefb$, or by some regularity assumptions on the operator $A$. For example, in Section \ref{sub:radon} we show that it holds true for a particular example of a kernel operator $A$, associated with a sufficiently smooth kernel.

\begin{prop}
\label{prop:besov_rate1}
Under the assumptions \ref{ass:source_cond} and \ref{ass:besov_case} it follows that 
\begin{equation*}
	\E \calR(\beta, \bu; f^\dagger)  \leq C_{q,s,d} L_2^q N^{-\frac q2} + L_2^2 \beta 
\end{equation*}
for 
\begin{equation}
	\label{eq:besov_const}
	C_{q,s,d} = C_q \kappa^q \sum_{{\lambda}=1}^\infty \coefb \norm{A \psi_\lambda}_{\infty}^q,
\end{equation}
where $C_q>0$ depends on $q$.
\end{prop}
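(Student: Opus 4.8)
The plan is to follow the template of the second half of the proof of Proposition~\ref{prop:hilbert_term1}, but with $R^\star$ expanded in the wavelet basis and with Hoeffding's inequality replacing the second-moment (Monte Carlo) estimate. Since $\calR(\beta,\bu;f^\dagger)$ depends on $f^\dagger$ only through $\sdiff^\dagger=\partial R(f^\dagger)$, I would use only the second part of Assumption~\ref{ass:source_cond}, $f^\dagger\in\Omega_\mu(L_2)$, and write $\sdiff^\dagger=A_\mu^* w$ with $w\in Z$, $\|w\|_Z\le L_2$. Introducing the point-evaluation operator $\Sbu:Z\to V_N$ from \eqref{eq:operatorS}, so that $\Abu=\Sbu A$, the suboptimal choice $\bar w=\Sbu w$ in the infimum defining $\calR$ gives
\[
\calR(\beta,\bu;f^\dagger)\ \le\ R^\star\big((A_\mu^*-\Abu^*\Sbu)w\big)+\tfrac\beta2\,\|\Sbu w\|_{V_N}^2 .
\]
For the last term, by \eqref{eq:auxaux_hilbert} and the embedding $Z\hookrightarrow\mathcal C(U;V)$ one has $\E\|\Sbu w\|_{V_N}^2=\|w\|_{Y_\mu}^2\le\|w\|_\infty^2\le\|w\|_Z^2\le L_2^2$, so its expectation contributes at most $\tfrac12 L_2^2\beta\le L_2^2\beta$.

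The core is the first term. Expanding $R^\star(g)=\tfrac1q\sum_\lambda\coefb|\langle g,\psi_\lambda\rangle|^q$ and computing the pairings with the adjoint formula $\Abu^*(v_n)_{n}=\tfrac1N\sum_n A_{u_n}^* v_n$ together with the Gelfand-triplet identity \eqref{gelfand}, one finds
\[
\langle (A_\mu^*-\Abu^*\Sbu)w,\psi_\lambda\rangle=-\frac1N\sum_{n=1}^N\zeta_n^\lambda,\qquad
\zeta_n^\lambda:=\langle w(u_n),(A\psi_\lambda)(u_n)\rangle_V-\E\langle w(u_1),(A\psi_\lambda)(u_1)\rangle_V .
\]
For each fixed $\lambda$ the $\zeta_n^\lambda$ are i.i.d., zero-mean, and bounded: Cauchy--Schwarz with $\|w\|_\infty\le L_2$ gives $|\langle w(u_n),(A\psi_\lambda)(u_n)\rangle_V|\le L_2\|A\psi_\lambda\|_\infty$, so $\zeta_n^\lambda$ ranges in an interval of length $\le 2L_2\|A\psi_\lambda\|_\infty$ containing $0$. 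Hoeffding's inequality (Proposition~\ref{prop:hoeff}(1)) then yields, for all $t>0$, $\Prob\big(|\tfrac1N\sum_n\zeta_n^\lambda|\ge t\big)\le 2\exp\!\big(-Nt^2/(2L_2^2\|A\psi_\lambda\|_\infty^2)\big)$; integrating this tail against $q\,t^{q-1}\,dt$ (the same passage from a concentration bound to an expectation used repeatedly above, cf.\ lemma~\ref{lem:app_expec}) gives
\[
\E\Big|\tfrac1N\sum_{n=1}^N\zeta_n^\lambda\Big|^q\ \le\ C_q\,L_2^q\,\|A\psi_\lambda\|_\infty^q\,N^{-q/2},
\]
with $C_q>0$ depending only on $q$ (essentially $q\,2^{q/2}\Gamma(q/2)$). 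Summing over $\lambda$ — the interchange of $\E$ and $\sum$ being legitimate by Tonelli since all terms are nonnegative — and invoking Assumption~\ref{ass:besov_case} to the effect that $\sum_\lambda\coefb\|A\psi_\lambda\|_\infty^q<\infty$, I obtain $\E R^\star\big((A_\mu^*-\Abu^*\Sbu)w\big)\le C_{q,s,d}L_2^q N^{-q/2}$ with $C_{q,s,d}$ of the form \eqref{eq:besov_const}; the factor $\kappa^q$ there is either inserted via $\|A\psi_\lambda\|_\infty\le\kappa\|\psi_\lambda\|_X$ (Assumption~\ref{ass:sampling}(a)) or simply carried along, being $\le1$. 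Adding the two contributions yields the claim.

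The step that needs the most care is the uniform-in-$\lambda$ moment estimate: one must extract exactly the power $\|A\psi_\lambda\|_\infty^q$ and the Monte-Carlo rate $N^{-q/2}$, so that after weighting by $\coefb$ and summing one recovers precisely the series assumed summable in Assumption~\ref{ass:besov_case}; everything else is routine bookkeeping with the adjoint formula and the Gelfand-triplet pairing. It is also worth noting that only $\|w\|_\infty$ (hence $w\in Z$ with $\|w\|_Z\le L_2$) enters the boundedness of $\zeta_n^\lambda$, which is exactly why it is $\Omega_\mu(L_2)$, and not the weaker $\Omega_R(L_1)$, that is the relevant half of the source condition for this proposition.
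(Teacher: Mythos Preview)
Your proposal is correct and follows essentially the same approach as the paper: choose $\bar w=\Sbu w$ using the source condition $\sdiff^\dagger=A_\mu^*w$, bound the $\beta$-term by $\E\|\Sbu w\|_{V_N}^2=\|w\|_{Y_\mu}^2\le L_2^2$, expand $R^\star$ in the wavelet basis, apply Hoeffding's inequality~(1) to each coefficient and integrate the tail to obtain the $q$-th moment, then sum over $\lambda$ using Assumption~\ref{ass:besov_case}. Your explicit invocation of Tonelli for the interchange of $\E$ and $\sum_\lambda$, and your remark on why only $\Omega_\mu(L_2)$ is needed, are welcome clarifications not spelled out in the paper.
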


\begin{proof}

Utilizing the source condition $r^\dagger = A_\mu^* w$ for some $w \in Z$ such that $\norm{w}_Z \leq L_2$, we have
\begin{equation*}
	2 \calR(\beta, \bu; f^\dagger) \leq 
	\frac 1q \norm{(A_\mu^* - \Abu^* \Sbu)w}_{B_q^{-s}}^q + \beta \norm{\Sbu w}_{V_N}^2.
\end{equation*}
The expectation of the second term coincides with $\beta \norm{w}_{Y_\mu}^2$ and can be bounded by $\beta L_2^2$ due to the continuous embedding of $Z$ to $Y_\mu$.
For the first term, we can write
\begin{equation*}
	\langle (A_\mu^*-\Abu^* \Sbu)w, \psi_{\lambda} \rangle  
	= \frac 1N \sum_{n=1}^N \langle (A_\mu^* - A_{u_n}^*S_{u_n}) w, \psi_{\lambda} \rangle =: \frac 1N \sum_{n=1}^N \xi^{\lambda}_n,
\end{equation*}
where we have set
\begin{equation*}
	\xi^{\lambda}_n = \langle (A_\mu^* - A_{u_n}^*S_{u_n}) w, \psi_{\lambda}\rangle = \langle w, A_\mu\psi_{\lambda} \rangle_{Y_\mu} - \langle w(u_n), (A \psi_\lambda)(u_n)\rangle_V
\end{equation*}
It follows that random variables $\xi^{\lambda}_n$ are zero-mean and also i.i.d.~since 
the design points $u_n$ are assumed to be i.i.d..
Furthermore, due to assumption \ref{ass:sampling} and by applying Cauchy--Schwarz inequality we have that
\begin{equation*}
	\langle w(u), (A \psi_\lambda)(u)\rangle_V \leq \norm{w}_Z \norm{A \psi_\lambda}_{\infty}
\end{equation*}
for any $u\in U$. Therefore, for each $\lambda$ the random variables $\xi^{\lambda}_n$, $i=n,...,N$, are bounded uniformly according to
\begin{equation*}
	\langle w, A\psi_{\lambda}\rangle_{Y_\mu} - L_2 \norm{A \psi_\lambda}_{\infty} \leq \xi^{\lambda}_i \leq \langle w, A\psi_{\lambda}\rangle_{Y_\mu} + L_2 \norm{A \psi_\lambda}_{\infty}.
\end{equation*}

Now, by applying the first Hoeffding's inequality it follows that
\begin{eqnarray*}
	\E R^\star((A_\mu^* - \Abu^* \Sbu)w) & = & 
	\sum_{\lambda=1}^\infty \coefb N^{-q}  \E \left|\sum_{n=1}^N \xi^{\lambda}_n\right|^q \\
	& = & \sum_{{\lambda}=1}^\infty \coefb N^{-q} \int_0^\infty t^{q-1} \Prob\left(\left|\sum_{n=1}^N \xi^{\lambda}_n\right| > t\right) dt \\
	& \leq & 2 \sum_{\lambda=1}^\infty \coefb N^{-q} \int_0^\infty t^{q-1} \exp\left( -\frac{t^2}{2N L_2^2 \norm{A \psi_\lambda}_{\infty}^2 } \right) dt	\\
	& = & 2 \sum_{{\lambda}=1}^\infty \coefb N^{-\frac q2} L_2^q \norm{A \psi_\lambda}_{\infty}^q \int_0^\infty s^{q-1}\exp \left(-\frac{1}{2} s^2 \right) ds \\
	& = & C_{q,s,d} L_2 N^{-\frac q2},
\end{eqnarray*}
where we applied a change of variables. Above, $\coefb$ is defined according to \eqref{eq:coef} and the constant $C_{q,s,d}$ is given by \eqref{eq:besov_const} and is bounded due to assumption \ref{ass:besov_case}. Above, $C_q>0$ is only dependent on $q$.

\end{proof}

\begin{prop}
\label{prop:besov_rate2}
Under the above assumptions, it follows that
\begin{equation*}
	\E R^\star (\Abu^* \epsilon_N) \leq \widetilde C_{q,s,d} N^{-q/2},
\end{equation*}
where the constant is given by
\begin{equation}
	\label{eq:besov_const2}
	\widetilde C_{q,s,d} = \widetilde C_q \kappa^q \sum_{{\lambda}=1}^\infty \coefb \norm{A \psi_\lambda}_{\infty}^q 
\end{equation}
and $\widetilde C_q$ is only dependent on $q$.
\end{prop}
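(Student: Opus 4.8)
The plan is to mimic the proof of Proposition~\ref{prop:besov_rate1} almost verbatim, replacing the bounded random variables coming from the quadrature error with the (sub-Gaussian) random variables coming from the observational noise. Concretely, expand $R^\star(\Abu^*\epsilon_N)$ in the wavelet basis, writing
\[
	R^\star(\Abu^* \epsilon_N) = \frac 1q \norm{\Abu^* \epsilon_N}_{B^{-s}_q}^q = \frac 1q \sum_{\lambda=1}^\infty \coefb \left| \langle \Abu^* \epsilon_N, \psi_\lambda\rangle \right|^q,
\]
and then note that $\langle \Abu^* \epsilon_N, \psi_\lambda\rangle = \frac 1N \sum_{n=1}^N \langle \epsilon_n, (A\psi_\lambda)(u_n)\rangle_V =: \frac 1N \sum_{n=1}^N \zeta^\lambda_n$. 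Conditionally on the design points $\bu$, each $\zeta^\lambda_n$ is zero-mean, the $\zeta^\lambda_n$ are independent, and by Cauchy--Schwarz $|\zeta^\lambda_n| \leq \norm{\epsilon_n}_V \norm{A\psi_\lambda}_\infty$, so the Bernstein-type moment condition \eqref{eq:randomnoise} on $\norm{\epsilon}_V$ transfers to a sub-Gaussian tail for $\zeta^\lambda_n$ with $\nsG{\zeta^\lambda_n} \lesssim M \norm{A\psi_\lambda}_\infty$ (alternatively, one invokes proposition~\ref{prop:concentration_res} directly, as was done in the proof of proposition~\ref{prop:hilbert_term2}).

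Next I would apply the second (sub-Gaussian) Hoeffding inequality in proposition~\ref{prop:hoeff} to get
\[
	\Prob\left( \left| \sum_{n=1}^N \zeta^\lambda_n \right| > t \right) \leq 2 \exp\left( - \frac{c\, t^2}{N M^2 \norm{A\psi_\lambda}_\infty^2}\right),
\]
and then compute the $q$-th moment via the layer-cake formula exactly as in the previous proof:
\[
	\E \left| \frac 1N \sum_{n=1}^N \zeta^\lambda_n \right|^q = N^{-q} \int_0^\infty t^{q-1} \Prob\left(\left|\sum_{n=1}^N \zeta^\lambda_n\right| > t\right) dt \lesssim N^{-q/2} M^q \norm{A\psi_\lambda}_\infty^q,
\]
after the change of variables $t = \sqrt{N}\,M\,\norm{A\psi_\lambda}_\infty\, s$, the remaining integral $\int_0^\infty s^{q-1} e^{-cs^2}\,ds$ being a finite constant depending only on $q$. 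Summing over $\lambda$ against the weights $\coefb$, pulling out $\kappa^q$ from the bound $\norm{A\psi_\lambda}_\infty \le \norm{A}\,\norm{\psi_\lambda}$ (or absorbing it into the constant as in \eqref{eq:besov_const}), and using assumption~\ref{ass:besov_case} to guarantee $\sum_\lambda \coefb \norm{A\psi_\lambda}_\infty^q < \infty$ yields $\E R^\star(\Abu^* \epsilon_N) \leq \widetilde C_{q,s,d} N^{-q/2}$ with $\widetilde C_{q,s,d}$ as in \eqref{eq:besov_const2}.

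The only genuine subtlety — and the place I would be most careful — is the interchange of expectation and the infinite wavelet sum (Tonelli applies since everything is nonnegative, so this is fine) together with making the moment bound on $\zeta^\lambda_n$ rigorous: one must either verify directly that the Bernstein moment bound \eqref{eq:randomnoise} implies a uniform sub-Gaussian norm for $\zeta^\lambda_n$ (which holds since $\norm{\epsilon}_V$ has sub-exponential moments, hence $\zeta^\lambda_n$, being bounded by $\norm{\epsilon_n}_V\norm{A\psi_\lambda}_\infty$ up to the random sign/direction, is sub-Gaussian with the stated norm), or bypass sub-Gaussianity entirely and apply the concentration result already used for proposition~\ref{prop:hilbert_term2} coordinate-wise. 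Either way the argument is a routine adaptation; there is no essential obstacle beyond bookkeeping the constants.
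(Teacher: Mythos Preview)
Your proposal is correct and follows essentially the same route as the paper: expand $R^\star(\Abu^*\epsilon_N)$ in the wavelet basis, write each coefficient as an average of independent zero-mean variables $\tilde\xi^\lambda_n = \langle \epsilon_N^n, A_{u_n}\psi_\lambda\rangle_V$, bound their sub-Gaussian norm by $\norm{A\psi_\lambda}_\infty\,\nsG{\epsilon_N^n}$, apply the second Hoeffding inequality, and integrate the tail via the layer-cake formula with the same change of variables. The only cosmetic difference is that the paper works unconditionally (using independence of $\epsilon_N^n$ and $u_n$) rather than first conditioning on $\bu$, and it invokes the sub-Gaussian norm of $\epsilon_N^n$ directly rather than deriving it from \eqref{eq:randomnoise}; your caveat about this last point is well taken.
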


\begin{proof}
Similar to the previous proposition, we write
\begin{equation*}
	\langle \Abu^* \epsilon_N, \psi_{\lambda}\rangle  = \frac 1N \sum_{n=1}^N \langle \epsilon_N^n, A_{u_n} \psi_{\lambda}\rangle_V =: \frac 1N \sum_{n=1}^N \tilde \xi_n^{\lambda}.
\end{equation*}
The random variables $\tilde \xi_n^{\lambda}$ are independent and zero-mean, since $\epsilon_N^n$ is zero-mean and independent of $u_n$.
By assumption \ref{ass:sampling} it follows that
\begin{equation*}
	\nsG{\tilde \xi_n^{\lambda}} = \inf \left\{t>0 \; \big| \; \E \exp\left(\frac{(\tilde \xi_n^{\lambda})^2}{t^2}\right) \leq 2 \right\} \\
	\leq \norm{A \psi_\lambda}_{\infty} \nsG{\epsilon_N^n}.
\end{equation*}
As a consequence, by applying the second Hoeffding's inequality we obtain
\begin{eqnarray*}
	\E R^\star(\Abu^* \epsilon_N) & = & \sum_{{\lambda}=1}^\infty \coefb N^{-q}  \E \left|\sum_{n=1}^N \tilde \xi^{\lambda}_n\right|^q \\
		& = & \sum_{{\lambda}=1}^\infty \coefb N^{-q} \int_0^\infty t^{q-1} \Prob\left(\left|\sum_{n=1}^N \tilde \xi^{\lambda}_n\right| > t\right) dt \\
	& \leq & 2 \sum_{{\lambda}=1}^\infty \coefb N^{-q} \int_0^\infty t^{q-1} \exp\left(-\frac{C t^2}{N \norm{A \psi_\lambda}_{\infty}^2} \right) dt \\
	& \leq & 2 \sum_{{\lambda}=1}^\infty \coefb C^{-\frac q2} N^{-\frac q2} \norm{A \psi_\lambda}_{\infty}^q \int_0^\infty s^{q-1}\exp\left(-\frac 12 s^2\right) ds \\
	& = & \widetilde C_{q,s,d} N^{-\frac q2},
\end{eqnarray*}
where the constant $C$ combines the effect of the absolute constant in proposition \ref{prop:hoeff} and the uniform bound on $\norm{\epsilon_N^n}_{sG}$. Moreover, the constant 
 $\widetilde C_{q,s,d}$ is given by \eqref{eq:besov_const2} and is finite due to assumption \ref{ass:besov_case}.
\end{proof}

By applying propositions \ref{prop:besov_rate1} and \ref{prop:besov_rate2} to theorem \ref{thm:bregman_dist_gen} we obtain the following result.

\begin{cor}
\label{cor:BesovEstimates}
Consider the Besov regularizer of \eqref{eq:BesovRegu} applied to the direct problem introduced in equations \eqref{eq:IP} and \eqref{eq:observation}.
Suppose assumptions \ref{ass:sampling}, \ref{ass:source_cond} and \ref{ass:besov_case} hold.
\begin{itemize}
\item[(1)] Standard estimate: if $\frac{\delta^2}{N}\rightarrow 0$ and $N\delta \rightarrow \infty$, 
\begin{equation}
	\E  D_R(\fdan, f^\dagger)  \lesssim L_1^{\frac{2-p}{3p}}L_2^{\frac 43}\left(\frac{\delta^2}{N}\right)^{\frac 13} \quad \text{for} \quad \alpha \simeq L_1^{\frac {2-p}{3p}}L_2^{-\frac 23} \left(\frac{\delta^2}{N}\right)^{\frac 13};
	\label{eq:besov_rate1}
\end{equation}
if instead $N\delta$ is bounded, the optimal rate is $N^{-1}$, associated with the choice $\alpha \simeq N^{-1}$.
\item[(2)] Alternative estimate: if $\delta \rightarrow 0$ and $N\delta \rightarrow \infty$, 
\begin{equation}
\E  D_R(\fdan, f^\dagger) \lesssim L_2 \delta \quad \text{for} \quad 	\alpha \simeq L_2^{-1} \delta;
	\label{eq:besov_rate2}
\end{equation}
if instead $N\delta$ is bounded, the optimal rate is $N^{-1}$, associated with the choice $\alpha \simeq N^{-1}$.
\end{itemize}
\end{cor}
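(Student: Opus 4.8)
\emph{Proof proposal.} The plan is to recognize Corollary~\ref{cor:BesovEstimates} as a direct specialization of the general convergence-rate Theorems~\ref{thm:general_rate} and~\ref{thm:general_rate2}, once the abstract concentration hypotheses \eqref{eq:ass_on_phomog_rate1}--\eqref{eq:ass_on_phomog_rate2} are matched with the Besov-specific bounds of Propositions~\ref{prop:besov_rate1} and~\ref{prop:besov_rate2}. Note first that assumption~\ref{ass:R} holds for the Besov penalty \eqref{eq:BesovRegu}, as already observed, and $R$ is $p$-homogeneous of the form \eqref{eq:R=phomog}, so both general theorems are applicable.

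First I would unpack Assumption~\ref{ass:source_cond}: membership $f^\dagger\in\Omega_R(L_1)$ yields $R(f^\dagger)\le L_1$, which is exactly the hypothesis $R(f^\dagger)\le L$ of the general theorems with $L=L_1$, while membership $f^\dagger\in\Omega_\mu(L_2)$ supplies the representation $r^\dagger=A_\mu^* w$ with $\norm{w}_Z\le L_2$ that is used inside the proof of Proposition~\ref{prop:besov_rate1}. Then Proposition~\ref{prop:besov_rate1} reads $\E\calR(\beta,\bu;f^\dagger)\le C_{q,s,d}L_2^q N^{-q/2}+L_2^2\beta$, i.e.\ \eqref{eq:ass_on_phomog_rate1} holds with $D_1=L_2^2$, $D_2=C_{q,s,d}L_2^q$ and $Q=q/2$; and Proposition~\ref{prop:besov_rate2} reads $\E R^\star(\Abu^*\epsilon_N)\le\widetilde C_{q,s,d}N^{-q/2}$, i.e.\ \eqref{eq:ass_on_phomog_rate2} holds with $D_3=\widetilde C_{q,s,d}$; here assumption~\ref{ass:besov_case} guarantees that $C_{q,s,d}$ and $\widetilde C_{q,s,d}$ are finite, and assumption~\ref{ass:sampling} is assumed in the corollary.

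With $Q=q/2$ the side conditions of the two general theorems collapse: $\delta N^{3Q/q-1/2}=\delta N$ and $\delta N^{2Q/q}=\delta N$, so both statements require precisely $N\delta\to\infty$ (together with $\delta^2/N\to0$, resp.\ $\delta\to0$), and the saturation rate $N^{-2Q/q}$ becomes $N^{-1}$, attained at $\alpha\simeq N^{-1}$, exactly as claimed. It then remains to substitute the constants into \eqref{eq:param_choice_p_fixed} and \eqref{eq:param_choice_p_vanish}. For the standard estimate Theorem~\ref{thm:general_rate} gives $\E D_R(\fdan,f^\dagger)\lesssim\big(D_1^2 D_3^{2/q}L_1^{(q-2)/q}\big)^{1/3}(\delta^2/N)^{1/3}$ at $\alpha\simeq\big(D_3^{2/q}L_1^{(q-2)/q}/D_1\big)^{1/3}(\delta^2/N)^{1/3}$; using the H\"older conjugacy $q=p/(p-1)$ one simplifies $(q-2)/q=(2-p)/p$, hence $(q-2)/(3q)=(2-p)/(3p)$, and absorbing the $q,s,d$-dependent factors into $\lesssim$ produces exactly \eqref{eq:besov_rate1} with its stated parameter choice. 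For the alternative estimate Theorem~\ref{thm:general_rate2} (which only needs \eqref{eq:ass_on_phomog_rate1}) gives $\E D_R(\fdan,f^\dagger)\lesssim D_1^{1/2}\delta=L_2\delta$ at $\alpha\simeq D_1^{-1/2}\delta=L_2^{-1}\delta$, which is \eqref{eq:besov_rate2}.

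The argument is essentially bookkeeping, so I do not anticipate a genuine obstacle; the only point requiring care is the exponent arithmetic — tracking how $L_1$ and $L_2$ enter through $D_1,D_2,D_3$ and $L$, and verifying that $(q-2)/(3q)$ really equals $(2-p)/(3p)$ — together with checking that the two halves of Assumption~\ref{ass:source_cond} feed, respectively, the ``$R(f^\dagger)\le L$'' hypothesis and the source representation inside Proposition~\ref{prop:besov_rate1}.
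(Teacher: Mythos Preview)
Your proposal is correct and follows essentially the same approach as the paper: identify $Q=q/2$, $D_1=L_2^2$, $D_2=C_{q,s,d}L_2^q$, $D_3=\widetilde C_{q,s,d}$, $L=L_1$ from Propositions~\ref{prop:besov_rate1}--\ref{prop:besov_rate2}, then substitute into Theorems~\ref{thm:general_rate} and~\ref{thm:general_rate2}. In fact you spell out more of the bookkeeping (the collapse of the side conditions to $N\delta\to\infty$ and the identity $(q-2)/q=(2-p)/p$) than the paper does.
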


\begin{proof}
Reflecting the results of propositions \ref{prop:besov_rate1} and \ref{prop:besov_rate2} according to the notation of theorems \ref{thm:general_rate} and \ref{thm:general_rate2}, we have
\begin{equation*}
Q = \frac{q}{2}, \quad D_1 = L_2^2, \quad D_2 = C_{q,s,d} L_2^q, \quad D_3 = \widetilde C_{q,s,d} \quad\text{and}\quad L = L_1.
\end{equation*}
Substituting such terms in the statements of theorems \ref{thm:general_rate} and \ref{thm:general_rate2} and without tracking the constants depending only on $p,s,d$, which immediately yields the claim.
\end{proof}

\subsection{Utilizing Hilbert space embeddings}
\label{subsec:embedding}

Let us consider how Hilbert space embeddings of $X$ can be utilized in deriving convergence rates
for the symmetric Bregman distance.
Suppose that the Banach space $X$ can be embedded continuously to some Hilbert space $X_0$ and $\norm{f}_{X_0} \leq \norm{f}_{X}$ for all $f\in X_0$. Due to the embedding property we also have that
\begin{equation}
	\label{eq:embedding_Rstar}
	R^\star(g) \leq \frac 1q \norm{g}_{X_0}^q.
\end{equation}
Below, we identify elements of $X$ and $X_0$ in $X^*$ through the following dependency
\begin{equation*}
	X \subset X_0 = X_0^* \subset X^*.
\end{equation*} 
If the embedding is suitably tight, we can derive useful convergence rates as demonstrated in the following results.

\begin{prop}
Suppose $\sdiff^\dagger = B_\mu^s w \in X_0$ for some $s\in (0,\frac 12)$.
We have
\begin{equation*}
	\calR(\beta, \bu ; f^\dagger) \leq \widehat C_{p,s} \beta^{rs} \norm{w}_{X_0}^r + \frac 1q \norm{(B_\mu^s - \Bbu^s) w)}_{X_0}^q,
\end{equation*}
where the constant $\widehat C_{p,s}$ depends on $p$ and $s$ and 
\begin{equation}
	\label{eq:r_embed}
	r = \frac{p}{p-1 + s(2-p)}.
\end{equation}
\end{prop}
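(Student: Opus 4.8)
The strategy is to bound $\calR(\beta,\bu;f^\dagger)$ by choosing a specific test element $\bar w\in V_N$ in the infimum defining $\calR$, just as was done in the proof of Proposition~\ref{prop:hilbert_term1} (second claim) and Proposition~\ref{prop:besov_rate1}. Recalling $\calR(\beta,\bu;f^\dagger)=\inf_{\bar w}\left(R^\star(\sdiff^\dagger-\Abu^*\bar w)+\frac\beta2\|\bar w\|_{V_N}^2\right)$, I would first invoke \eqref{eq:embedding_Rstar} to replace $R^\star$ by $\frac1q\|\cdot\|_{X_0}^q$, so that it suffices to bound $\frac1q\|\sdiff^\dagger-\Abu^*\bar w\|_{X_0}^q+\frac\beta2\|\bar w\|_{V_N}^2$ for a good choice of $\bar w$. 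Since $\sdiff^\dagger=B_\mu^s w$, the natural choice is one that makes $\Abu^*\bar w$ resemble $\Bbu^s w$; concretely I would take $\bar w$ to be the element realizing $\Abu^*\bar w=\Bbu^s\,\Pi\, w$ for an appropriate operator, or more simply split the error as $\sdiff^\dagger-\Abu^*\bar w=(B_\mu^s-\Bbu^s)w+(\Bbu^s w-\Abu^*\bar w)$ and pick $\bar w$ so that the second bracket is controlled by the penalty term.

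The cleanest route: use the factorization $\Bbu=\Abu^*\Abu$ and write $\Bbu^s w=\Abu^*\big(\Abu\Bbu^{s-1}w\big)$, so set $\bar w:=\Abu\Bbu^{s-1}w\in V_N$. Then $\sdiff^\dagger-\Abu^*\bar w=(B_\mu^s-\Bbu^s)w$, which is exactly the second term in the claimed bound after applying \eqref{eq:embedding_Rstar}. It remains to estimate $\frac\beta2\|\bar w\|_{V_N}^2=\frac\beta2\|\Abu\Bbu^{s-1}w\|_{V_N}^2=\frac\beta2\langle \Bbu^{2s-1}w,w\rangle_{X_0}$ (using $\Abu^*\Abu=\Bbu$ and self-adjointness). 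Since $s\in(0,\tfrac12)$ the exponent $2s-1\in(-1,0)$ is negative, so $\Bbu^{2s-1}$ is unbounded; this is where the regularization exponent enters. I would bound $\beta\,\Bbu^{2s-1}\preceq \beta\cdot(\text{function of }\Bbu)$ using the scalar inequality $\beta\lambda^{2s-1}\le C_{p,s}\,\beta^{rs}\lambda^{-(1-s)r/s\cdot\text{(something)}}$—more precisely, optimizing $\beta\lambda^{2s-1}$ against the available $X_0$-bound on $w$ via Young's inequality with suitably chosen Hölder exponents will produce the factor $\beta^{rs}\|w\|_{X_0}^r$ with $r$ as in \eqref{eq:r_embed}. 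One checks the exponent: writing $r=\tfrac{p}{p-1+s(2-p)}$ and $rs$ as the resulting power of $\beta$, the algebra should match the spectral calculus estimate $\sup_{\lambda>0}\beta\lambda^{2s-1}\cdot(\text{spectral cutoff})$ combined with the $q$-th power structure coming from $R^\star$.

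\textbf{Main obstacle.} The delicate point is the exponent bookkeeping: reconciling the $q$-homogeneity of $R^\star$ (via \eqref{eq:embedding_Rstar}) with the quadratic penalty $\frac\beta2\|\bar w\|_{V_N}^2$ and the fractional power $s$, and verifying that the optimal Young's-inequality split yields precisely $r=\frac{p}{p-1+s(2-p)}$ and the power $rs$ on $\beta$. A secondary technical nuisance is justifying $\bar w=\Abu\Bbu^{s-1}w\in V_N$ and the identity $\|\Abu\Bbu^{s-1}w\|_{V_N}^2=\langle\Bbu^{2s-1}w,w\rangle$ when $\Bbu$ may be non-injective (one works on the closure of the range, or adds an $\epsilon I$ regularization and passes to the limit, or notes $w$ may be taken in $\overline{\range(\Bbu)}$ without loss). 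I would also need to confirm the $X_0$-operator norm estimates interact correctly with the Banach-space structure of $X$, but since everything is phrased through the Hilbert space $X_0$ after \eqref{eq:embedding_Rstar}, the spectral calculus on $\Bbu$ acting in $X_0$ should suffice.
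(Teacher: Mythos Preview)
Your primal approach has a genuine gap. With the choice $\bar w=\Abu\Bbu^{s-1}w$ you correctly obtain $\sdiff^\dagger-\Abu^*\bar w=(B_\mu^s-\Bbu^s)w$, but the remaining penalty term becomes
\[
\tfrac{\beta}{2}\|\bar w\|_{V_N}^2=\tfrac{\beta}{2}\langle \Bbu^{2s-1}w,w\rangle_{X_0},
\]
and since $2s-1<0$ this is in general infinite: nothing in the hypotheses places $w$ in the domain of $\Bbu^{(2s-1)/2}$. Your proposed fix---``optimizing $\beta\lambda^{2s-1}$ against $\|w\|_{X_0}$ via Young's inequality''---cannot work, because $\tfrac{\beta}{2}\langle \Bbu^{2s-1}w,w\rangle$ is a \emph{single} fixed quantity with no second term to balance against; Young's inequality needs a product to split. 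The exponent bookkeeping you flag as the main obstacle is therefore not the issue; the issue is that this $\bar w$ is simply inadmissible.

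The paper avoids this by passing to the \emph{Fenchel dual}: one has
\[
\calR(\beta,\bu;f^\dagger)=-\inf_{v\in X}\Big\{\tfrac{1}{2\beta}\|\Abu v\|_{V_N}^2-\langle \sdiff^\dagger,v\rangle+R(v)\Big\}.
\]
In the dual there are \emph{three} terms in play, and after the embedding $R(v)\ge \tfrac{1}{2p}\|v\|_{X_0}^p$ one writes $\langle \sdiff^\dagger,v\rangle=\langle w,\Bbu^s v\rangle_{X_0}+\langle (B_\mu^s-\Bbu^s)w,v\rangle$, applies the interpolation $\|\Bbu^s v\|_{X_0}\le \|\Bbu^{1/2}v\|_{X_0}^{2s}\|v\|_{X_0}^{1-2s}$, and then uses a \emph{three-exponent} Young inequality to absorb $\|w\|\cdot\|\Bbu^{1/2}v\|^{2s}\cdot\|v\|^{1-2s}$ into $\tfrac{1}{2\beta}\|\Bbu^{1/2}v\|^2$, $\tfrac{1}{2p}\|v\|^p$, and a residual $\widehat C_{p,s}\beta^{rs}\|w\|^r$. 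Matching the exponents ($2s\cdot p_2=2$, $(1-2s)p_3=p$) forces $1/p_1=(p-1+s(2-p))/p$, i.e.\ $p_1=r$, which is exactly where \eqref{eq:r_embed} comes from. The presence of the $\|v\|_{X_0}^p$ term from $R$ is what makes this triple splitting possible; in the primal formulation that ingredient is missing, which is why your direct choice of $\bar w$ cannot close.
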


\begin{proof}
The Fenchel dual of $E_{\beta, N}$ is given by
\begin{equation*}
	F_\beta(v; \bu, f^\dagger) = \frac{1}{2\beta} \norm{\Abu v}_{V_N}^2 - \langle \sdiff^\dagger, v\rangle_{X^* \times X} + R(v)
\end{equation*}
and, therefore, the Fenchel duality theorem yields
\begin{equation*}
	\calR(\beta, \bu ; f^\dagger) = -\inf_{v\in X} F_\alpha(v; \bu, f^\dagger).
\end{equation*}
 Due to the embedding property and our assumption on $\sdiff^\dagger$, we have a lower bound
\begin{eqnarray*}
	F_\beta(v; \bu, f^\dagger) & = &  \frac{1}{2\beta} \norm{\Bbu^{\frac 12} v}_{X_0}^2  
	- \langle w, \Bbu^s v\rangle_{X_0} - \langle (B_\mu^s - \Bbu^s) w, v\rangle_{X^* \times X}  + R(v) \\
	& \geq & \frac{1}{2\beta} \norm{\Bbu^{\frac 12} v}_{X_0}^2 - \norm{w}_{X_0} \norm{\Bbu^s v}_{X_0}
	- c_p R^\star((B_\mu^s - \Bbu^s) w) + \frac 1{2p}\norm{v}_{X_0}^p,
\end{eqnarray*}
where we applied the generalized Young's inequality.
Interpolation of the norms yields
\begin{equation*}
	\norm{\Bbu^s v}_{X_0} \leq \norm{\Bbu^{\frac 12} v}_{X_0}^{2s} \norm{v}_{X_0}^{1-2s}
\end{equation*}
and by Young's inequality we obtain
\begin{equation*}
	\norm{w}_{X_0} \norm{\Bbu^{\frac 12} v}_{X_0}^{2s} \norm{v}_{X_0}^{1-2s} 
	\leq \frac{1}{2\beta} \norm{\Bbu^{\frac 12} v}_{X_0}^2 + \frac 1{2p}\norm{v}_{X_0}^p + \widehat C_{p,s} \beta^{rs} \norm{w}_{X_0}^r,
\end{equation*}
where the constant $\widehat C_{p,s}$ depends on $p$ and $s$, and $r$ is defined by \eqref{eq:r_embed}.
In conclusion, we obtain
\begin{equation*}
	F_\beta(v; \bu, f^\dagger) \geq -\widehat C_{p,s} \beta^{rs} \norm{w}_{X_0}^r - c_p R^\star((B_\mu^s - \Bbu^s) w)
\end{equation*}
which yields the claim.
\end{proof}

Since we also have
\begin{equation*}
	R^\star(\Abu^* \epsilon_N) \leq \frac 1q \norm{\Abu^* \epsilon_N}_{X_0}^q,
\end{equation*}
it follows that we can bound $D_R(\fdan, f^\dagger)$ in cases $(i)$ and $(ii)$ of theorem \ref{thm:bregman_dist_gen}
involving the random terms $\norm{B_\mu^s - \Bbu^s}_{{\mathcal L}(X_0)}^q$ and $\norm{\Abu^* \epsilon_N}_{X_0}^q$. Therefore, if the spectral properties of $\Bbu$ are well-understood on $X_0$, propositions 5.2 and 5.5 in \cite{blanchard2018optimal} yield probabilistic bounds on the symmetric Bregman distance, when theorem \ref{thm:bregman_dist_gen} and, in particular, the concentration assumption in inequality \eqref{eq:ass_on_phomog_rate1} is generalized for arbitrary power of $\beta$. This generalization is technical and outside of the scope of this paper.


\section{Random angle X-ray tomography}
\label{sec:xray}

As an application of our theory, we study the case when the operator $A$ is the semidiscrete Radon transform and we perform random sampling of the imaging angles.

\subsection{Semidiscrete Radon transform}
\label{sub:radon}

Before introducing the semidiscrete Radon transform, we start by recalling the classical definition of Radon transform $\mathcal{R}$:
\[
\mathcal{R}f (\theta,s) = \int_{\R} f(s \theta + t \theta^\perp) dt \qquad \theta \in S^1, s \in \R.
\]  
When considering the operator $\mathcal{R}$ acting on a function $f \in X = \{ g \in L^2(\Omega): \supp(g) \subset \overline{\Omega} \}$, with $\Omega \subset \R^2$ bounded, then $\mathcal{R}f$ (the so-called \textit{sinogram}) belongs to the space $L^2([0,2\pi)\times(-\bar{s},\bar{s}))$ for a suitable $\bar{s} > 0$ depending on $\Omega$. 
We aim at defining the sampling operator as a function associating an angle $\theta \in U = [0,2\pi)$ to the sinogram related to that direction, namely, $\mathcal{R}(\theta) = \mathcal{R}(\theta,\cdot) \in L^2(-\bar{s},\bar{s})$. Unfortunately, the sinogram space $L^2([0,2\pi)\times(-\bar{s},\bar{s})) \cong L^2(U; L^2(-\bar{s},\bar{s}))$ does not show sufficient regularity to perform pointwise evaluations with respect to the angles. 
\par
One way to overcome this difficulty is to rely on a semidiscrete version of the Radon transform. In particular, we set the variable $s$ in a discrete space, which corresponds to modeling the X-ray attenuation measurements performed with a finite-accuracy detector, consisting of $\Ndtc$ cells. To this end, we introduce a uniform partition $\{I_1, \ldots, I_{\Ndtc}\}$ of the interval $(-\bar{s}, \bar{s})$, where we denote by $s_j$ the midpoint of each interval $I_j$ and take a continuous positive function $\rho$ of compact support within $(-1,1)$ such that $\int_{-1}^1 \rho = 1$. The semidiscrete Radon transform is a function $A:X \rightarrow L^2([0,2\pi); \R^{\Ndtc})$ such that, for any $f \in X$ and $\theta \in [0,2\pi)$, each component of the vector $A f(\theta) \in \R^{\Ndtc}$ can be written as
\begin{equation}
	\label{eq:Radon_model_eqL}
	[(A f)(\theta)]_j = 
	\int_{I_j} \mathcal{R}f (\theta,s) \rho\left(\frac{s-s_j}{|I_j|}\right) ds = 
	\int_{I_j} \int_{\R} f(s \theta + t \theta^\perp) \rho\left(\frac{s-s_j}{|I_j|}\right) dt ds.
\end{equation}
Notice carefully that, according to the formalism of section \ref{sec:preliminaries}, $X = \{ f \in L^2(\Omega): \supp(f)\subset \overline{\Omega}\}$, $Y = L^2(U;V)$, $U = S^1\cong [0,2\pi)$ and $V = \R^{\Ndtc}$.

We observe that each component of $Af(\theta)$ is a suitable average of $\mathcal{R}f (\theta,s)$ in a subinterval $I_j$.
By the change of variables $x = s \theta + t \theta^\perp$ in equation \eqref{eq:Radon_model_eqL} we observe that
\[
	[(A f)(\theta)]_j = \int_{\R^2} f(x) \rho_j(x,\theta) dx,
\]
being $\rho_j(x,\theta) = \rho\left(\frac{x\cdot\theta-s_j}{|I_j|}\right)$. As a consequence, from the continuity of $\rho$ we can deduce that for any $f \in X$ each component of $Af(\theta)$ is a continuous function of $\theta$, hence we can consider $A: X \rightarrow Z$ being $Z = \mathcal{C}(U;V)$ and the
sampled operator $A_\theta : X \rightarrow V$ is well defined for every $\theta \in U$. Moreover, the following bound holds uniformly in $\theta$:
\[
\norm{A_\theta f}_{V}^2 = \norm{Af(\theta)}_{V}^2 = \sum_{j = 1}^{\Ndtc} \left|\int_{\Omega} f(x)\rho_j(x,\theta)dx\right|^2 \leq \Ndtc |\Omega| \norm{f}_{L^2(\Omega)}^2 \norm{\rho}_{\infty}^2,
\]
and therefore we conclude that $A$ is a bounded operator from $L^2(\Omega)$ to $Z$.
We now verify that the semidiscrete Radon transform satisfies assumption \ref{ass:besov_case} for any choice of wavelet basis $\{\psi_\lambda\}$ and sufficiently regular Besov space $B_{p}^s$.
\begin{prop}
Let $\Omega \subset \R^d$ and $X = B_p^s(\Omega)$, with $1 < p \leq 2$ and $s$ such that $\frac{s}{d} \geq \frac{1}{p}-\frac{1}{2}$. Let $\{\psi_\lambda\}$ be an orthonormal basis of $L^2(\Omega)$. Then, the semidiscrete Radon transform satisfies assumption \ref{ass:besov_case}.
\end{prop}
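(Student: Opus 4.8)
The plan is to bound $\norm{A\psi_\lambda}_\infty$ by the $L^1$-size of the wavelet $\psi_\lambda$ and then to sum the resulting dyadic series. The two standard facts I would use are: (i) an $L^2$-normalised wavelet at scale $|\lambda|$ satisfies $\norm{\psi_\lambda}_{L^1(\Omega)}\lesssim 2^{-|\lambda|d/2}$, and (ii) since $\Omega$ is bounded, there are at most $\lesssim 2^{jd}$ indices $\lambda$ with $|\lambda|=j$. The point to be careful about is that the operator bound already derived for $A$, namely $\norm{A_\theta f}_V\le\sqrt{\Ndtc|\Omega|}\,\norm{\rho}_\infty\norm{f}_{L^2(\Omega)}$, when applied to $f=\psi_\lambda$ gives only $\norm{A\psi_\lambda}_\infty\lesssim 1$, with no decay in $|\lambda|$; the decay has to be squeezed out of the $L^1$ norm of $\psi_\lambda$ instead.

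First I would fix $\theta\in U$ and use the representation $[(A\psi_\lambda)(\theta)]_j=\int_\Omega\psi_\lambda(x)\rho_j(x,\theta)\,dx$ with $\rho_j(x,\theta)=\rho\!\left(\tfrac{x\cdot\theta-s_j}{|I_j|}\right)$. Because $\norm{\rho_j(\cdot,\theta)}_\infty\le\norm{\rho}_\infty$, each component obeys $\left|[(A\psi_\lambda)(\theta)]_j\right|\le\norm{\rho}_\infty\norm{\psi_\lambda}_{L^1(\Omega)}$, and summing over the fixed finite number $\Ndtc$ of detector cells yields, uniformly in $\theta$,
\[
\norm{A\psi_\lambda}_\infty\ \le\ \sqrt{\Ndtc}\,\norm{\rho}_\infty\,\norm{\psi_\lambda}_{L^1(\Omega)}\ \le\ C\,\sqrt{\Ndtc}\,\norm{\rho}_\infty\,2^{-|\lambda|d/2},
\]
where the last inequality is Cauchy--Schwarz together with $\left|\supp\psi_\lambda\right|\lesssim 2^{-|\lambda|d}$ and $\norm{\psi_\lambda}_{L^2}=1$.

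Then I would insert this into the series of assumption~\ref{ass:besov_case} and track the power of $2^{|\lambda|}$. Recalling $\coefb=2^{|\lambda|d\left(q(\frac12-\frac sd)-1\right)}$, the previous bound gives $\coefb\,\norm{A\psi_\lambda}_\infty^q\lesssim 2^{|\lambda|d\left(q(\frac12-\frac sd)-1\right)}\,2^{-|\lambda|dq/2}=2^{-|\lambda|(qs+d)}$, so that, grouping indices by scale $j=|\lambda|$,
\[
\sum_{\lambda=1}^\infty\coefb\,\norm{A\psi_\lambda}_\infty^q\ \lesssim\ \sum_{j\ge j_0}2^{jd}\,2^{-j(qs+d)}\ =\ \sum_{j\ge j_0}2^{-jqs},
\]
$j_0$ being the coarsest scale present. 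This geometric series converges exactly when $qs>0$, which is precisely what the hypothesis on $s$ provides: multiplying $\frac sd\ge\frac1p-\frac12$ by $dq$ and using $\frac1p=1-\frac1q$ gives $qs\ge d\!\left(\tfrac q2-1\right)>0$ for $1<p<2$. Hence assumption~\ref{ass:besov_case} holds. I do not expect a serious obstacle: the only point worth a second look is that $\rho$ is merely continuous, which is harmless because the argument never integrates by parts and uses only $\norm{\rho}_\infty<\infty$ — any additional smoothness of $\rho$ or vanishing moments of $\psi_\lambda$ would merely improve an already convergent estimate — and the condition on $s$ is exactly the threshold at which $\sum_j 2^{-jqs}$ stops converging.
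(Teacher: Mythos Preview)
Your argument is essentially correct but takes a genuinely different route from the paper, and there is one endpoint it does not cover.

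The paper never estimates $\norm{A\psi_\lambda}_\infty$ wavelet by wavelet. Instead it observes that, under the hypothesis on $s$, the weights satisfy $c_{\lambda,q,-s,d}\le 1$, and since $q\ge 2$ it is enough to show $\sum_\lambda \norm{A\psi_\lambda}_Z^2<\infty$. For this it uses the one-dimensional Sobolev embedding $H^1(0,2\pi)\hookrightarrow \mathcal C(0,2\pi)$ to bound $\norm{A\psi_\lambda}_Z^2\lesssim \sum_{j}\norm{\langle\psi_\lambda,\rho_j(\cdot,\theta)\rangle}_{H^1_\theta}^2$, then swaps the sum over $\lambda$ with the $\theta$-integral and applies Parseval's identity in $L^2(\Omega)$, obtaining
\[
\sum_{\lambda}\norm{A\psi_\lambda}_Z^2\ \lesssim\ \sum_{j=1}^{\Ndtc}\int_0^{2\pi}\Bigl(\norm{\rho_j(\cdot,\theta)}_{L^2(\Omega)}^2+\norm{\nabla_\theta\rho_j(\cdot,\theta)}_{L^2(\Omega)}^2\Bigr)\,d\theta\ <\ \infty,
\]
finite once $\rho\in C^1$. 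This uses only orthonormality of $\{\psi_\lambda\}$---no support or scaling information---so it matches the ``any orthonormal basis'' wording of the proposition and handles the full range $1<p\le 2$, including the endpoint $p=2$, $s=0$, without modification.

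Your approach, by contrast, is more elementary and needs only $\rho\in L^\infty$ (no derivative of $\rho$), which is a genuine advantage. The price is twofold. First, the bound $\norm{\psi_\lambda}_{L^1}\lesssim 2^{-|\lambda|d/2}$ together with the count $\#\{\lambda:|\lambda|=j\}\lesssim 2^{jd}$ uses compactly supported wavelets, not an arbitrary orthonormal basis as the proposition allows. Second, your tail series $\sum_j 2^{-jqs}$ diverges precisely at the admissible endpoint $p=2$, $s=0$; your final remark that the hypothesis on $s$ is ``exactly the threshold'' is therefore slightly off---convergence of your series needs $s>0$, which the hypothesis $\tfrac sd\ge\tfrac1p-\tfrac12$ guarantees only for $p<2$. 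For $1<p<2$ and compactly supported wavelets your proof is complete; to match the proposition as stated you would still need to treat $p=2$, $s=0$ separately (or fall back on the paper's Parseval trick there).
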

\begin{proof}
Since, by hypothesis, $q\geq 2$ and $c_{\lambda,q,-s,d} \leq 1$, it is enough to prove that
\[
\sum_{\lambda=1}^\infty \| A \psi_\lambda \|_Z^2 < \infty.
\]
By Sobolev embedding, for any $f \in L^2(\Omega)$,
\[
  \| A f\|_Z^2 =  \| A f \|_{C((0,2\pi);\R^{N_{dtc}})}^2 \leq C_{S} \| A f \|_{H^1((0,2\pi);\R^{N_{dtc}})}^2 = C_{S} \sum_{j=1}^{N_{dtc}} \| h_j \|_{H^1(0,2\pi)}^2,
\]
being $h_j(\theta) = \int_{\Omega} f(x) \rho_j(x,\theta)dx$. Denoting by $\langle \cdot, \cdot \rangle$ the scalar product in $L^2(\Omega)$ have that
\[
\| h_j \|_{H^1(0,2\pi)}^2 = \int_0^{2\pi} \langle f, \rho_j(\cdot,\theta) \rangle^2 d\theta + \int_0^{2\pi} \langle f, \nabla_\theta \rho_j(\cdot,\theta) \rangle^2 d\theta.
\]
Therefore, by Parseval's identity,
\[
\begin{aligned}
\sum_{\lambda=1}^\infty \| A \psi_\lambda \|_Z^2 &\leq C_{S} \sum_{\lambda=1}^\infty \sum_{j=1}^{N_{dtc}} \left( \int_0^{2\pi} \langle \rho_j(\cdot,\theta),\psi_\lambda \rangle^2 d\theta + \int_0^{2\pi} \langle \nabla_\theta \rho_j(\cdot,\theta),\psi_\lambda \rangle^2 d\theta \right) \\
& = C_{S} \sum_{j=1}^{N_{dtc}} \int_0^{2\pi} \left( \sum_{\lambda=1}^\infty \langle \rho_j(\cdot,\theta),\psi_\lambda \rangle^2 + \sum_{\lambda=1}^\infty \langle \nabla_\theta \rho_j(\cdot,\theta),\psi_\lambda \rangle^2 \right) \\
& = C_{S} \sum_{j=1}^{N_{dtc}} \int_0^{2\pi} \left( \| \rho_j(\cdot, \theta) \|_{L^2}^2 + \| \nabla_\theta\rho_j(\cdot, \theta) \|_{L^2}^2\right) \leq C(\Omega, \| \rho\|_{C^1},\{|I_j|\},N_{dtc}).
\end{aligned}
\]
\end{proof}

\subsection{Discretization}
\label{sec:discretization}

In order to perform numerical simulation, we now introduce a fully discretized version of the sampled Radon transform. To this end, we replace the functional space $X$ with $\R^{\Npxl}$ and consider the following discrete model:

\begin{equation}
\gNd = \g_N^\dag + \delta \vec{\epsilon}_N = \Aop_{\thetab} \f^\dag + \delta \vec{\epsilon}_N
\label{eq:DiscrRadonSampl}
\end{equation}
where  $\f^\dag \in \R^{\Npxl}$ denotes the (unknown) discrete and vectorized image, $\Aop_{\thetab} \in \R^{\Ndtc N \times \Npxl}$ represents the sampled version of the Radon operator corresponding to the $N$ randomly sampled angles $\thetab$, $\g_N^\dag \in \R^{\Ndtc N}$ is the subsampled sinogram and $\vec{\epsilon}_N \in \R^{\Ndtc N}$ is the noise. In the implementation, we consider a normal distribution for the noise vector, $\vec{\epsilon} \sim \mathcal{N}(\vec{0},\vec{I}_{\Ndtc N})$, where $\vec{I}_{\Ndtc N}$ is the identity matrix in $\R^{\Ndtc N \times \Ndtc N}$. A practical example is depicted in figure~\ref{fig:SubSino}. 

\begin{figure}
\centering
\begin{tabular}{ccc}
\includegraphics[width=0.3\textwidth]{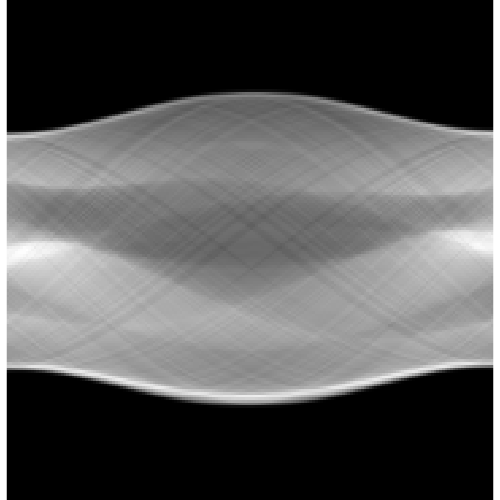} 
	& \includegraphics[width=0.3\textwidth]{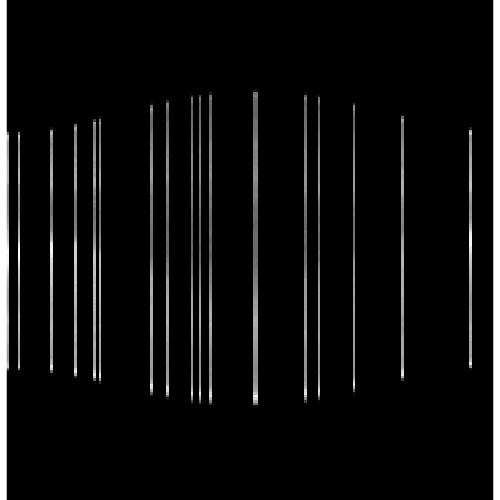} 
	& \includegraphics[width=0.3\textwidth]{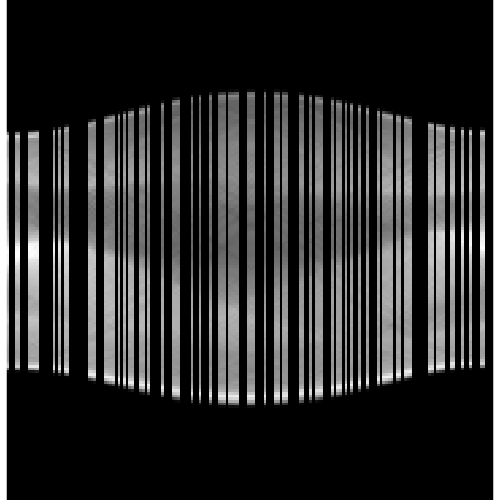} \\
(a) & (b) & (c)
\end{tabular}
\caption{(a) Fully sampled sinogram. (b) Randomly subsampled sinogram with $N = 18$ projection angles. (c) Randomly subsampled sinogram with $N = 81$ projection angles. In each case, the random angles are sampled using Matlab's \texttt{rand}, and therefore are identically distributed and independent.}
\label{fig:SubSino} 
\end{figure}

The numerical experiments are conducted in the presence of the following regularization term:
\begin{equation}
\vec{R}(\f) = \frac{1}{p} \norm{\Wop \f}_p^p
\label{eq:BesovReguDiscr}
\end{equation}
where $1 < p \leq 2$ and $\Wop \in \R^{\Npxl \times \Npxl}$ is an orthogonal matrix. Notice that this expression allows to consider two scenarios of interest:
\begin{enumerate}[i)]
	\item if $p=2$ and $\Wop = \vec{I}_{\Npxl}$, the identity matrix in $\R^{\Npxl \times \Npxl}$, then \eqref{eq:BesovReguDiscr} reduces to the standard Tikhonov regularization, analyzed in subsection~\ref{subsec:Tikhonov};
	\item if $1 < p <2$ and $\Wop$ is the matrix representation of an orthonormal wavelet transform, then \eqref{eq:BesovReguDiscr} represents a Besov norm. In particular, according to \eqref{eq:BesovRegu}, $\vec{R}(\f)$ is equivalent to the $B_p^s(\Omega)$ norm, provided that $s = d\left( \frac{1}{p} -\frac{1}{2} \right)$.
\end{enumerate}
Finally, the discrete counterpart of \eqref{eq:MinFunct} reads as:
\begin{equation}
\faNd = \argmin_{\f \in \R^{\Npxl}} \left\{ \frac{1}{2N} \norm{ \Aop_{\thetab} \f -\gNd }_2^2 + \alpha \vec{R}(\f)  \right\}.   
\label{eq:minsDiscr}
\end{equation}

\subsection{Proximal gradient descent algorithm}
\label{subsec:PGD}
To solve the minimization problem~\eqref{eq:minsDiscr}, we use a proximal gradient descent (PGD) algorithm, adapting the forward-backward algorithm reported in~\cite[Algorithm 10.3]{combettes2011}. In particular, by denoting  $\Phi(\f) = \frac{1}{p} \norm{\f}_p^p$, the $(k+1)$-th iteration of PGD for the minimization of~\eqref{eq:minsDiscr} is given by:
\begin{equation}
\f^{(k+1)} = \Wop^{\text{T}} \prox_{\tau_k \alpha \Phi} \bigg(
	\Wop \Big( \f^{(k)} - \frac{\tau_k}{N} \Aop_{\thetab}^{\text{T}} (\Aop_{\thetab} \f^{(k)} - \gNd) \Big) 
\bigg) 
\label{eq:PGDiter}
\end{equation}
where 
\begin{equation}
\prox_{\alpha\tau\Phi}(\x) = \argmin_{\z \in \R^{\Npxl}} \bigg\{ \frac{1}{2} \norm{\x-\z}_2^2 + \alpha\tau \Phi(\z) \bigg\}
\label{eq:ProxDef}
\end{equation}
is the proximal operator of $\Phi$ and $\tau_k$ is a suitable step length, which we update according to the Barzilai-Borwein rule~\cite{Barzilai88}. 

The expression of $\Phi$ allows to provide a more explicit formula for the associated proximal operator. In particular, since for $1<p\leq 2$ the functional in \eqref{eq:ProxDef} is differential and convex, by first-order optimality condition it holds that if $\bar{\z} \in \R^{\Npxl}$ is such that $ \bar{\z} = \prox_{\alpha\tau\Phi}(\x)$, then $\bar{\z} - \x + \alpha \tau \nabla \Phi(\z) = 0$. Moreover, $\nabla \Phi(\z) = \z^{[p-1]}$, where $\x^{[n]}$ denotes the component-wise signed $n$-th power:
\begin{equation}
(\x^{[n]})_i = \sign (x_i) |x_i|^{n}.
\label{eq:signed_power}
\end{equation}
As a result, the optimality condition satisfied by $\bar{\z}$ reads as follows:
\begin{equation}
\bar{z}_i + \alpha \tau |\bar{z}_i|^{p-1} \sign(\bar{z}_i) - x_i = 0 
\qquad \forall i = 1,\ldots,\Npxl,
\label{eq:ProxDecoupl}
\end{equation}
where all the components are decoupled. Notice that $\sign(x_i) = \sign(\bar{z}_i)$, and so the solution of equation~\eqref{eq:ProxDecoupl} is $\bar{z}_i = \sign(x_i) z_i$ where $z_i$ is the positive solution of
\begin{equation}
z_i + \alpha \tau z_i^{p-1} - |x_i| = 0.
\label{eq:ProxDecPos}
\end{equation}
Therefore, for any choice of $p \in (1,2)$ the proximal $\bar{\z}$ can be efficiently computed by numerically solving $\Npxl$ decoupled equations. Additionally, we remark that for the choice $p=3/2$ and $p=4/3$ (and, in principle $p=5/4$) the solution of equation~\eqref{eq:ProxDecPos} has an explicit, analytic expression given by the formula for the solution of the second, third and fourth degree algebraic equations, respectively. For this reason, without loss of generality, we implement the cases $p=3/2$ and $p=4/3$.

\subsection{Numerical Experiments}
\label{sec:numerical_exp}

In the following, we present and discuss our numerical experiments. Computations were implemented with Matlab R2021a, running on a laptop with 16GB RAM and Apple M1 chip.
\par
The aim is to verify the expected convergence rates proven in theorem~\ref{cor:fixed_noise} for the Tikhonov case and in corollary \ref{cor:BesovEstimates} for the Besov regularization. 
We test inequalities \eqref{eq:fixed_noise_rate} and \eqref{eq:besov_rate1} in the following two scenarios:
\begin{itemize}
\item Fixed noise, i.e., $\delta >0$ constant. Since $\delta N \rightarrow \infty$, we take $\alpha \simeq \delta^{2/3} N^{-1/3}$ and in particular we choose $\delta = c_{\delta}$ and $\alpha = c_{\alpha}/N^{1/3}$;
\item Decreasing noise, e.g., $\delta \simeq N^{-1}$. In this case, the optimal parameter choice is $\alpha \simeq N^{-1}$: therefore, we choose $\delta = c_{\delta} N^{-1}$ and $\alpha = c_{\alpha} N^{-1}$.
\end{itemize}
The positive constants $c_{\delta}$ and $c_{\alpha}$ are specified in the following (see table \ref{tab:BestAlpha} and subsection \ref{subsec:numsetup}).
\par
Notice that in the fixed noise regime, the only valid bounds are the standard estimates \eqref{eq:fixed_noise_rate} and \eqref{eq:besov_rate1}, whereas in the decreasing noise regime the alternative estimates \eqref{eq:vanishing_noise_rate} and \eqref{eq:besov_rate2} are also valid, although since $\delta \simeq N^{-1}$ they actually coincide with the standard ones.

\subsubsection{Implementing the source condition}
\label{subsec:SourceCond}
\begin{figure}[t]
\centering
\begin{tabular}{ccc}
\includegraphics[width=0.3\textwidth]{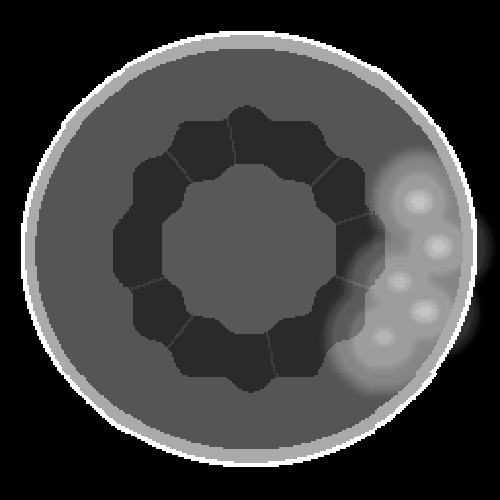} 
	& \includegraphics[width=0.3\textwidth]{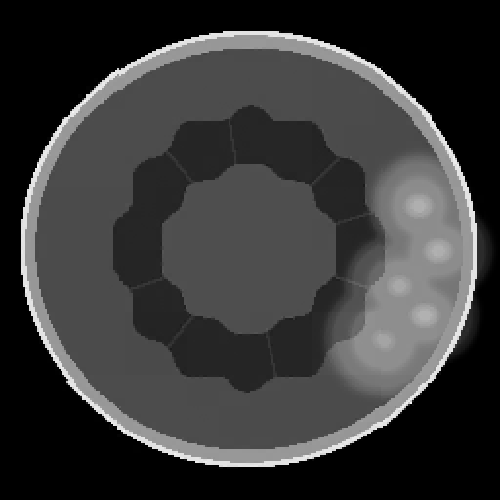} 
	& \includegraphics[width=0.3\textwidth]{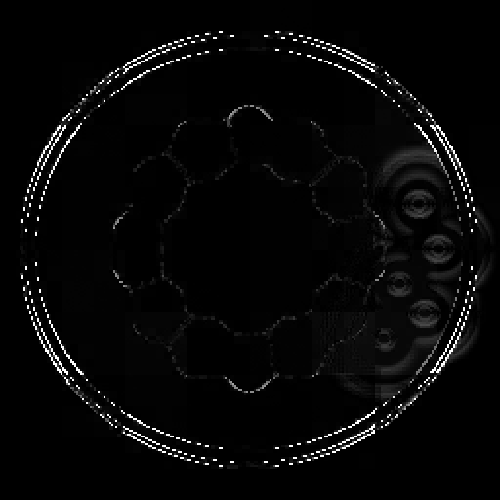} \\
(a) $\f_0$ & (b) $\f^{\dagger}$ & (c) $\f^{\dagger} - \f_0$
\end{tabular}
\caption{(a) Original phantom. (b) Phantom satisfying the source condition in assumption~\ref{ass:source_cond} with $p=3/2$. (c) Difference between (a) and (b), with relative error $5\%$.}
\label{fig:PlantSC}
\end{figure}
All the numerical tests are run on images $\f^{\dagger}$ satisfying the source condition \eqref{eq:SC_Tikhonov} in the Tikhonov case and \eqref{eq:SC_Besov} in the Besov case. In the discrete setting, both \eqref{eq:SC_Tikhonov} and \eqref{eq:SC_Besov} can be formulated as: 
\begin{equation}
\exists  \; \w \in \R^{\Ndtc\Nth} \qquad \text{s.t.} \quad \Wop^{\text{T}} (\Wop \f^{\dagger})^{[p-1]} = \Aop^{\text{T}} \w  
\label{eq:SourceCondDiscr}
\end{equation}
where $\Aop \in \R^{\Ndtc \Nth \times \Npxl}$ with a fixed $\Nth \gg N$ is a matrix representing the Radon transform acting from $\R^{\Npxl}$ to $\R^{\Ndtc \Nth }$, a sufficiently refined discretization of the space $Y$ of full sinograms. As in \eqref{eq:signed_power}, $\f^{[p]}$ is the component-wise signed power.
In practice, a generic phantom of interest $\f_0$ does not necessarily satisfy~\eqref{eq:SourceCondDiscr}. Therefore, 
in order to guarantee that the test images satisfy~\eqref{eq:SourceCondDiscr}, we first determine a vector $\w \in  \R^{\Ndtc\Nth}$ solution of the regularized problem 
\begin{equation}
\w = \argmin_{\widetilde{\w} \in \R^{\Ndtc\Nth}}  \left\{ \frac{1}{2} \norm{\Aop^{\text{T}} \widetilde{\w} - \Wop^{\text{T}} (\Wop \f_0)^{[p-1]}}_2^2 +  \lambda_{SC}   \norm{\widetilde{\w}}_2^2 \right\},
\label{eq:SourceCondTik}
\end{equation}
for a suitable $\lambda_{SC}>0$. Regularization is needed since the inverse problem to determine $\w$ as in~\eqref{eq:SourceCondDiscr} is ill-posed. 
Then, we compute $\f^{\dagger} = \Wop^{\text{T}}(\Wop \Aop^{\text{T}} \w)^{[1/(p-1)]}$: as a result, $\f^{\dagger}$ satisfies the source condition associated with $\w$, and $\norm{\f^{\dagger} - \f_0}_2$ is expected to be small. An example for $p=3/2$ is given in figure~\ref{fig:PlantSC}. Notice that for $p=2$ the source condition~\eqref{eq:SourceCondDiscr} reduces to
\begin{equation}
\exists  \; \w \in \R^{\Ndtc\Nth} \qquad \text{s.t.} \quad  \f^{\dagger} = \Aop^{\text{T}} \w. 
\label{eq:SourceCondp2}
\end{equation}

\subsubsection{Numerical setup} \label{subsec:numsetup}
We use the Plant phantom, available on GitHub~\cite{PlantPhantom2020} (see also figure~\ref{fig:PlantSC}(a)). The size of the phantom is $128 \times 128$, hence $\Npxl = 128^2$. In order to generate a phantom satisfying the source condition, we follow the strategy proposed in subsection~\ref{subsec:SourceCond}, employing the operator $\vec{A}$ with $\Nth = 360$.
The forward operator and its adjoint are implemented using Matlab's \texttt{radon} and \texttt{iradon} routines, with suitable normalization. Reconstructions are computed with $N = 36, 50, 64, \ldots, 162$ in the interval $[0,\pi)$. 
To gain intuition on the subsampling rate associated with this choice, the endpoints $N_0 =36$ and $N_1 = 162$ correspond to $10\%$ and $45\%$ of $\Nth=360$, which is typically used to provide a sufficiently fine discretization of the full sinogram. In each scenario, $N$ random angles are sampled using Matlab's \texttt{rand} (thus ensuring that the random points are stochastically independent).
The Gaussian noise $\vec{\epsilon}_N$ is created by the command \texttt{randn}
and the constant $c_\delta$ appearing in the expression of the noise level $\delta$ is chosen depending on the noise scenario:
\begin{itemize}
\item Fixed noise ($\delta = c_\delta$): $c_{\delta} = 0.01 \norm{\Aop \f^\dag}_{\infty}$;
\item Decreasing noise ($\delta = c_\delta N^{-1}$): $c_{\delta} = 0.02 \, N_0 \norm{\Aop \f^\dag}_{\infty}$. Therefore, $\delta$ ranges between $0.02 \norm{\Aop \f}_{\infty}$ and  $0.02 N_0/N_1 \norm{\Aop \f}_{\infty} \approx 0.005 \norm{\Aop \f}_{\infty}$.
\end{itemize}
Reconstruction are computed using the PGD algorithm as described in subsection~\ref{subsec:PGD}. The regularization parameter $\alpha$ depends on the value of $c_{\alpha}$ which is heuristically determined. 
Optimal values for $c_{\alpha}$ are reported in table~\ref{tab:BestAlpha}.   
\begin{table}
\centering
\begin{tabular}{l|cc}
& fixed noise & reducing noise \\
\hline
$p = 3/2$ &   0.05   &  0.15 \\
$p = 4/3$ &   0.04   &  0.15 \\
$p = 2  $ &   0.13  &   0.16\\
\hline
\end{tabular}
\caption{Optimal values for $c_{\alpha}$.}
\label{tab:BestAlpha}
\end{table}
The expected values appearing in theorem \ref{cor:fixed_noise} and in corollary \ref{cor:BesovEstimates} are approximated by sample averages, computed using $30$ random realizations. This means that, for each number of angles $N$, the reconstruction is performed $30$ times, each time with a different set of $N$ drawn angles and noise vector.

\subsubsection{Numerical results}
In figures \ref{fig:PlantEstimatesNoiseDecreasing} and \ref{fig:PlantEstimatesNoiseFixed} we report the value of the expected Bregman distance $\E D_{\vec{R}}(\f^\delta_{\alpha,N}, \f^\dag)$ as a function of $N$, both in the reducing noise and fixed noise regimes. We compare three different choices of functional $\vec{R}$: $p=3/2$ and $p=4/3$, associated with the choice of the Haar wavelet transform $\vec{W}$ (Besov regularization), and $p=2$ with the identity matrix (Tikhonov regularization). According to corollary \ref{cor:BesovEstimates} and theorem \ref{cor:fixed_noise}, we should expect the same decay of $\E D_{\vec{R}}(\f^\delta_{\alpha,N}, \f^\dag)$, independently of $\vec{R}$: as $N^{-1/3}$ in the fixed noise one, and as $N^{-1}$ in the reducing noise scenario. We can see in figures \ref{fig:PlantEstimatesNoiseFixed} and \ref{fig:PlantEstimatesNoiseDecreasing} that the theoretical behaviour is numerically verified.  
\begin{figure}[t]
\centering
\begin{tabular}{@{}c@{\,}c@{\,}c@{}}
\includegraphics[width=0.32\textwidth]{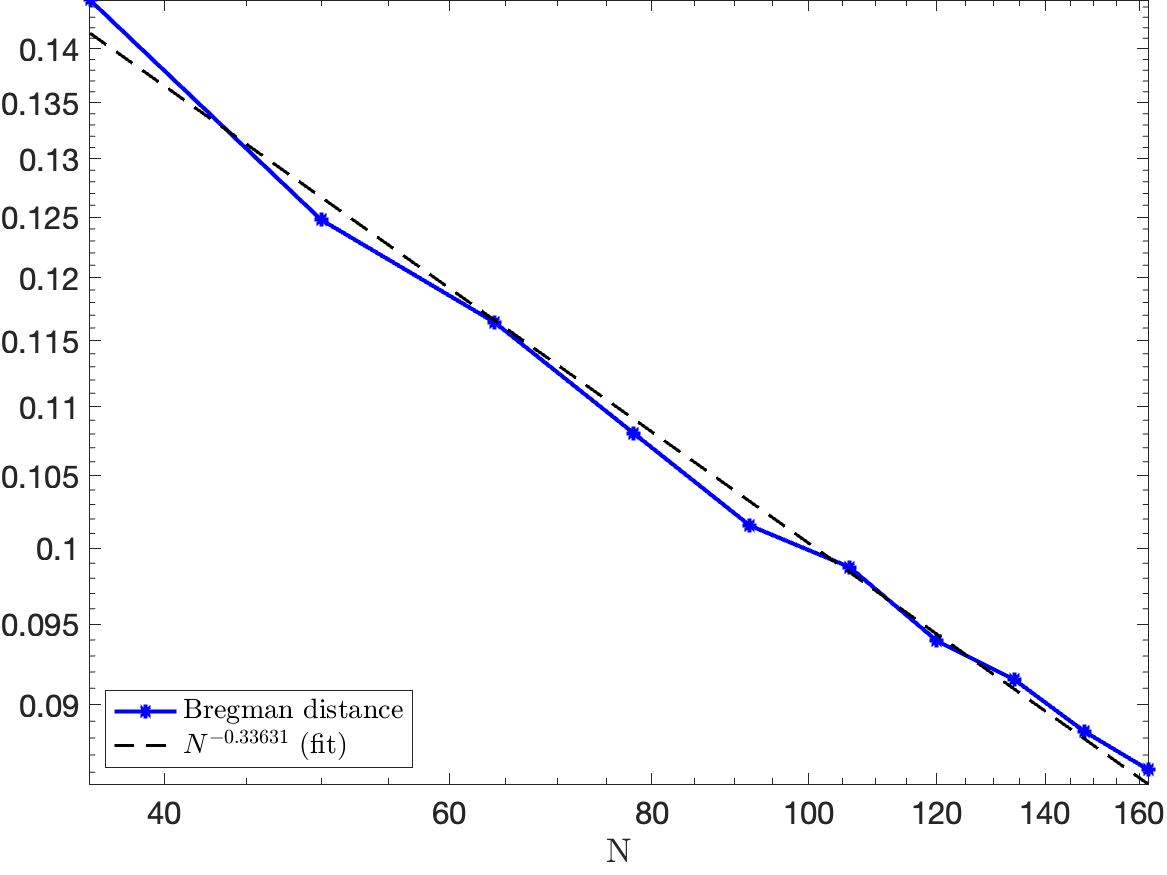} 
	& \includegraphics[width=0.32\textwidth]{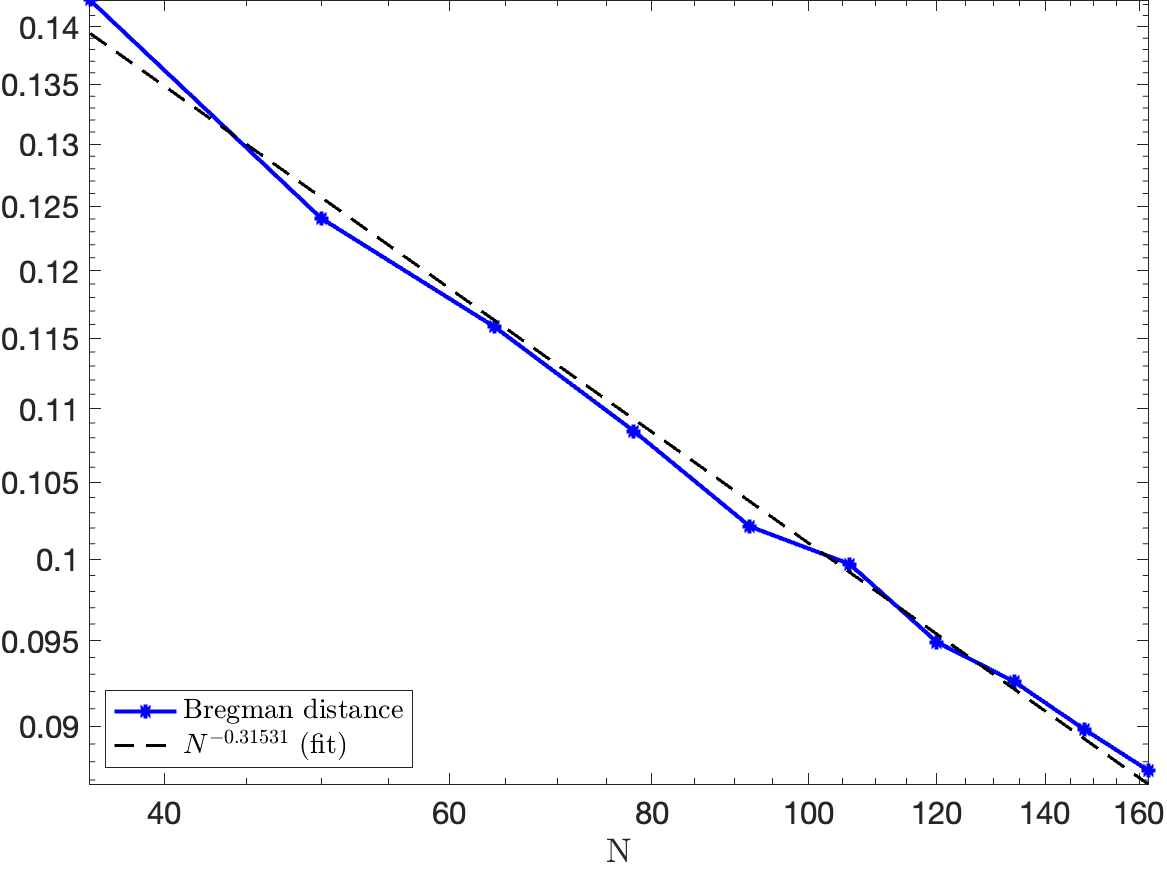} 
	& \includegraphics[width=0.32\textwidth]{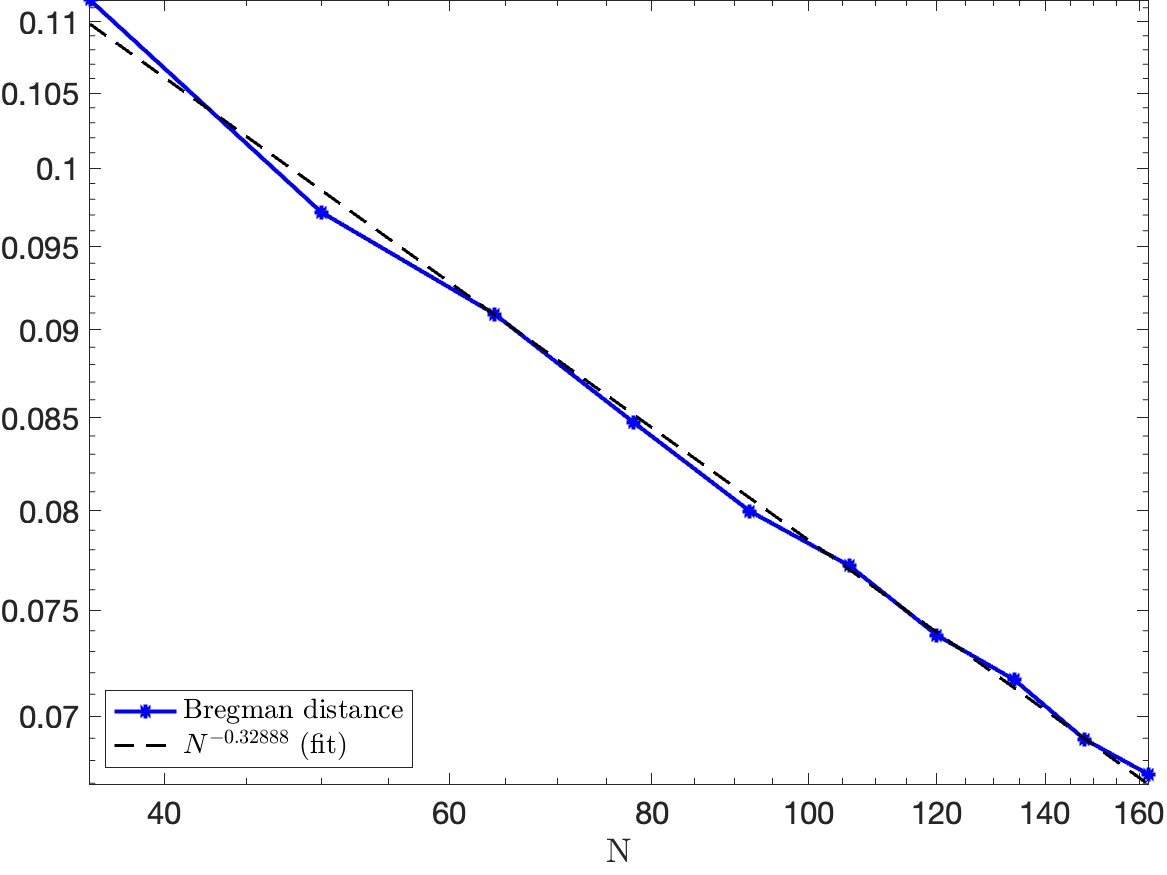} \\
(a) & (b) & (c)
\end{tabular}
\caption{Estimates in the fixed noise case. (a) $p=3/2$ (b) $p=4/3$ (c) $p=2$}
\label{fig:PlantEstimatesNoiseFixed}
\end{figure}
\begin{figure}[t]
\centering
\begin{tabular}{@{}c@{\,}c@{\,}c@{}}
\includegraphics[width=0.32\textwidth]{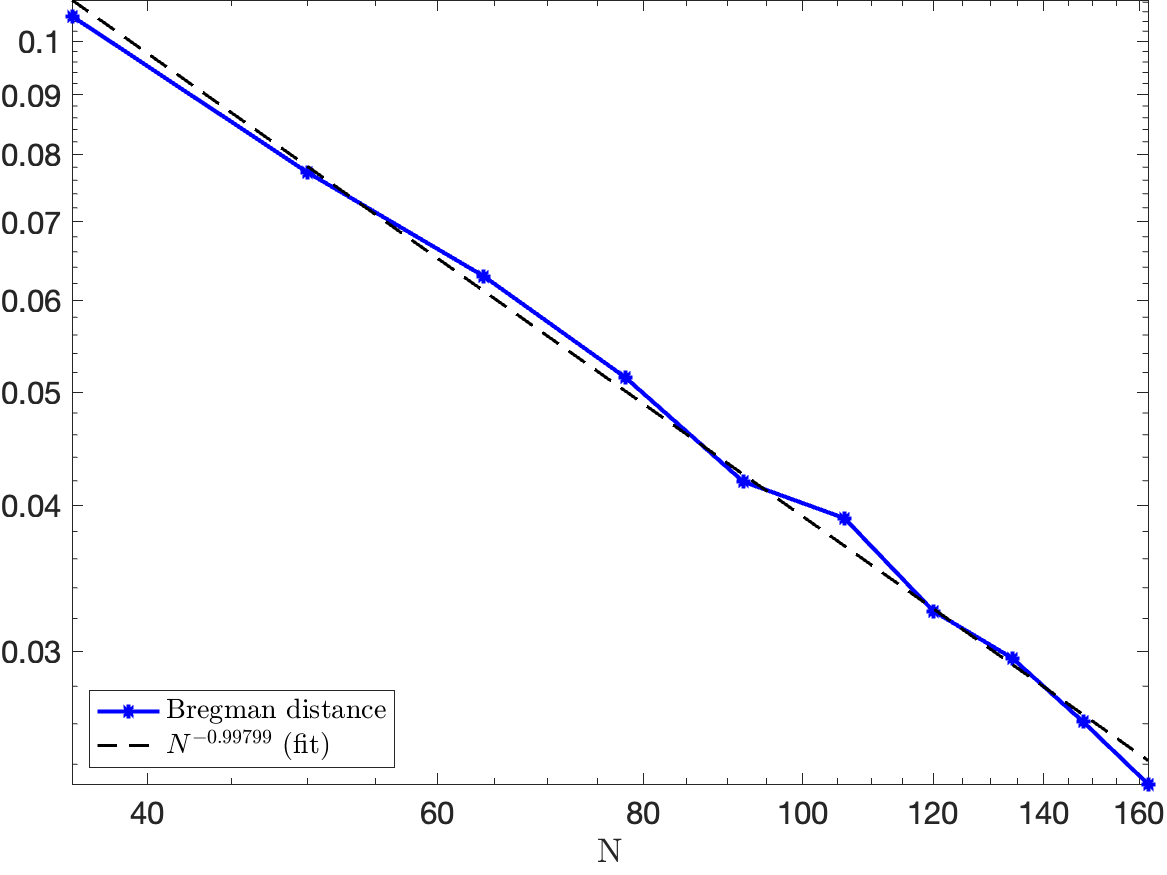} 
	& \includegraphics[width=0.32\textwidth]{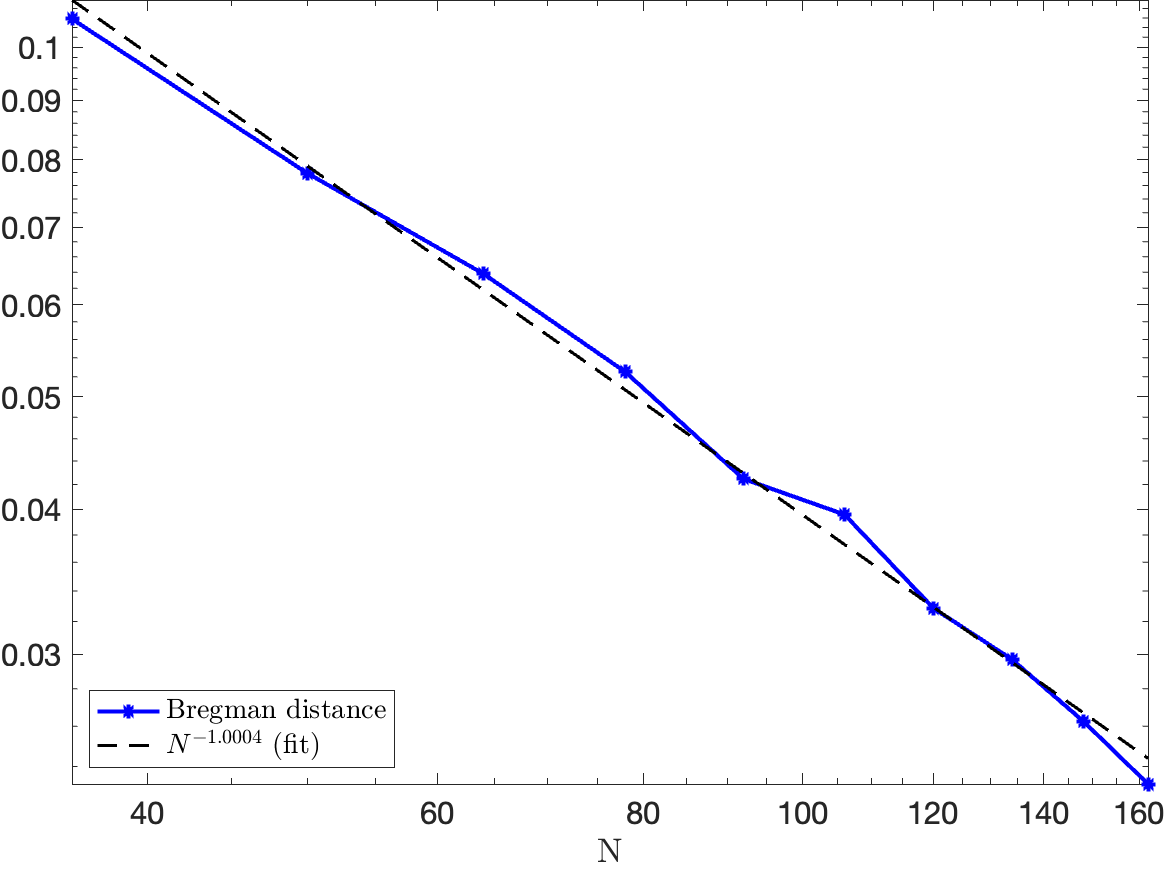} 
	& \includegraphics[width=0.32\textwidth]{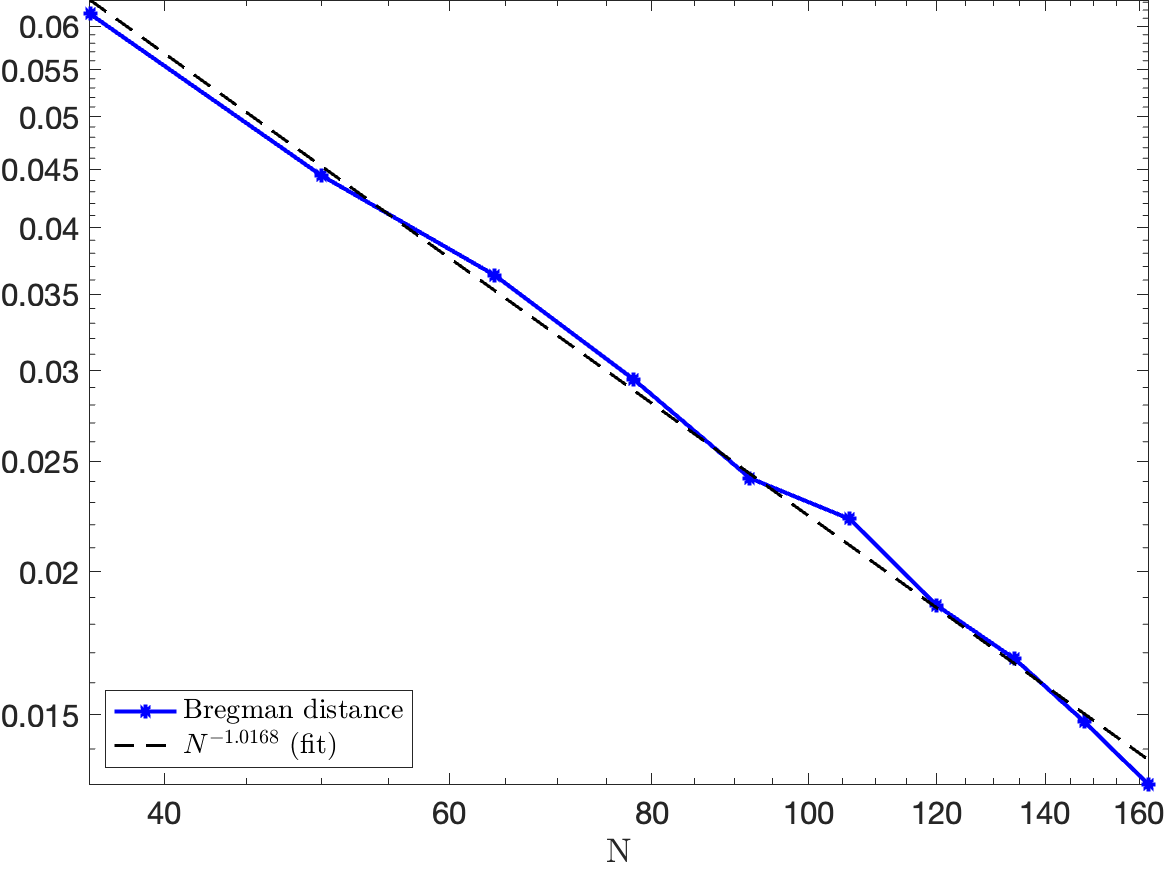} \\
(a) & (b) & (c)
\end{tabular}
\caption{Estimates in the decreasing noise case. (a) $p=3/2$ (b) $p=4/3$ (c) $p=2$}
\label{fig:PlantEstimatesNoiseDecreasing}
\end{figure}

In order to provide a quantitative assessment, we compare the theoretically predicted decay with the experimental one, which is obtained by computing the best monomial approximation $a_{mono}(N) = c N^{\beta}$ of the reported curves. In each plot, the value of the expected Bregman distance is indicated by a blue solid line and its monomial approximation by a black dashed line. We observe a good match with the theoretical previsions, as reviewed in Table \ref{tab:decays}. The results reported in this section allow to conclude that, in the example of the discrete Radon transform, the decay of the expected Bregman distance reported in corollary \ref{cor:BesovEstimates} and theorem \ref{cor:fixed_noise} is verified. We do not attempt to provide an expression for the constants appearing in such inequalities. Moreover, we do not aim at comparing the effectiveness of the three different regularization strategies. For example, it is not worth to compare the values in figure \ref{fig:PlantEstimatesNoiseFixed} (a),(b) and (c) among each other, because the object of the plot, $\E D_{\vec{R}}(\f^\delta_{\alpha,N}, \f^\dag)$, is different in each of them: the Bregman distance clearly depends on $\vec{R}$, but also $\f^\dag$ subtly changes with $\vec{R}$, according to the proposed strategy to impose the source condition to the phantom.
\begin{table}[t]
\centering
\begin{tabular}{c|cccc}
scenario & theoretical & $p = 3/2$ & $p=4/3$ & $p=2$ \\
\hline
reducing noise & $-1$ & $-0.99799$ & $-1.0004$ & $-1.0168$ \\
fixed noise & $-1/3$ & $-0.33631$ & $-0.3153$ & $-0.3289$
\end{tabular}
\caption{Approximate decay $\beta$ of the expected Bregman distance.}
\label{tab:decays}
\end{table}

\section{Conclusions}

In this paper we developed a novel convergence study for a linear forward problem within the statistical inverse learning framework.  We assume a regularization scheme with a general convex $p$-homogeneous penalty functional for $p>1$ and derive concentration rates of the regularized solution to the ground truth measured in the symmetric Bregman distance induced by the penalty functional.  We provide concrete rates for Besov-norm based penalties and observe these rates numerically, for $1 < p \leq 2$, in the case of X-ray tomography with randomly sampled imaging angles.

In the usual framework of statistical inverse learning, the noise level $\delta>0$ is fixed.  Here, we developed estimates also for the asymptotic regime,  where the noise is small with respect to the number of design points, i.e., $\delta \simeq N^{-\rho}$ for some $\rho>1$.  More work is needed to clarify conditions, where such small noise estimates become preferable to the standard framework.  The identity $Q=q/2$ in theorem \ref{cor:vanishing_noise} as observed with the Besov penalties in section \ref{sec:BesovConc} seems natural to the Monte Carlo type approximation error in learning theory.  However,  it is intriguing to understand if and when faster rates with $Q>q/2$ are possible in the small noise regime.

Finally,  the results presented here produce two immediate questions for future studies: first,  it would be valuable to understand whether optimal convergence rates can be achieved with the developed framework.  Second,  arguably the most interesting $p$-homogenous case $p=1$ is not considered here.  Enabling convergence studies for penalties such as Total Variation functional is part of future study.

\section*{Acknowledgments}
TAB was supported by the Academy of Finland through the postdoctoral grant decision number 330522 and is currently supported by the Royal Society through the Newton International Fellowship grant n. NIF\textbackslash R1\textbackslash 201695. TAB and LR acknowledge support by the Academy of Finland through the Finnish Centre of Excellence in Inverse Modelling and Imaging 2018-2025, decision number 312339. 
The work of MB has been supported by ERC via Grant EU FP7 ERC Consolidator Grant 615216 LifeInverse,  by the German Ministry of Science and Technology (BMBF) under grant 05M2020 - Deleto, and by the EU under grant 2020 NoMADS - DLV-777826.
TH was supported by the Academy of Finland through decision number 326961.
LR was supported by the Air Force Office of Scientific Research under award number FA8655-20-1-7027.

\appendix

\section{Technical lemmas}

Let us record here technical lemmas used in section \ref{subsec:Tikhonov}.
The following concentration result was first shown in \cite[Corollary 1]{pinelis1986remarks}.

\begin{prop}
\label{prop:concentration_res}
Let $(Z, {\mathcal B}, \Prob)$ be a probability space and $\xi$ a random variable on $Z$ with values in a real separable Hilbert space ${\mathcal H}$. Assume that there are two positive constants $L$ and $\sigma$ such that for any $m\geq 2$ we have
\begin{equation*}
	\E \norm{\xi - \E \xi}_{{\mathcal H}}^m \leq \frac 12 m! \sigma^2 L^{m-2}.
\end{equation*}
If the sample $z_1, ..., z_N$ drawn i.i.d. from $Z$ according to $\Prob$, then, for any $0<\eta<1$ we have
\begin{equation*}
	\norm{\frac 1N \sum_{j=1}^N \xi(z_j) - \E \xi}_{{\mathcal H}} \leq 2 \log(2\eta^{-1})\left(\frac LN + \frac\sigma{\sqrt N}\right)
\end{equation*}
with probability greater than $1-\eta$.
\end{prop}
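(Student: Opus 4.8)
The plan is to reduce the statement to a single Bernstein-type exponential deviation bound for sums of independent Hilbert-space-valued random variables and then to invert that bound. Put $\zeta_j := \xi(z_j) - \E\xi$ for $j=1,\dots,N$. Since $z_1,\dots,z_N$ are i.i.d.\ with law $\Prob$, the $\zeta_j$ are i.i.d., mean-zero, ${\mathcal H}$-valued, and the hypothesis is precisely the moment bound $\E\norm{\zeta_j}_{\mathcal H}^m \le \tfrac12 m!\,\sigma^2 L^{m-2}$ for every $m\ge 2$; taking $m=2$ gives $\E\norm{\zeta_j}_{\mathcal H}^2\le\sigma^2$, hence $\sum_{j=1}^N\E\norm{\zeta_j}_{\mathcal H}^2\le N\sigma^2$. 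Writing $S_N=\sum_{j=1}^N\zeta_j$, the object to control is $\norm{S_N}_{\mathcal H}/N$.

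The key ingredient is the Pinelis--Sakhanenko inequality underlying \cite[Corollary 1]{pinelis1986remarks}: under these conditions, for every $r>0$,
\begin{equation*}
	\Prob\!\left(\norm{S_N}_{\mathcal H}\ge r\right)\ \le\ 2\exp\!\left(-\frac{r^2}{2\left(N\sigma^2+Lr\right)}\right).
\end{equation*}
This is where the genuine work lies: it rests on the $2$-smoothness of a Hilbert space, which allows one to build an exponential supermartingale from $\norm{S_n}_{\mathcal H}$ and optimize over a free parameter. Since the result is quoted verbatim from the cited reference, I would invoke it rather than reproduce the martingale construction; the remaining steps are elementary bookkeeping.

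To conclude, choose $r=r(\eta)$ so that the right-hand side above equals $\eta$, i.e.\ solve $r^2=2\log(2\eta^{-1})\bigl(N\sigma^2+Lr\bigr)$. This quadratic has positive root
\begin{equation*}
	r(\eta)=\log(2\eta^{-1})L+\sqrt{\log^2(2\eta^{-1})L^2+2\log(2\eta^{-1})N\sigma^2}\ \le\ 2\log(2\eta^{-1})L+\sqrt{2\log(2\eta^{-1})}\,\sqrt N\,\sigma,
\end{equation*}
where the inequality uses $\sqrt{a+b}\le\sqrt a+\sqrt b$. On the event $\{\norm{S_N}_{\mathcal H}<r(\eta)\}$, which has probability at least $1-\eta$, dividing by $N$ gives $\norm{S_N}_{\mathcal H}/N\le 2\log(2\eta^{-1})\tfrac{L}{N}+\sqrt{2\log(2\eta^{-1})}\,\tfrac{\sigma}{\sqrt N}$. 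Finally, $\eta<1$ forces $2\eta^{-1}>2$, so $\log(2\eta^{-1})>\log 2>\tfrac12$ and hence $2\log(2\eta^{-1})>1$, which yields $\sqrt{2\log(2\eta^{-1})}\le 2\log(2\eta^{-1})$; combining gives the claimed bound $\norm{\frac1N\sum_{j=1}^N\xi(z_j)-\E\xi}_{\mathcal H}\le 2\log(2\eta^{-1})\bigl(\tfrac LN+\tfrac{\sigma}{\sqrt N}\bigr)$.

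\textbf{Main obstacle.} Everything past the exponential inequality is routine algebra; the only substantial point is the inequality itself. A fully self-contained treatment would require reproducing Pinelis's exponential-supermartingale argument and the ensuing optimization, which is considerably more delicate than the inversion above, so I would keep the proof at the ``invoke and invert'' level.
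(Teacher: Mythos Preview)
Your derivation is correct, and in fact you provide more detail than the paper does: the paper gives no proof at all for this proposition, simply stating it with the attribution ``The following concentration result was first shown in \cite[Corollary 1]{pinelis1986remarks}.'' Your ``invoke and invert'' approach---citing the Pinelis--Sakhanenko exponential inequality and then solving the quadratic to extract the explicit deviation bound---is exactly the standard way this corollary is obtained from the underlying Bernstein-type tail bound, and the bookkeeping (including the step $\sqrt{2\log(2\eta^{-1})}\le 2\log(2\eta^{-1})$ via $\log(2\eta^{-1})>\log 2>\tfrac12$) is clean.
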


\begin{prop}[Cordes inequality \cite{furuta1989norm, fujii1990norm}]
\label{prop:cordes}
Let $T_1, T_2$ be two self-adjoint, positive operators on a Hilbert space. Then for any $s\in [0,1]$ we have
\begin{equation*}
	\norm{T_1^s T_2^s} \leq \norm{T_1 T_2}^s.
\end{equation*}
\end{prop}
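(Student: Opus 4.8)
The plan is to derive the inequality from the L\"{o}wner--Heinz operator monotonicity of the power functions $t\mapsto t^s$, $s\in[0,1]$, after a routine regularization. First I would reduce to the case where $T_1$ and $T_2$ are strictly positive, hence invertible: replacing $T_i$ by $T_i+\varepsilon I$ and letting $\varepsilon\to 0^+$ suffices, since $T\mapsto T^s$ is norm-continuous on norm-bounded sets of positive operators (the scalar function $t\mapsto t^s$ is continuous, hence uniformly approximable by polynomials on any compact interval, and polynomial functional calculus is norm-continuous), so both $\norm{(T_1+\varepsilon I)^s(T_2+\varepsilon I)^s}$ and $\norm{(T_1+\varepsilon I)(T_2+\varepsilon I)}^s$ converge to the quantities attached to $T_1,T_2$.

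For strictly positive $T_1,T_2$ I would set $c=\norm{T_1T_2}>0$. Using the identity $\norm{T^\ast T}=\norm{T}^2$ twice, note first that $\norm{T_1T_2}^2=\norm{T_2T_1^2T_2}$, so $T_2T_1^2T_2\le c^2 I$; conjugating by the invertible positive operator $T_2^{-1}$ gives $T_1^2\le c^2 T_2^{-2}$. Applying the L\"{o}wner--Heinz inequality with the operator monotone function $t\mapsto t^s$ yields $T_1^{2s}\le c^{2s}T_2^{-2s}$, and conjugating back by $T_2^s$ gives $T_2^sT_1^{2s}T_2^s\le c^{2s}I$. Hence $\norm{T_1^sT_2^s}^2=\norm{T_2^sT_1^{2s}T_2^s}\le c^{2s}$, i.e. $\norm{T_1^sT_2^s}\le c^s=\norm{T_1T_2}^s$.

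The only genuinely substantial ingredient is the L\"{o}wner--Heinz operator monotonicity of $t\mapsto t^s$ on $[0,\infty)$ for $s\in[0,1]$, which I would simply cite; the adjoint identities, the conjugations, and the $\varepsilon$-regularization are routine, so I do not anticipate a real obstacle. As an alternative one could use complex interpolation: for invertible $T_i$ the map $F(z)=T_1^zT_2^z$ is analytic and bounded on the strip $0\le\operatorname{Re}z\le1$, with $\norm{F(it)}\le 1$ because $T_i^{it}$ is unitary and $\norm{F(1+it)}\le\norm{T_1T_2}$, so the vector-valued Hadamard three-lines theorem gives $\norm{F(s)}\le\norm{T_1T_2}^s$; I expect the L\"{o}wner--Heinz argument above to be marginally shorter to write out in full.
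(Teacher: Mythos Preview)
Your argument is correct; the reduction to invertible operators, the conjugation steps, and the appeal to L\"{o}wner--Heinz operator monotonicity of $t\mapsto t^s$ are all sound, and the alternative three-lines argument you sketch would work as well.

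Note, however, that the paper does not give its own proof of this proposition: it is recorded in the appendix as a cited technical lemma, with references to \cite{furuta1989norm, fujii1990norm}, and no argument is supplied. Your L\"{o}wner--Heinz route is in fact essentially the proof one finds in those references (Furuta's approach), so you have reconstructed the standard argument rather than diverged from it. There is nothing to compare beyond that.
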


The following two results are the basis for estimating expectation of the quadratic loss in section \ref{subsec:Tikhonov}.

\begin{lemma}
\label{lem:app_expec}
Let $X$ be a nonnegative random variable with $\Prob\left(X > Z \log^\gamma \left(\frac k\eta\right)\right) < \eta$ for any $\eta \in (0,1]$.
It follows that
\begin{equation*}
	\E X \leq Z k \gamma \Gamma(\gamma).
\end{equation*}
\end{lemma}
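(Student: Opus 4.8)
The plan is to combine the layer-cake (Cavalieri) representation $\E X = \int_0^\infty \Prob(X > t)\,dt$ with a pointwise tail bound obtained by optimizing the free parameter $\eta$ in the hypothesis. First I would observe that, for every $t \geq 0$,
\[
	\Prob(X > t) \leq k \exp\left(-(t/Z)^{1/\gamma}\right).
\]
Indeed, if the right-hand side is $\geq 1$ this is trivial since $X \geq 0$; otherwise we may set $\eta := k \exp\left(-(t/Z)^{1/\gamma}\right) \in (0,1]$, and then $Z \log^\gamma(k/\eta) = Z\big((t/Z)^{1/\gamma}\big)^\gamma = t$, so the assumed tail estimate gives $\Prob(X>t) < \eta$. (Here one implicitly uses $k \geq 1$, as is needed already for $\log(k/\eta)$ to be nonnegative for all admissible $\eta$; in the applications $k = 2$.)

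Next I would integrate this bound. From the layer-cake formula, $\E X \leq \int_0^\infty k\exp\left(-(t/Z)^{1/\gamma}\right)\,dt$, and the substitution $s = (t/Z)^{1/\gamma}$, i.e.\ $t = Z s^\gamma$ and $dt = Z\gamma s^{\gamma-1}\,ds$, transforms this into $kZ\gamma \int_0^\infty s^{\gamma-1} e^{-s}\,ds = kZ\gamma\,\Gamma(\gamma)$, which is exactly the claim. I should also remark that $\Prob(X > t)$ and $\Prob(X \geq t)$ coincide for all but countably many $t$, so the precise form of the layer-cake identity used is immaterial.

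There is no genuine obstacle here; the only point that deserves a moment of care is the remark that the exponential tail bound is valid for \emph{all} $t \geq 0$ — trivially so on the range where its right-hand side is $\geq 1$ — since this is precisely what allows the integration to start at $0$ rather than at the threshold $Z(\log k)^\gamma$, and hence to produce the clean constant $\Gamma(\gamma)$ rather than an incomplete Gamma integral plus an extra $Z(\log k)^\gamma$ term.
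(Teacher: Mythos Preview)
Your proof is correct and follows exactly the approach indicated in the paper's proof sketch: the layer-cake identity $\E X = \int_0^\infty \Prob(X>t)\,dt$ combined with the change of variables $s = (t/Z)^{1/\gamma}$ in the tail bound. Your write-up simply fills in the details the paper omits, including the observation that the exponential tail bound holds trivially for small $t$.
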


\begin{proof}
The result follows from identity $\E X = \int_0^\infty \Prob(X>t) dt$ and changing variables in the probabilistic bound.
\end{proof}

\begin{prop}\cite[Prop. 1]{guo2017learning}
\label{prop:app_prob_op_bound}
For any $\beta>0$ and $\eta \in (0,1]$ we have
\begin{equation*}
	\norm{(\Bbu+\beta)^{-1}(B_\mu + \beta)} \leq C {\mathcal B}_N(\beta) \log^2 \left(\frac 2\eta\right)
\end{equation*}
for some constant $C>0$ with probability at least $1-\eta$, where ${\mathcal B}_N$ is given by
\begin{equation*}
	{\mathcal B}_N(\beta) = 1 + \left( \frac{2}{N\beta} + \sqrt{\frac{{\mathcal N}(\beta)}{N\beta}}\right)^2.
\end{equation*}
\end{prop}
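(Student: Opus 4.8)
\emph{Proof idea.} This is essentially the operator-perturbation estimate \cite[Prop.~1]{guo2017learning}, and the plan is to reproduce its proof. The pivotal quantity is the \emph{relative fluctuation}
\[
  \Delta \;:=\; \bigl\|(B_\mu+\beta)^{-1/2}\,(\Bbu-B_\mu)\,(B_\mu+\beta)^{-1/2}\bigr\|.
\]
If $\Delta<1$, then $(B_\mu+\beta)^{-1/2}(\Bbu+\beta)(B_\mu+\beta)^{-1/2}=I+T$ with $\|T\|=\Delta$, so $\Bbu+\beta\succeq(1-\Delta)(B_\mu+\beta)$; taking inverses (an operator-antitone operation) and conjugating by $(B_\mu+\beta)^{1/2}$ yields
\[
  \bigl\|(B_\mu+\beta)^{1/2}(\Bbu+\beta)^{-1}(B_\mu+\beta)^{1/2}\bigr\|
  \;=\;\bigl\|(B_\mu+\beta)^{1/2}(\Bbu+\beta)^{-1/2}\bigr\|^{2}
  \;\le\;\frac{1}{1-\Delta},
\]
and, through Proposition \ref{prop:cordes}, also $\|(\Bbu+\beta)^{-s}B_\mu^{s}\|\le(1-\Delta)^{-s}$ for $s\in[0,\tfrac12]$ --- which is the form actually invoked in Proposition \ref{prop:hilbert_term1}. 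So it suffices to control $\Delta$.

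To do so I would write $\Bbu=\tfrac1N\sum_{i=1}^N A_{u_i}^{*}A_{u_i}$, so that $(B_\mu+\beta)^{-1/2}(\Bbu-B_\mu)(B_\mu+\beta)^{-1/2}=\tfrac1N\sum_{i=1}^N(\xi_i-\E\xi_i)$ with $\xi_i:=(B_\mu+\beta)^{-1/2}A_{u_i}^{*}A_{u_i}(B_\mu+\beta)^{-1/2}$ i.i.d., self-adjoint and positive. Assumption \ref{ass:sampling}(a) gives $\|\xi_i\|\le\kappa^{2}/\beta\le1/\beta$, and since $\xi_i^{2}\preceq\|\xi_i\|\,\xi_i\preceq\beta^{-1}\xi_i$ the second moment obeys $\E\xi_i^{2}\preceq\beta^{-1}\E\xi_i=\beta^{-1}(B_\mu+\beta)^{-1/2}B_\mu(B_\mu+\beta)^{-1/2}$, an operator of norm $\le\beta^{-1}$ and trace ${\mathcal N}(\beta)/\beta$. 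Feeding these into a noncommutative Bernstein inequality for sums of i.i.d.\ bounded self-adjoint operators on the separable Hilbert space $X$ (in the Hilbert--Schmidt realisation this is an application of Proposition \ref{prop:concentration_res} with $L\asymp\beta^{-1}$ and variance parameter $\sigma^{2}\asymp{\mathcal N}(\beta)/\beta$, up to an inessential factor governed by the rank of the $\xi_i$) produces, with probability at least $1-\eta$,
\[
  \Delta\;\le\;2\log\!\Bigl(\tfrac{2}{\eta}\Bigr)\left(\frac{c}{N\beta}+\sqrt{\frac{{\mathcal N}(\beta)}{N\beta}}\right)
  \;=\;c'\log\!\Bigl(\tfrac{2}{\eta}\Bigr)\sqrt{{\mathcal B}_N(\beta)-1}.
\]
On the event where this bound is $\le\tfrac12$ one gets $(1-\Delta)^{-1}\le1+2\Delta\lesssim\log(2/\eta)\sqrt{{\mathcal B}_N(\beta)}\lesssim{\mathcal B}_N(\beta)\log^{2}(2/\eta)$, using ${\mathcal B}_N(\beta)\ge1$ and $\log(2/\eta)\ge\log2$; in the complementary regime the estimate above is vacuous, but then ${\mathcal B}_N(\beta)\gtrsim(N\beta)^{-2}$ is large and the deterministic bound $\|(\Bbu+\beta)^{-1}(B_\mu+\beta)\|\le1+\|B_\mu\|/\beta$ is absorbed into $C\,{\mathcal B}_N(\beta)\log^{2}(2/\eta)$ after a short case distinction that fixes the universal constant $C$.

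The main obstacle is the concentration step: the variance proxy must be the \emph{effective dimension} ${\mathcal N}(\beta)$ rather than the ambient dimension of $X$, which is exactly what the structural inequality $\E\xi_i^{2}\preceq\beta^{-1}\E\xi_i$ together with $\tr\E\xi_i={\mathcal N}(\beta)$ buys; a crude bound would be dimension-dependent and hence useless here. A secondary subtlety is that $(\Bbu+\beta)^{-1}(B_\mu+\beta)$ is not self-adjoint, so the cleanest passage from the Loewner inequality to a norm bound goes through the symmetrised resolvent ratio $\|(B_\mu+\beta)^{1/2}(\Bbu+\beta)^{-1/2}\|$ as above; since every downstream use factors through $\|(\Bbu+\beta)^{-s}B_\mu^{s}\|$ with $s\le\tfrac12$ via Proposition \ref{prop:cordes}, this symmetrised control is all that is genuinely required.
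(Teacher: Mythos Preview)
Your argument is sound in outline but follows a genuinely different route from the paper, and it does not quite prove the proposition exactly as stated.

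The paper proceeds via the algebraic identity
\[
BA^{-1}=(B-A)B^{-1}(B-A)A^{-1}+(B-A)B^{-1}+I,
\]
with $B=B_\mu+\beta$ and $A=\Bbu+\beta$, bounds each summand using $\|A^{-1}\|\le\beta^{-1}$, $\|B^{-1/2}\|\le\beta^{-1/2}$ and the \emph{one-sided} fluctuation $\|(B_\mu+\beta)^{-1/2}(B_\mu-\Bbu)\|$, and then quotes the ready-made concentration estimate \cite[Thm.~4]{caponnetto2007optimal} for that last quantity. This delivers the \emph{asymmetric} norm $\|(\Bbu+\beta)^{-1}(B_\mu+\beta)\|$ directly, with no case split on the size of the perturbation.

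Your Loewner/Neumann route instead controls the \emph{symmetrised} ratio $\|(B_\mu+\beta)^{1/2}(\Bbu+\beta)^{-1}(B_\mu+\beta)^{1/2}\|$ through the two-sided $\Delta$. As you rightly observe, this suffices for every downstream use in the paper (Proposition~\ref{prop:hilbert_term1} only needs $\|(\Bbu+\beta)^{-s}B_\mu^s\|$ with $s\le\tfrac12$, and your bound together with Proposition~\ref{prop:cordes} supplies exactly that), but it does not recover the asymmetric norm in the statement: the passage from $(I+T)^{-1}$ to $(B_\mu+\beta)^{-1/2}(I+T)^{-1}(B_\mu+\beta)^{1/2}$ is a similarity, which preserves spectrum but not norm. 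Your approach also carries the overhead of a case distinction on $\{\Delta\le\tfrac12\}$ (and the deterministic fallback $1+\|B_\mu\|/\beta$ is not automatically absorbed by $C\,{\mathcal B}_N(\beta)\log^2(2/\eta)$ without further argument), and the concentration step via Proposition~\ref{prop:concentration_res} in the Hilbert--Schmidt embedding needs the $\xi_i$ to be trace-class with a uniform trace bound---clear when $\dim V<\infty$, less immediate otherwise. The paper's decomposition sidesteps all three issues at once by reducing to a one-sided quantity whose concentration is already available in the literature.
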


\begin{proof}
Let us first note that $\norm{AB} = \norm{B A}$ for self-adjoint operators $A$ and $B$ in Hilbert spaces.
Below, we use the decomposition 
\begin{equation*}
	B A^{-1} = (B-A)B^{-1}(B-A)A^{-1} + (B-A)B^{-1} + I
\end{equation*}
for the product. Applying bounds $\norm{(\Bbu+\beta)^{-1}} \leq \frac 1\beta$ and $\norm{(B_\mu+\beta)^{-1/2}} \leq \frac 1{\sqrt{\beta}}$, we obtain 
\begin{equation*}
	\norm{(B_\mu + \beta)(\Bbu+\beta)^{-1}} \leq \norm{(B_\mu + \beta)^{\frac 12}(B_\mu-\Bbu)}^2 \frac 1\beta 
	+ \norm{(B_\mu + \beta)^{\frac 12}(B_\mu-\Bbu)} \frac{1}{\sqrt{\beta}} + 1
\end{equation*}
Now applying the well-established probabilistic estimate \cite[Thm. 4]{caponnetto2007optimal} for $\norm{(B_\mu + \beta)^{\frac 12}(B_\mu-\Bbu)}$ we have for any $\eta \in (0,1]$ that
\begin{equation*}
	\norm{(B_\mu + \beta)(\Bbu+\beta)^{-1}} \leq  \left( \frac{2}{N\beta} + \sqrt{\frac{{\mathcal N}(\beta)}{N\beta}}\right)^2 \log^2\left(\frac 2 \eta\right) 
	+ \left( \frac{2}{N\beta} + \sqrt{\frac{{\mathcal N}(\beta)}{N\beta}}\right) \log\left(\frac 2 \eta\right) +1
\end{equation*}
with probability at least $1-\eta$. The claim follows by simple bounds on the right hand side.

\end{proof}

\bibliographystyle{amsplain}
\bibliography{references}

\end{document}